\definecolor{darkblue}{rgb}{0,0.22,0.66}
\definecolor{darkcyan}{RGB}{0,139,139}
\definecolor{darkgray}{HTML}{666666}
\theoremstyle{plain}
\newtheorem{theorem}{Theorem}[section]%
\newtheorem{corollary}[theorem]{Corollary} %
\newtheorem{lemma}[theorem]{Lemma}
\newtheorem*{lemma*}{Lemma}
\newtheorem{proposition}[theorem]{Proposition}
\theoremstyle{definition}
\theoremstyle{remark}
\newtheorem{remark}[theorem]{Remark}
\newtheorem*{remark*}{Remark}
\newcommand{\myMFabc}[4]{\expandafter#1\csname#3#4\endcsname{{#2{#4}}}}
\newcommand{\myMFcmd}[4]{\expandafter#1\csname#3#4\endcsname{{#2{\csname#4\endcsname}}}}
\newcommand{\MFabc}[3][\newcommand]{
    \def\doOld##1##2{\forcsvlist{\myMFabc{#1}{##1}{##2}}{#3}}
    \providecommand{\do}{do}
    \RenewDocumentCommand \do { >{\SplitList{,}} m } { \doOld##1 }
    \docsvlist{#2}
}
\newcommand{\MFcmd}[3][\newcommand]{
    \def\doOld##1##2{\forcsvlist{\myMFcmd{#1}{##1}{##2}}{#3}}
    \providecommand{\do}{do}
    \RenewDocumentCommand \do { >{\SplitList{,}} m } { \doOld##1 }
    \docsvlist{#2}
}
\newcommand{\bmzero}{{\bm{0}}}
\let\one\bbone
\newcommand{\hatbm}[1]{\widehat{\bm{#1}}}
\newcommand{\tildebm}[1]{\widetilde{\bm{#1}}}
\newcommand{\bmcal}[1]{\bm{\mathcal{#1}}}
\newcommand{\caltilde}[1]{\mathcal{\widetilde{#1}}}
\newcommand{\calhat}[1]{\mathcal{\widehat{#1}}} 
\newcommand{\scrtilde}[1]{\widetilde{\mathscr{#1}\mspace{1mu}\mspace{-1mu}}}
\newcommand{\scrhat}[1]{\mathscr{\widehat{#1}\mspace{1mu}\mspace{-1mu}}}
\newcommand{\bmcalhat}[1]{\bm{\mathcal{\widehat{#1}}}}
\newcommand{\bmcaltilde}[1]{\bm{\mathcal{\widetilde{#1}}}}
\let\eps\varepsilon
\newcommand{\actdef}[1]{\expandafter\def\csname#1\endcsname{{\ensuremath{\mathtt{#1}}}}}
\forcsvlist{\actdef}{ReLU, LReLU, LeakyReLU, ELU, GELU, SiLU, Softplus, dGELU, dSiLU, dSoftplus, Tanh, Sigmoid, Arctan, Softsign, SRS, dSRS, Swish, dSwish, Mish, dMish, SELU, CELU, dSELU, Sin,SinLU, SinTU, PSinTU, sine, cosine, Sine, Cosine, EUAF}
\newlength{\myLength}
\newcommand{\mn}[7][]{\ensuremath{	{\hspace{0.6pt}\mathcal{M}\hspace{-0.0pt}\mathcal{N}\hspace{-0.8pt}}_{#2}
		#1\{#3,\hspace{2.2pt} #4,\hspace{2.2pt} #5;\hspace{3.0297pt} {#6}\hspace{-1.0298pt}\to\hspace{-0.98pt}{#7}#1\}
}}
\newcommand{\nn}[6][]{\ensuremath{	{\hspace{0.6pt}\mathcal{N}\hspace{-1.28pt}\mathcal{N}\hspace{-0.725pt}}_{#2}
		#1\{#3,\hspace{1.987pt} #4;\hspace{3.0297pt} {#5}\hspace{-1.0298pt}\to\hspace{-0.98pt}{#6}#1\}
}}
\renewcommand{\nn}[6][]{\ensuremath{	{\hspace{0.6pt}\mathcal{F}\hspace{-0.828pt}\mathcal{N}\hspace{-0.725pt}}_{#2}
		#1\{#3,\hspace{1.987pt} #4;\hspace{3.0297pt} {#5}\hspace{-1.0298pt}\to\hspace{-0.98pt}{#6}#1\}
}}
\let\ts\intercal
\def\ts{\tn{$\textsf{T}$}}
\def\ts{\textsf{\textup{T}}}
\def\cpwl{\textsf{\textup{CPwL}}}
\let\cpl\cpwl
\newcommand{\dprime}{{\prime\prime}}
\newcommand{\mycase}[2]{\par \vspace{0.25cm}\noindent\textbf{\hspace{8pt}Case }$#1\colon$ #2\par \vspace{0.18cm} \par}
\newenvironment{keywords}{\par \noindent\textbf{Keywords}:}{\par}
\definecolor{mylinenumbercolor}{HTML}{BEBEBE}
\newcommand*\patchAmsMathEnvironmentForLineno[1]{%
	\expandafter\let\csname old#1\expandafter\endcsname\csname #1\endcsname
	\expandafter\let\csname oldend#1\expandafter\endcsname\csname end#1\endcsname
	\renewenvironment{#1}%
	{\linenomath\csname old#1\endcsname}%
	{\csname oldend#1\endcsname\endlinenomath}}%
\newcommand*\patchBothAmsMathEnvironmentsForLineno[1]{%
	\patchAmsMathEnvironmentForLineno{#1}%
	\patchAmsMathEnvironmentForLineno{#1*}}%
\let\tilde\widetilde
\let\epsilon\varepsilon
\let\eps\varepsilon
\let\tn\textnormal
\let\cdots\customcdots
\let\myforall\forall
\def\forall{{\myforall\, }}
\let\myexists\exists
\def\exists{{\myexists\, }}
\long\def\blue#1{{\color{blue}#1}}
\newenvironment{colorenv}[1][blue]
{%
  \begingroup
  \color{#1}%
  \ignorespaces
}
{%
  \endgroup
  \ignorespacesafterend
}
\definecolor{mygray}{RGB}{230,230,230}
\definecolor{myorange}{HTML}{ff7f0e}
\let\cite\citep
\def\doitext{DOI: }
\long\def\@makefntext#1{\@setpar{\@@par\@tempdima \hsize 
		\advance\@tempdima-15pt\parshape \@ne 15pt \@tempdima}\par
	\parindent 2em\noindent \hbox to \z@{\hss{\textsuperscript{\@thefnmark}} \hfil}#1}
\newlength\aftertitskip     \newlength\beforetitskip
\newlength\interauthorskip  \newlength\aftermaketitskip
\def\maketitle{\par
	\begingroup
	\def\thefootnote{\color{black}\fnsymbol{footnote}}
	\def\@makefnmark{\hbox to 0pt{$^{\@thefnmark}$\hss}}
	\@maketitle \@thanks
	\endgroup
	\setcounter{footnote}{0}
	\let\maketitle\relax \let\@maketitle\relax
	\gdef\@thanks{}\gdef\@author{}\gdef\@title{}\let\thanks\relax}
\def\@startauthor{\noindent \normalsize\bf}
\def\@endauthor{}
\def\@starteditor{\noindent \small {\bf Editor:~}}
\def\@endeditor{\normalsize}
\def\@maketitle{\vbox{\hsize\textwidth
		\linewidth\hsize \vskip \beforetitskip
		{\begin{center} \Large\bf \@title \par \end{center}} \vskip \aftertitskip
		{\def\and{\unskip\enspace{\rm and}\enspace}%
			\def\addr{\small\it}%
            \def\email{\hfill\small\ttfamily}%
			\def\name{\normalsize\bf}%
			\def\AND{\@endauthor\rm\hss \vskip \interauthorskip \@startauthor}
			\@startauthor \@author \@endauthor}
		\vskip \aftermaketitskip
}}
\colorlet{blue}{black}
\let\ldots\cdots
\let\dots\cdots
\title{Fourier Multi-Component and Multi-Layer Neural Networks: Unlocking High-Frequency Potential}
\author{\name   Shijun Zhang\thanks{\hspace{-2pt}Corresponding author.}
\email  \href{mailto:shijun.zhang@polyu.edu.hk}{shijun.zhang@polyu.edu.hk}\\
\addr  Department of Applied Mathematics\\
Hong Kong Polytechnic University
\AND
\name  Hongkai Zhao
\email 
\href{mailto:zhao@math.duke.edu}{zhao@math.duke.edu}\\
\addr  Department of Mathematics\\
Duke University
\AND  \name  Yimin Zhong
\email \href{mailto:yimin.zhong@auburn.edu}{yimin.zhong@auburn.edu}
\\ 
\addr  Department of Mathematics and Statistics\\ Auburn University
\AND  \name  Haomin Zhou
\email  	\href{mailto:hmzhou@math.gatech.edu}{hmzhou@math.gatech.edu} 
\\
\addr School of Mathematics\\
 Georgia Institute of Technology
    }
\begin{document}
\maketitle

\begin{colorenv}
    \begin{abstract}
The architecture of a neural network and the choice of its activation function are both fundamental to its performance. Equally important is ensuring that these two elements are well matched, as their alignment is key to effective representation and learning. In this paper, we introduce the Fourier Multi-Component and Multi-Layer Neural Network (FMMNN), a model that combines \sine-type activations with the multi-component and multi-layer structure of MMNNs. In an FMMNN, each component is represented as a trainable linear combination of fixed random \sine-type basis functions, while multi-layer composition generates more complex and adaptive high-frequency features. We establish that FMMNNs retain exponential expressive power for function approximation even under a low-rank architectural structure. We also analyze the optimization landscape of FMMNNs and find it to be substantially more favorable than that of standard fully connected neural networks, especially for high-frequency targets. In addition, we propose a scaled random initialization method for the first-layer weights in FMMNNs, which accelerates training and improves final performance when sufficient samples are available. Extensive numerical experiments support our theoretical insights, showing that FMMNNs achieve strong accuracy and favorable convergence behavior on oscillatory function-approximation benchmarks.
\end{abstract}
\end{colorenv}

\begin{keywords}
low-rank structures,  
  high-frequency approximation, 
  scaled initialization,
Fourier analysis,
random-basis approximation
\end{keywords}


\section{Introduction}
\label{sec:intro}

\begin{colorenv}[blue]
The effectiveness of a neural network is jointly determined by its architecture
and activation function, especially for high-frequency approximation. Standard
activations such as \ReLU, \GELU, \texttt{sigmoid}, and \texttt{tanh} are stable
and widely used, but standard fully connected neural networks, or multi-layer
perceptrons, often exhibit a low-frequency bias during training. This has
motivated frequency-aware approaches such as periodic activations,
Fourier-feature or positional-encoding methods, space--frequency localized
representations, variable-periodic activations, and multi-scale decompositions;
see Section~\ref{sec:related:work} for further discussion.

This paper follows this frequency-aware viewpoint but focuses on the interaction
between sine-type activations and the multi-component/multi-layer structure of
MMNNs. A sine activation alone does not guarantee efficient learning in a
standard FCNN. In contrast, MMNNs~\cite{ZZZZ-24-MMNN} decompose complex
functions into trainable components built from fixed random bases and then
recombine them through layers. This structure reduces the number of trainable
parameters and often leads to a better-conditioned learning problem. In an
FMMNN, each component is a trainable linear combination of fixed random
sine-type basis functions, and can therefore be viewed as a Fourier-type
random-basis expansion. Multi-layer composition of such components can then
generate more complex and adaptive higher-frequency features. Thus, FMMNN is not merely a
\sine{}-activated network; it effectively combines activation, architecture, and
initialization in a structured way, and can accurately approximate well-sampled target
functions and their derivatives.

For nonsmooth targets, however, pure Fourier-type approximation may be less
effective and can suffer from Gibbs-type behavior. Motivated by this limitation
and by the expressive benefit of singular activations such as \ReLU, we introduce
the Sine Truncated Unit
\[
    \SinTU_s \coloneqq \sin\circ \calT_s,
    \qquad
    \calT_s(x)\coloneqq \max\{x,s\}.
\]
The parameter \(s\) controls the balance between oscillation and
singularity: as \(s\) decreases, \(\SinTU_s\) approaches the pure \sine{}
activation when $s$ is below most preactivation values; for larger \(s\), the truncation introduces a \ReLU-like singular
feature. This gives an additional degree of freedom for matching the activation
to the regularity of the target function. 
Making the truncation level \(s\) learnable is a natural direction for future work.
We further introduce a principled first-layer scaling strategy inspired by the
two-layer analysis in~\cite{ZZZZ-23}. Scaling the initial slope weights inside
the activation function introduces higher-frequency modes at initialization and
can accelerate training when sufficient data are available. For FMMNNs, applying
this initial scaling only to the first layer improves the capture of fine sample
features while avoiding the instability that may arise from scaling deeper layers;
see Section~\ref{sec:scaling:init}.
\end{colorenv}

We demonstrate that integrating the MMNN structure with \sine{} (or \SinTU{s}) creates a surprisingly effective synergy, particularly for efficiently capturing high-frequency components.
Our main contributions are summarized below. 

\begin{itemize}
\item First, we establish that using \texttt{sine} or \SinTU{s} as activation functions within the MMNNs framework offers significant mathematical potential in terms of approximation capability.
In particular, given a $1$-Lipschitz function $f:[0,1]^d\to \mathbb{R}$ and a \SinTU{} function $\varrho$, for any $p \in [1,\infty)$, there exists   $\phi$ realized by an $\varrho$-activated MMNN of width $2d(4N-1)$, rank $3d$, and depth $L+2$, such that
\[
\| \phi - f\|_{L^p([0,1]^d)} \leq  2\sqrt{d} \cdot N^{-L}.
\]
For the generalized version (applicable to generic continuous functions) and the \sine-related version, see Theorems~\ref{thm:main1} and \ref{thm:main2}.
It is worth noting that when we focus on the approximation results using \ReLU{} and sine as activation functions, our Theorem~\ref{thm:main2} provides a clear improvement over previous studies — namely, \cite{shijun:floor:relu}, which investigates \ReLU-Floor, and \cite{yarotsky:2019:06}, which considers \ReLU-sine with fixed width.
Specifically, our theorem achieves an approximation rate of $O(N^{-L})$, while both \cite{shijun:floor:relu} and \cite{yarotsky:2019:06} attain only $O(N^{-\sqrt{L}})$ (with \cite{yarotsky:2019:06} restricted to the regime of width $O(d)$). In addition, our construction uses only $O(NL)$ parameters when $d$ is treated as a fixed constant, significantly reducing redundancy compared with the $O(N^2L)$ required in these earlier works.


\item Next, we analyze the landscape of the cost function with respect to network parameters, which provides insight into the training complexity across different network architectures and activation functions. Notably, the MMNN structure results in a significantly more favorable optimization landscape compared to FCNNs, as illustrated in Figure~\ref{fig:landscape:intro:v} (see Section~\ref{sec:landscape} for further details).

\begin{figure}[ht]
            \centering
            \begin{subfigure}[b]{0.242046\textwidth}
                    \centering            
                    \includegraphics[width=0.9997\textwidth]{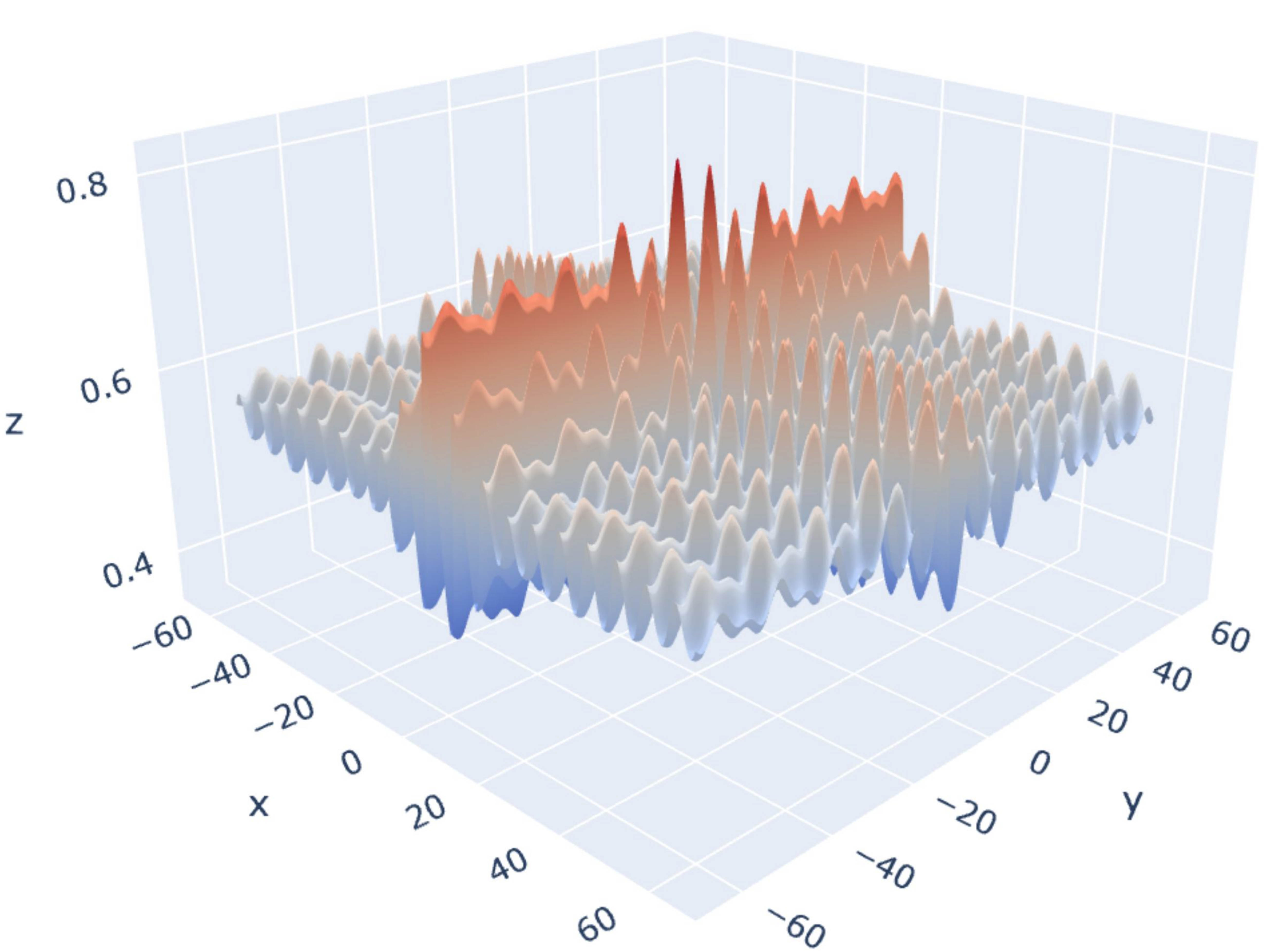}
                    \subcaption{\Sine{} FCNN.}
                \end{subfigure}
                \hfill
            \begin{subfigure}[b]{0.242046\textwidth}
                    \centering            \includegraphics[width=0.9997\textwidth]{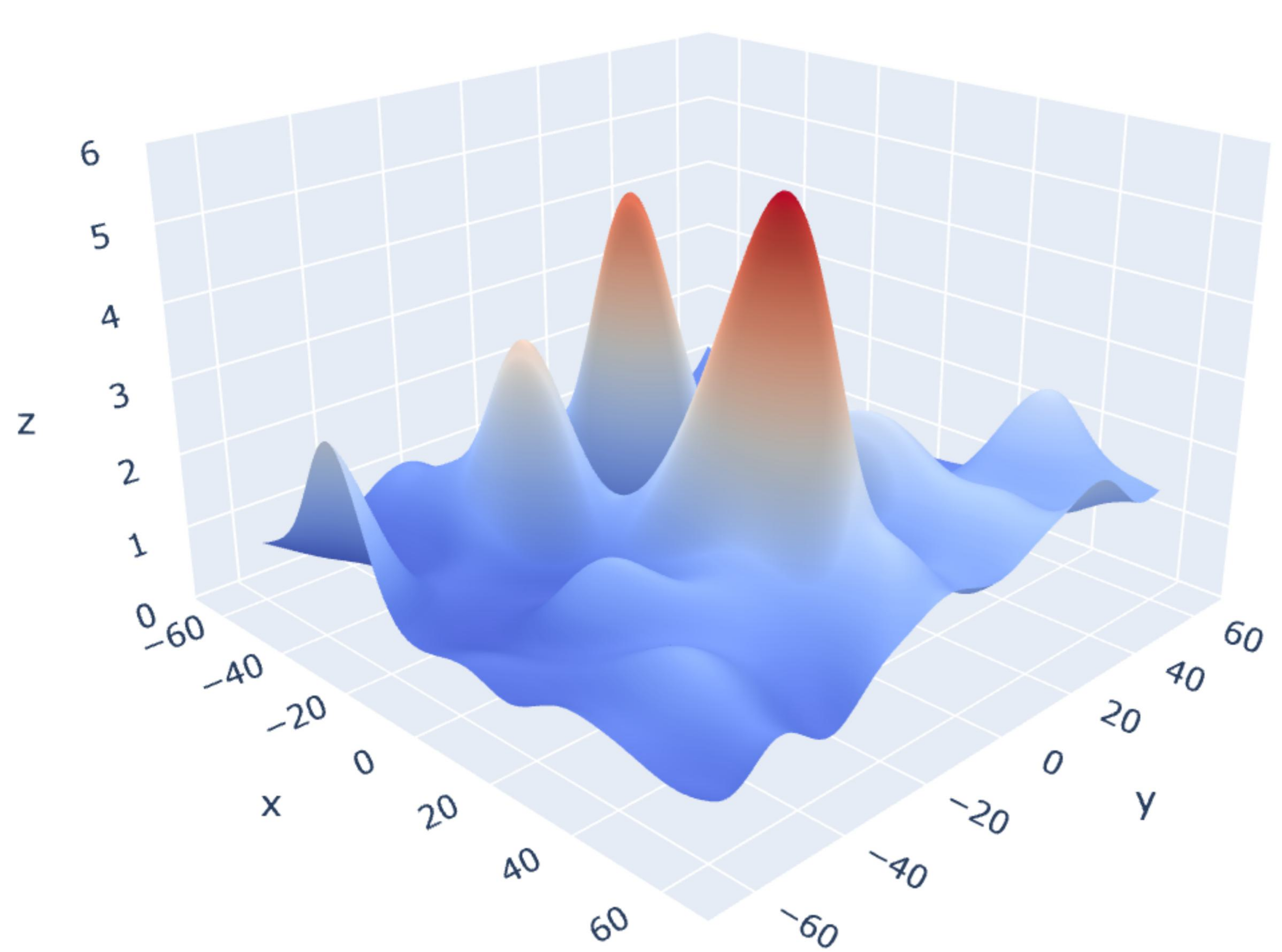}
                    \subcaption{\Sine{} MMNN.}
                \end{subfigure}
\hfill
                \begin{subfigure}[b]{0.24207300245\textwidth}
                    \centering            
                    \includegraphics[width=0.999\textwidth]{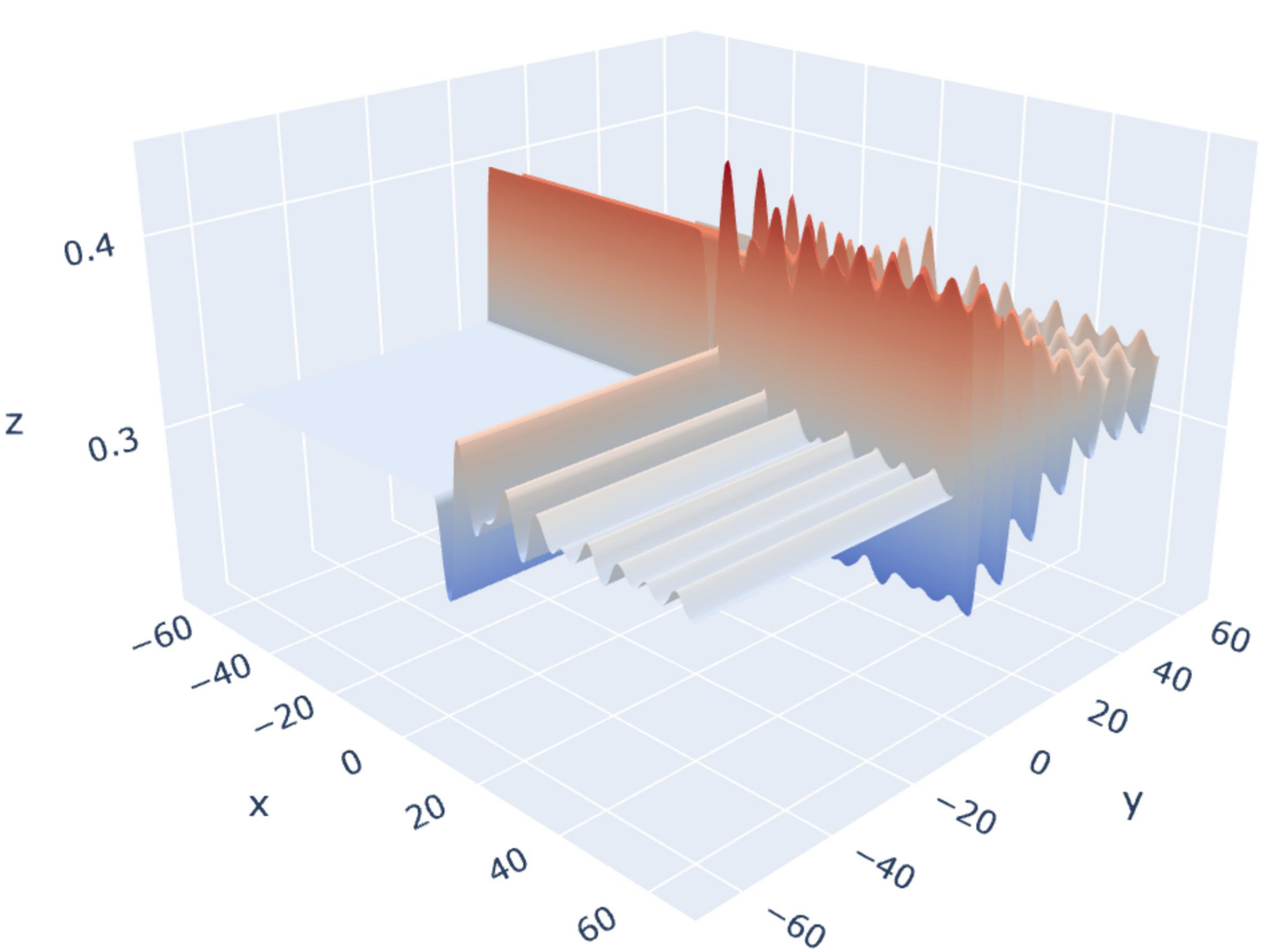}
                    \subcaption{$\SinTU_0$ FCNN.}
                \end{subfigure}
                \hfill
            \begin{subfigure}[b]{0.242\textwidth}
                    \centering            \includegraphics[width=0.999\textwidth]{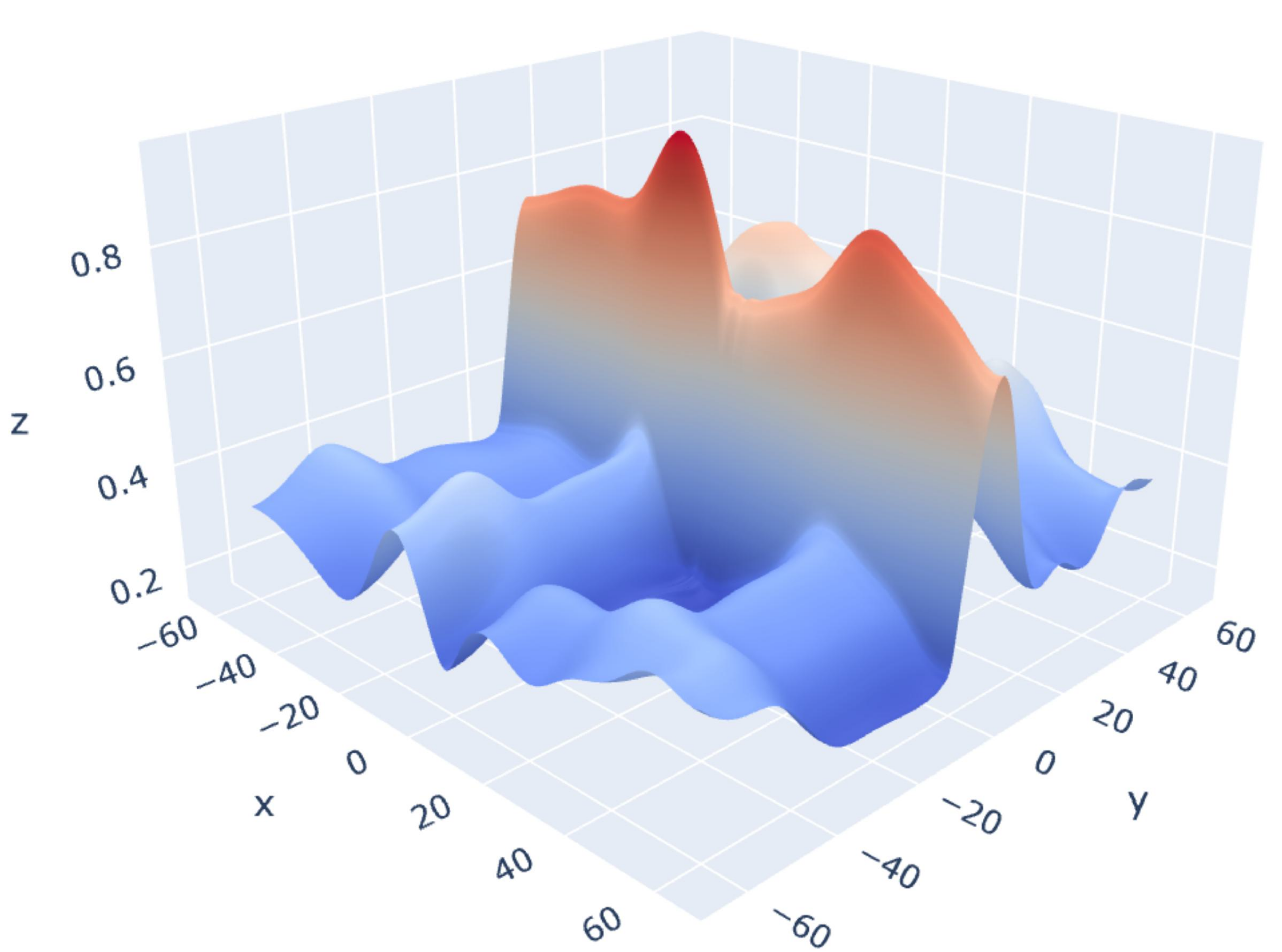}
                    \subcaption{$\SinTU_0$ MMNN.}
                \end{subfigure}             
                \caption{Comparison of the cost function landscapes in terms of two parameters.
                }
\label{fig:landscape:intro:v}
\end{figure}

\item 
Then, extensive numerical experiments demonstrate that Fourier MMNNs (FMMNNs), which use \texttt{sine} or \SinTU{s} as activation functions, consistently outperform other models in both accuracy and efficiency, as shown in Table~\ref{tab:error:comparison:MMNNs:vs:FCNNs:intro}. For \( f_1 \), \sine{}-activated MMNNs achieve the best result, aligning with expectations since \( f_1 \in 
 C^\infty(\mathbb{R}) \) (see Figure~\ref{fig:f1:f2:intro}). The accurate approximation of derivatives is particularly noteworthy, given the complexity of \( f_1 \) and the fact that
the training process relies
solely on function values, without incorporating derivative information.
Additionally, as we can see from Figure~\ref{fig:f1:errors:vs:epoch}, the \sine{} activation function not only accelerates the convergence of MMNNs but also enhances their overall performance. On the other hand, while $\SinTU_{-\pi}$ (with singularity) significantly reduces training error, it results in mediocre test performance. This outcome is expected, as the target function $f_1$ is in $C^\infty(\R)$.
For \( f_2 \in C^0(\mathbb{R}) \setminus  C^1(\mathbb{R}) \) (see Figure~\ref{fig:f1:f2:intro}), which contains numerous singularities, \( \SinTU_{-\pi} \) achieves the best accuracy, demonstrating its effectiveness in capturing these singular features.
Notably, even in this inherently challenging case, \sine{}-activated MMNNs still achieve results comparable to the best. 
For \( f_3 \in C^0(\mathbb{R}) \setminus  C^1(\mathbb{R}) \) (see Figure~\ref{fig:f1:f2:intro}), FCNNs are highly sensitive to training hyperparameters and often fail with small mini-batches. In contrast, FMMNNs remain stable and perform well across different settings. Even with large mini-batches, training FCNNs is time-consuming, yet they still underperform compared to \sine-activated MMNNs.
For more details on the experiments, including additional tests in two and three dimensions, refer to Section~\ref{sec:experiments}.

\item 
Finally, inspired by scaled parameter initialization in our previous work \cite{ZZZZ-23}, we propose a scaled random initialization to the weights of the first layer in FMMNNs, which can significantly accelerate the learning process and improve final performance, particularly when ample training samples are available (see Section~\ref{sec:scaling:init}).

\end{itemize}

\begin{figure}[ht]
            \centering
  \begin{subfigure}[b]{0.95\textwidth}
                    \centering            
                    \includegraphics[width=0.999\textwidth]{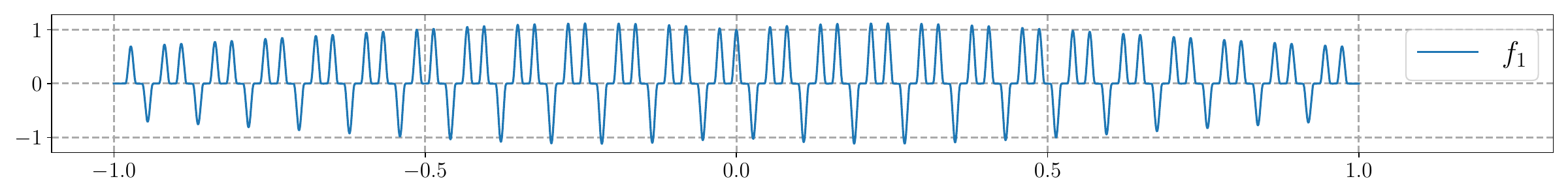}
                \end{subfigure}
            \begin{subfigure}[b]{0.95\textwidth}
                    \centering            
                    \includegraphics[width=0.999\textwidth]{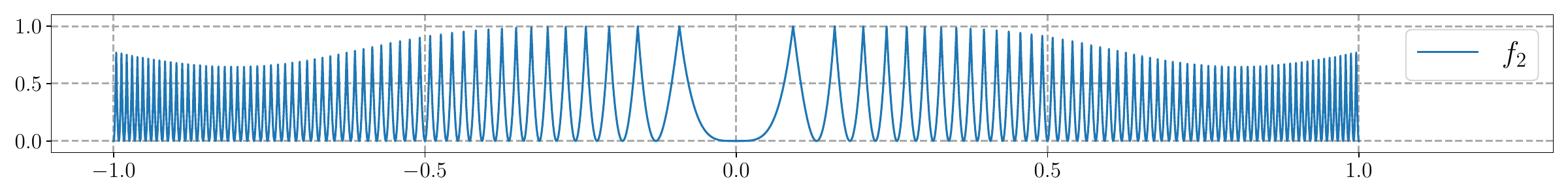}
                \end{subfigure}
            \begin{subfigure}[b]{0.95\textwidth}
                    \centering            
                    \includegraphics[width=0.999\textwidth]{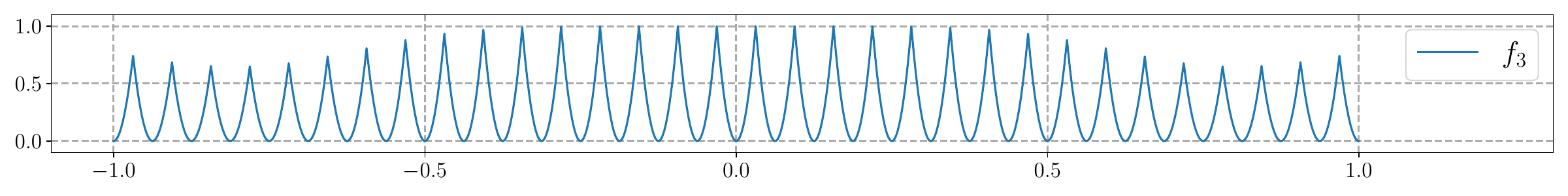}
                \end{subfigure}
\caption{Illustrations of $f_1\in C^\infty(\R)$ and $f_2,f_3\in C^0(\R)\setminus  
 C^1(\R)$, defined in \eqref{eq:def:f1:Cinfty:MMNN:vs:FCNN}, \eqref{eq:def:f2:C0:MMNN:vs:FCNN}, and \eqref{eq:def:f3:C0:MMNN:vs:FCNN}.}
    \label{fig:f1:f2:intro}   
\end{figure}

\begin{figure}[ht]
            \centering
  \begin{subfigure}[b]{0.312295\textwidth}
                    \centering            
                    \includegraphics[width=0.999\textwidth]{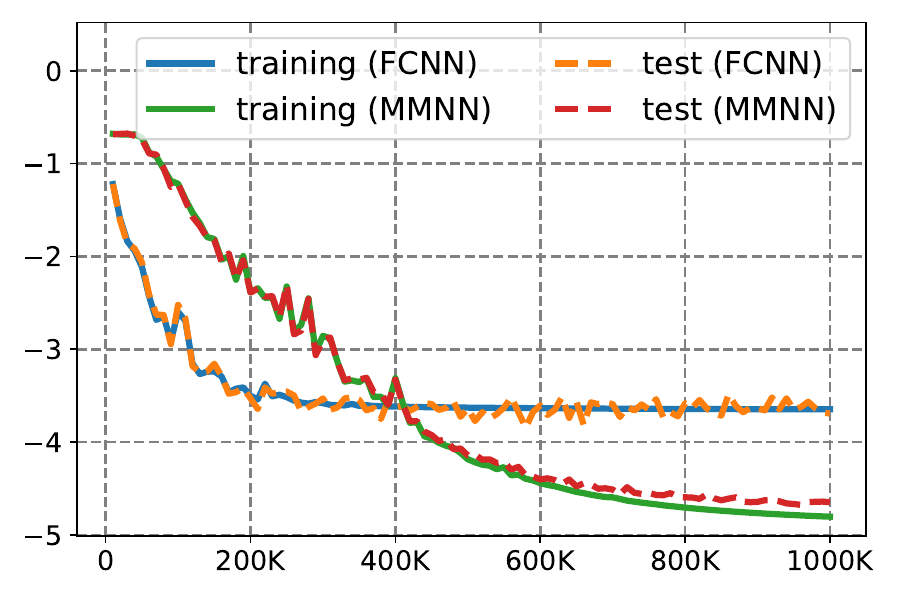}
                    \subcaption{\ReLU.}
                \end{subfigure}
            \begin{subfigure}[b]{0.312295\textwidth}
                    \centering            
                    \includegraphics[width=0.999\textwidth]{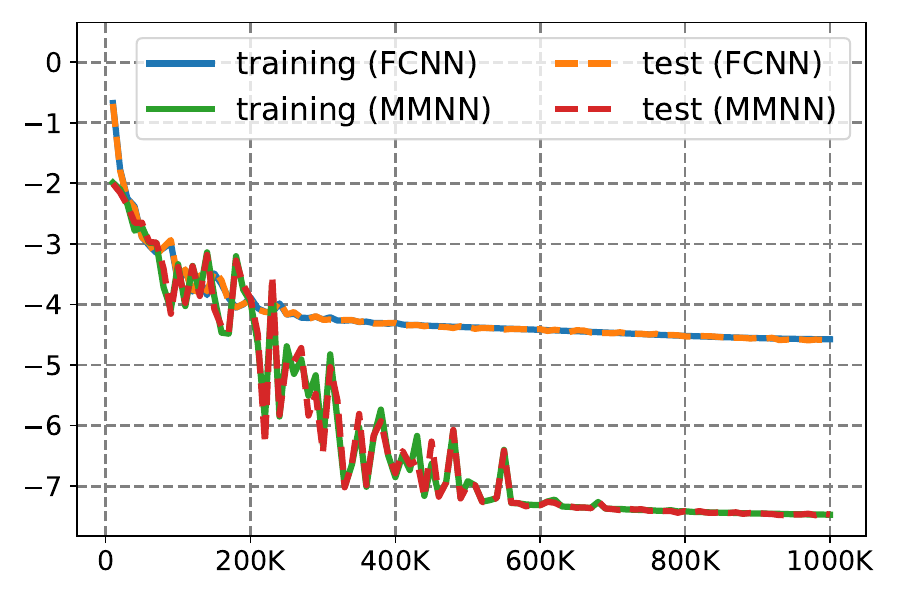}
                    \subcaption{\Sine.}
                \end{subfigure}
            \begin{subfigure}[b]{0.312295\textwidth}
                    \centering            
                    \includegraphics[width=0.999\textwidth]{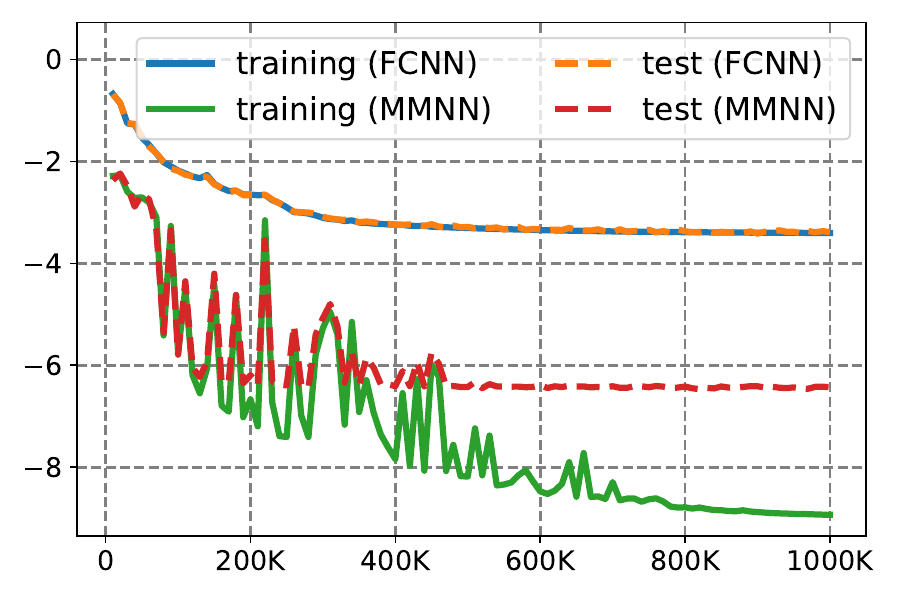}
                    \subcaption{$\SinTU_{-\pi}$.}
                \end{subfigure}
\caption{Illustrations of training and test errors (base-10 logarithm) versus epoch for $f_1$.}
    \label{fig:f1:errors:vs:epoch}   
\end{figure}

\begin{table}[ht]
	\centering  
 \setlength{\tabcolsep}{0.868em} 
 \renewcommand{\arraystretch}{1.15}
\caption{Comparison of test errors. Training is conducted in double precision.
}
\label{tab:error:comparison:MMNNs:vs:FCNNs:intro}
	\resizebox{0.8280\textwidth}{!}{ 
		\begin{tabular}{ccccccc@{\hspace{10pt}}c} 
			\toprule
            & 
            &\multicolumn{4}{c}{
            $\bm{6}$ hidden layers and $\bm{7.3\times 10^{4}}$ trainable parameters } & & \\
            \cmidrule(lr){3-6}
        & 
        &\multicolumn{2}{c}{MMNN} 
        &\multicolumn{2}{c}{FCNN} & 
        \multicolumn{2}{c}{\#training-samples}
        \\
            \cmidrule(lr){3-4}
            \cmidrule(lr){5-6}
            \cmidrule(lr){7-8}
      target & 
      {activation}
      &
      MSE 
    &  MaxE 
    &    MSE 
    &  MaxE   &
      mini-batch &  all
    \\
			\midrule
$f_1$ & $\mathtt{ReLU}$ &  $ 2.24 \times 10^{-5} $  &  $ 4.06 \times 10^{-2} $  &  $ 2.31 \times 10^{-4} $  &  $ 1.93 \times 10^{-1} $  &  3000  & 3000  
 \\ 

\rowcolor{mygray}$f_1$ & $\mathtt{tanh}$ &  $ 4.91 \times 10^{-6} $  &  $ 1.24 \times 10^{-2} $  &  $ 2.67 \times 10^{-3} $  &  $ 3.69 \times 10^{-1} $  &  3000  & 3000  
 \\ 

 $f_1$ & $\mathtt{sine}$ &  $ \bm{3.43 \times 10^{-8}} $  &  $ \bm{8.37 \times 10^{-4}} $  &  $ 2.62 \times 10^{-5} $  &  $ 2.35 \times 10^{-2} $  &  3000  & 3000  
 \\ 

\rowcolor{mygray}$f_1$ & $\mathtt{SinTU}_{-\pi}$ &  $ 3.65 \times 10^{-7} $  &  $ 4.90 \times 10^{-3} $  &  $ 4.14 \times 10^{-4} $  &  $ 1.44 \times 10^{-1} $  &  3000  & 3000  
 \\ 
 
 \midrule

$f^\prime_1$ & $\mathtt{tanh}$ &  $ 1.24 \times 10^{-4} $  &  $ 5.67 \times 10^{-2} $  &  $ 1.10 \times 10^{-2} $  &  $ 6.31 \times 10^{-1} $  &    &   
 \\ 

\rowcolor{mygray}$f^\prime_1$ & $\mathtt{sine}$ &  $ 1.27 \times 10^{-6} $  &  $ 5.12 \times 10^{-3} $  &  $ 5.83 \times 10^{-4} $  &  $ 1.06 \times 10^{-1} $  &    &   
 \\ 
\midrule
 $f^\dprime_1$ & $\mathtt{tanh}$ &  $ 9.92 \times 10^{-4} $  &  $ 1.59 \times 10^{-1} $  &  $ 3.02 \times 10^{-2} $  &  $ 8.62 \times 10^{-1} $  &    &   
 \\ 

\rowcolor{mygray}$f^\dprime_1$ & $\mathtt{sine}$ &  $ 7.82 \times 10^{-6} $  &  $ 1.37 \times 10^{-2} $  &  $ 4.45 \times 10^{-3} $  &  $ 2.71 \times 10^{-1} $  &    &   
 \\ 
 
 \midrule
 
$f_2$ & $\mathtt{ReLU}$ &  $ 1.53 \times 10^{-3} $  &  $ 4.37 \times 10^{-1} $  &  $ 2.13 \times 10^{-2} $  &  $ 6.35 \times 10^{-1} $  &  3000 & 60000  
 \\ 

\rowcolor{mygray}$f_2$ & $\mathtt{tanh}$ &  $ 1.42 \times 10^{-4} $  &  $ 1.51 \times 10^{-1} $  &  $ 1.21 \times 10^{-2} $  &  $ 5.56 \times 10^{-1} $  &  3000 & 60000  
 \\ 

 $f_2$ & $\mathtt{sine}$ &  $ 4.84 \times 10^{-6} $  &  $ 2.88 \times 10^{-2} $  &  $ 9.07 \times 10^{-5} $  &  $ 9.22 \times 10^{-2} $  &  3000 & 60000  
 \\ 

\rowcolor{mygray}$f_2$ & $\mathtt{SinTU}_{-\pi}$ &  $ \bm{1.28 \times 10^{-6}} $  &  $ \bm{2.31 \times 10^{-2}} $  &  $ 6.14 \times 10^{-3} $  &  $ 5.31 \times 10^{-1} $  &  3000 & 60000  
 \\

 \midrule  

 $f_3$ & $\mathtt{sine}$ &  $ 7.68 \times 10^{-8} $  &  $ 6.06 \times 10^{-3} $  &  $ 3.04 \times 10^{-2} $  &  $ 6.99 \times 10^{-1} $  & 500 & 18000  
 \\ 

\rowcolor{mygray} $f_3$ & $\mathtt{sine}$ &  $ 1.27 \times 10^{-7} $  &  $ 6.58 \times 10^{-3} $  &  $ 6.69 \times 10^{-2} $  &  $ 6.68 \times 10^{-1} $  & 1000 & 18000  
 \\ 

  $f_3$ & $\mathtt{sine}$ &  $ 8.75 \times 10^{-8} $  &  $ 6.28 \times 10^{-3} $  &  $ 2.43 \times 10^{-4} $  &  $ 1.52 \times 10^{-1} $  & 1500 & 18000  
 \\ 

\rowcolor{mygray} $f_3$ & $\mathtt{sine}$ &  $ \bm{6.41 \times 10^{-8}} $  &  $ \bm{5.33 \times 10^{-3}} $  &  $ 5.86 \times 10^{-6} $  &  $ 3.38 \times 10^{-2} $  & 2000 & 18000  
 \\ 
 \bottomrule
		\end{tabular} 
	}
\end{table}

The paper is structured as follows. 
In Section~\ref{sec:sine:potential}, 
we begin with a detailed analysis of the MMNN architecture and its approximation capabilities using \texttt{sine} and \SinTU{s} as  activation functions. Several theorems and corollaries are established, with rigorous proofs deferred to the appendix.
We also discuss practical considerations, including the cost function landscape and the benefits of fixing weights within activation functions, and conclude with a review of related work.
Section~\ref{sec:experiments} presents extensive numerical experiments that support our theoretical findings. 
Finally, Section~\ref{sec:conclusion} concludes the paper with a brief discussion.

\section{The Potential of the \Sine{} Activation Function}
\label{sec:sine:potential}

In this section, we begin with a detailed analysis of the MMNN architecture in Section~\ref{sec:MMNN:structure}, followed by an examination of its mathematical approximation capabilities using \texttt{sine} and \SinTU{s} activation functions in Section~\ref{sec:approx:power}.
In Section~\ref{sec:landscape}, we address practical considerations such as the cost function landscape and the interaction between \texttt{sine} activation functions and MMNNs, emphasizing the advantages of fixing weights within activation functions. We conclude with a discussion of related work in Section~\ref{sec:related:work}.

\subsection{Structure of MMNNs}
\label{sec:MMNN:structure}

Before presenting the main results, we first introduce the architecture of MMNNs. 
An MMNN is a multi-layer composition of functions $\bmh_i$, formally defined as $\bmh: \R^{d_0}\to \R^{d_m}$ with
\begin{equation}
    \label{eq:h:MMNN}
    \bmh = \bmh_m \circ \bmh_{m-1} \circ \cdots \circ \bmh_1,
\end{equation}
where each layer $\bmh_i: \R^{d_{i-1}} \to \R^{d_i}$ represents a multi-component shallow network with width $n_i$ and $d_i$ components, given by
\begin{equation*}
    \bmh_i(\bmx) = \bmA_i \, \sigma(\bmW_i \bmx + \bmb_i) + \bmc_i,
\end{equation*}
where $\bmW_i \in \R^{n_i \times d_{i-1}}$, $\bmb_i\in \R^{n_i}$, $\bmA_i \in \R^{d_i \times n_i}$, and $\bmc_i\in\R^{d_i}$. Here, $\sigma(\bmW_i{[j,:]} \cdot \bmx + \bmb_i[j])$ for $ j = 1, 2, \dots, n_i$ act as randomly parameterized basis functions in $\R^{d_{i-1}}$. Each component $\bmA_i[k,:]\sigma(\bmW_i\bmx+\bmb_i) +\bmc_i[k]$, for $ k=1,2, \ldots, d_i$, is a linear combination of these basis functions. 

In each layer, the number of components $d_{i}$, referred to as rank, is significantly smaller than the number of hidden neurons $n_i$, known as the layer width. The utilization of a diverse set of random basis functions, enabled by $n_i\gg d_{i-1}$, along with their well-conditioned nature due to random parametrization, facilitates easy training of $\bmA_i$ and $\bmc_i$ to approximate smooth functions in $\R^{d_{i-1}}$. By integrating multiple components per layer and composing multiple layers, this balance between rank and width, combined with the flexible component structure employing random bases, enhances the effectiveness of MMNNs in both representation and learning.
The \textbf{width} of an MMNN is defined as $\max\{n_i : i = 1, 2, \dots, m-1\}$, the \textbf{rank} as $\max\{d_i : i = 1, 2, \dots, m-1\}$, and the \textbf{depth} as $m$. For convenience, we use the compact notation $(N, R, L)$ to denote a network of width $N$, rank $R$, and depth $L$. In most of our experiments, we assume equal layer width and rank, i.e., $n_i=N$ and $d_i=R$.

In summary, MMNNs consider each component as a fundamental unit, where it consists of a linear combination of randomly parameterized neurons (basis functions). This contrasts with FCNNs, which treat individual neurons as the primary units. Components within each layer are combined and further composed across layers to effectively approximate target functions. The MMNN structure is enriched by introducing rank as an additional dimension alongside width and depth, offering greater flexibility in network architecture.
Furthermore, the training paradigm for MMNNs diverges significantly from that of FCNNs. 
Within each MMNN layer, represented by $\bmA \, \sigma(\bmW \bmx + \bmb) + \bmc$, the set
\[
\big\{ \sigma(\bmW{[j,:]} \cdot \bmx + \bmb[j]) : j = 1, 2, \dots, n \big\}
\]
is interpreted as a shared random basis for all components. Consequently, during training, only the parameters $\bmA$ and $\bmc$ are updated, while $\bmW$ and $\bmb$ remain fixed after random initialization.

To mitigate the vanishing gradient issue in deep MMNNs, techniques inspired by ResNets \cite{7780459} can be employed to improve training efficiency. Building on this concept, we introduce the ResMMNN, which modifies the structure of \eqref{eq:h:MMNN} as 
\begin{equation*}
    \bmh = \bmh_m \circ (\bmI + \bmh_{m-1}) \circ \cdots \circ (\bmI + \bmh_3) \circ (\bmI + \bmh_2) \circ \bmh_1,
\end{equation*}
where $\bmI$ denotes the identity mapping. This definition of ResMMNN can be further generalized by applying identity mappings selectively to specific layers. Such variations are still referred to as ResMMNNs. See Figure~\ref{fig:MMNN:eg} for an illustration of a ResMMNN with size $(6,2,3)$. 
Furthermore, additional layer operations, such as Batch Normalization \cite{pmlr-v37-ioffe15} and Dropout \cite{JMLR:v15:srivastava14a}, can also be applied to specific layers of MMNNs to enhance training stability, accelerate convergence, and improve generalization, among other benefits.

\begin{figure}[ht]
	\centering
	\includegraphics[width=0.95\linewidth]{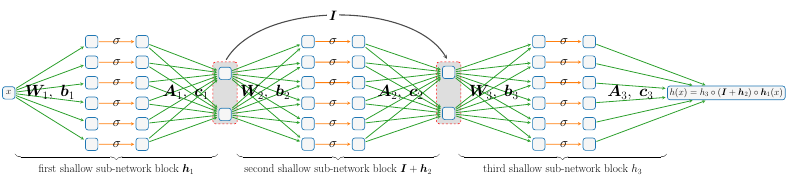}
\caption{An illustration of a ResMMNN of size $(6,2,3)$. During training, only the parameters $\bmA_i$'s and $\bmc_i$'s are updated, while $\bmW_i$'s and $\bmb_i$'s are randomly initialized and remain fixed.}
	\label{fig:MMNN:eg}
\end{figure}

\subsection{Approximation Capability}
\label{sec:approx:power}

We first introduce some notations before presenting our main results on the exponential approximation capabilities of MMNNs using \texttt{sine} or \SinTU{s} as activation functions.
We denote $\mn{\varrho}{N}{R}{L}{\R^d}{\R^n}$ as the set of vector-valued functions $\bmphi:\mathbb{R}^d\to\mathbb{R}^n$ that can be represented
by $\varrho$-activated 
MMNNs
of width $\le N\in \N^+$, rank $\le R\in\N^+$, and depth $\le L\in\N^+$. 
Additionally, in this notation, if $\varrho$ is replaced by $(\varrho_1,\dots,\varrho_k)$, it indicates that each neuron can be activated by any of $\varrho_i$'s.
Let
$\omega_f(\cdot)$ be the modulus of continuity of $f\in C([0,1]^d)$ defined via
\begin{equation*}
    \omega_f(t)\coloneqq \sup \big\{|f(\bmx)-f(\bmy)|: \|\bmx-\bmy\|_2\le t,\,\;\bmx,\bmy\in [0,1]^d\big\}\quad \tn{for any $t\ge 0$.}
\end{equation*}
 Let $\calS$ denote  the set of \SinTU{s}, i.e., 
\begin{equation*}
    \calS\coloneqq \{\SinTU_s:s\in\R\},\quad \tn{where}\quad \SinTU_s\coloneqq \sin\circ \calT_s\quad \tn{and}\quad \calT_s(x) \coloneqq \max\{x,\, s\}=
\begin{cases} 
x & \text{if } x \ge s, \\ 
s & \text{if } x < s.
\end{cases}
\end{equation*}
With the above notations, we present the following theorem, which demonstrates that MMNNs activated by \SinTU{s} possess exponential approximation power. 

\begin{theorem}
\label{thm:main1}
Given $f \in C([0,1]^d)$ and $\varrho\in\calS$, for any $N,L \in \mathbb{N}^+$ and $p \in [1,\infty)$, there exists  
$\phi\in \mn[\big]{\varrho}{2d(4N-1)}{3d}{L+2}{\mathbb{R}^d}{\mathbb{R}}$
such that
\[
\| \phi - f\|_{L^p([0,1]^d)} \leq  2\sqrt{d}\cdot\omega_f\big(N^{-L}\big).
\]
\end{theorem}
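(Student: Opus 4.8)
The plan is to construct $\phi$ as a composition of two modules whose depths add up to $L+2$ --- an $L$-layer coordinate-wise \emph{quantization} module followed by a depth-$2$ \emph{decoding} module --- and to pass from a pointwise bound to the $L^p$ bound by permitting the pointwise estimate to fail only on a set of arbitrarily small measure. As a first reduction, set $K\coloneqq N^L$ and partition $[0,1)^d$ into the $K^d$ half-open cubes $Q_{\bmi}\coloneqq\prod_{\ell=1}^{d}\big[\tfrac{i_\ell}{K},\tfrac{i_\ell+1}{K}\big)$, $\bmi\in\{0,\dots,K-1\}^d$: replacing $f$ by the constant $f(\bmi/K)$ on $Q_{\bmi}$ costs at most $\omega_f(\sqrt{d}\,K^{-1})\le\lceil\sqrt{d}\,\rceil\,\omega_f(N^{-L})$, and since $\lceil\sqrt{d}\,\rceil<2\sqrt{d}$ for every $d\ge1$ (the case $\omega_f(N^{-L})=0$ forces $f$ constant and is trivial) a positive margin remains to absorb a ``bad set''. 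It therefore suffices to produce a $\varrho$-activated MMNN of width $2d(4N-1)$, rank $3d$, and depth $L+2$ whose output equals $f(\bmi/K)$ on all of $Q_{\bmi}$ except on a set of prescribed measure $\le\delta$.

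The quantization module exploits the periodicity of $\varrho=\SinTU_s$: since $\SinTU_s$ coincides with $\sin$ on $[s,\infty)$, a single MMNN layer with $\mathcal O(N)$ neurons realizes the continuous piecewise-linear approximant of the $N$-tooth sawtooth $x\mapsto Nx-\lfloor Nx\rfloor$ up to an error supported in an arbitrarily thin neighbourhood of its breakpoints. Stacking $L$ such layers in parallel over the $d$ coordinates --- carrying, per coordinate, the running fractional part and the accumulated quantized value --- yields width $2d(4N-1)$ and rank $3d$, with all preactivations kept inside $[s,\infty)$ so the construction does not depend on the particular value of $s$. I would record this as a proposition: for any $\delta>0$ there is such a $\varrho$-activated MMNN of depth $L$ that agrees with $\bmx\mapsto(\lfloor Kx_1\rfloor,\dots,\lfloor Kx_d\rfloor)$ off a set of measure $\le\delta$; the linear output part of its last layer can moreover form, at no extra cost, the single integer index $t\coloneqq\sum_{\ell=1}^{d}i_\ell K^{\ell-1}\in\{0,\dots,K^d-1\}$.

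The decoding module is where I expect the real difficulty. I need a $\varrho$-activated MMNN of $\mathcal O(dN)$ width and $\mathcal O(1)$ depth that sends the integer $t$ to $f(\bmi(t)/K)$. Because $f$ is an arbitrary continuous function, its $K^d$ grid values carry essentially unrestricted information and so must be encoded into the weights and then selectively extracted --- a bit-extraction argument, for which the periodicity of $\sin$ (hence of $\SinTU_s$ on $[s,\infty)$) is precisely the right tool: cheap in width and depth, at the price of very large weights, which the statement allows. I would isolate this as a ``point-fitting by periodicity'' proposition producing, for any prescribed reals $v_0,\dots,v_{K^d-1}$, a $\varrho$-activated MMNN within the stated size budget whose output equals $v_t$ at each integer $t$. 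Proving it is the technical core: the challenge is to realize this with the \emph{same} width $2d(4N-1)$, rank $3d$, and only two extra layers --- instead of the naive width $\sim K^d$ --- while keeping the transition regions of the (necessarily continuous) interpolant, between consecutive plateau values, of total measure $\le\delta$.

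Finally I would compose the two modules into a $\varrho$-activated MMNN of depth $L+2$, width $\le 2d(4N-1)$, and rank $\le 3d$, with the plateau-to-plateau transitions arranged to be monotone so that $\|\phi\|_{C^0}\le\|f\|_{C^0}$, and then choose $\delta>0$ small enough, depending on $f,N,L,p,d$, that
\[
\Big(\lceil\sqrt{d}\,\rceil^{\,p}\,\omega_f(N^{-L})^{p}+\big(2\|f\|_{C^0}\big)^{p}\,\delta\Big)^{1/p}\;\le\;2\sqrt{d}\,\omega_f(N^{-L}),
\]
which is possible precisely because $\lceil\sqrt{d}\,\rceil<2\sqrt{d}$. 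Decomposing $[0,1]^d$ into the good set, where $|\phi-f|\le\lceil\sqrt{d}\,\rceil\,\omega_f(N^{-L})$, and its complement of measure $\le\delta$, where $|\phi-f|\le2\|f\|_{C^0}$, and adding the two contributions gives $\|\phi-f\|_{L^p([0,1]^d)}\le2\sqrt{d}\,\omega_f(N^{-L})$, completing the argument.
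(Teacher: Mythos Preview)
Your overall architecture --- depth-$L$ quantization plus depth-$2$ decoding, with a good-set/bad-set $L^p$ estimate --- matches the paper, but the quantization mechanism you propose has a genuine gap. You plan to ``exploit the periodicity of $\varrho=\SinTU_s$'' while ``keeping all preactivations inside $[s,\infty)$''; on that set $\SinTU_s$ is just $\sin$, which is real-analytic, so no linear combination of such neurons is piecewise linear, and Fourier-type approximation of a sawtooth does not confine the error to ``an arbitrarily thin neighbourhood of its breakpoints'' (Gibbs). The paper goes the opposite way: it first builds the floor approximant as a $\ReLU$-activated MMNN of width $d(4N-1)$, rank $3d$, depth $L$ (Proposition~\ref{prop:floor:approx}, iterated base-$N$ digit extraction using $\cpwl(2N-2)$ layers), and then shows separately (Proposition~\ref{prop:approx:ReLU}) that two $\varrho$-neurons uniformly approximate one $\ReLU$-neuron --- precisely by exploiting the singularity at $x=s$ that you chose to avoid. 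That substitution is what doubles the width to $2d(4N-1)$, and it is justified by a layer-by-layer uniform-convergence induction showing the $\varrho$-network $\phi_{1,\eta}\rightrightarrows\phi_1$ on $[0,1]^d$.

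On the decoding side you have the right tool (periodicity) but miss the key simplification and add a spurious difficulty. Since the quantizer outputs an \emph{exact} integer $k$ on the good set, the decoder need only be correct at the $K^d$ integers --- no plateau or transition-region structure between them is required, and the decoding step contributes nothing to the bad set. The paper's Proposition~\ref{prop:k:to:yk} shows that $z\mapsto u\sin\big(v\sin(wz)\big)$ already achieves this with width~$1$ and depth~$2$: for suitable $w$ the numbers $\sin(kw)$, $k=1,\dots,K^d$, are rationally independent (Lemma~\ref{lem:rationally:independent}), and then an irrational-winding density argument (Lemma~\ref{lem:dense:sin}) places $\sin\big(v\sin(kw)\big)$ within $\varepsilon$ of any prescribed targets in $[-1,1]$. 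Converting $\sin$ to $\varrho$ costs nothing: a $2m\pi$-shift puts the argument above $s$, where $\varrho=\sin$. The mechanism is Kronecker density, not bit extraction.
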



The preceding theorem establishes the approximation capabilities of MMNNs activated by \SinTU{s}, which are truncated variations of the \sine{} function. Next, we explore the case where the pure \sine{} function is used as the activation function. However, due to its lack of singularity, the \sine{} function poses challenges in spatial localization, making it difficult to construct mathematical frameworks for spatial decomposition based on continuity.
To overcome this mathematical challenge, we introduce \ReLU{} as an additional activation function. This modification enables a more effective spatial decomposition, leading to the following theorem.

\begin{theorem}
\label{thm:main2}
Given $f \in C([0,1]^d)$, for any $N,L \in \mathbb{N}^+$ and $p \in [1,\infty)$, there exists  
$\phi\in \mn[\big]{(\sine,\,\ReLU)}{d(4N-1)}{3d}{L+2}{\mathbb{R}^d}{\mathbb{R}}$
such that
\[
\| \phi - f\|_{L^p([0,1]^d)} \leq  2\sqrt{d}\cdot\omega_f\big(N^{-L}\big).
\]
\end{theorem}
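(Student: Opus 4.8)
The plan is to run essentially the same construction as in the proof of Theorem~\ref{thm:main1} (carried out first in Section~\ref{sec:proof:thm:main}) and to reuse the auxiliary propositions established there, the only change being that each \SinTU{} neuron is replaced by a bare \sine{} or \ReLU{} neuron; the added flexibility of choosing the activation per neuron (rather than the single blended \SinTU{}) is what lets the width improve from $2d(4N-1)$ to $d(4N-1)$, with the rank and depth unchanged. The architecture $(d(4N-1),3d,L+2)$ is then assembled from: $L$ ``middle'' layers that, one digit at a time, resolve each coordinate $x_j$ of $\bm x\in[0,1]^d$ to the base-$N$ precision $N^{-L}$; and two further layers used for input formatting and for reading out a value. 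The resolution layers are built from a single \CPwL{} ``soft floor'' on $[0,1]$ with $N$ plateaus (realizable with $O(N)$ \ReLU{} neurons), rescaled and composed $L$ times; running $d$ such chains in parallel accounts for the width $d(4N-1)$ and the rank $3d$ (a few carried components per coordinate: the accumulated digits, the as-yet-unconsumed tail of $x_j$, and a scratch track). Outside an exceptional set $E\subset[0,1]^d$ of Lebesgue measure as small as we please, this composite reproduces the grid-rounding map $\bm x\mapsto \bm q(\bm x)/N^L$ with $\bm q(\bm x)\in\{0,\dots,N^L-1\}^d$.

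The substance of the argument is the value-assignment step: one must output an approximation of $f$ at the resolved grid point, and there are $\sim N^{dL}$ such points. Following the bit-extraction-style mechanism that also underlies Theorem~\ref{thm:main1}, the quantized values of $f$ on the grid are encoded into the network's (real, infinite-precision) parameters and the relevant one is decoded; here \sine{} is the natural instrument, since peeling off a prescribed base-$b$ digit of an accumulated real number is a residue-mod-$1$ operation, which $\sin$ carries out periodically and in bounded depth, while \ReLU{} supplies the thresholding, selection, and partition-of-unity scaffolding. I would isolate this as a proposition stating that a $(\sine,\ReLU)$-MMNN of width $O(dN)$, rank $O(d)$, and depth $O(L)$ realizes the composite ``resolve-then-look-up'' map exactly on $[0,1]^d\setminus E$, with output within the stored-value quantization tolerance of $f(\bm q(\bm x)/N^L)$; interleaving the decoding across the same $L$ layers (recovering one base-$N$ digit of the answer per layer, on neurons shared with the resolution steps) is what keeps the depth at $L+2$ rather than $O(L)$ with a larger constant.

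Granting that proposition, the error estimate is routine. On $[0,1]^d\setminus E$ one has $\|\bm x-\bm q(\bm x)/N^L\|_2\le \sqrt d\,N^{-L}$, so the output lies within $\omega_f(\sqrt d\,N^{-L})\le 2\sqrt d\,\omega_f(N^{-L})$ of $f(\bm x)$ once the value-quantization and plateau-height discretization errors are taken small; on $E$ the output is bounded so that $\int_E|\phi-f|^p$ is at most $|E|\,(2\|f\|_\infty)^p$. Since the architecture $(d(4N-1),3d,L+2)$ depends on neither $|E|$ nor the discretization tolerances, the strict slack between $\omega_f(\sqrt d\,N^{-L})$ and $2\sqrt d\,\omega_f(N^{-L})$ leaves room to drive the $E$-contribution and the quantization terms to zero, giving $\|\phi-f\|_{L^p([0,1]^d)}\le 2\sqrt d\,\omega_f(N^{-L})$.

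I expect the decisive difficulty to be precisely the value-assignment proposition: realizing a lookup over $\sim N^{dL}$ stored numbers inside depth only $L+2$ and width only $d(4N-1)$. This is where real-parameter precision and the exact periodicity of $\sin$ must be exploited with care---in particular organizing the decoding so that it runs concurrently with, and borrows the same neurons as, the $L$ coordinate-resolution steps---and where making the width, rank, and depth land at exactly $d(4N-1)$, $3d$, and $L+2$, rather than merely up to universal constants, calls for the most delicate bookkeeping.
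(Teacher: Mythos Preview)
Your overall strategy---partition $[0,1]^d$ into $\sim N^{dL}$ subcubes, build a \ReLU{} subnetwork $\phi_1$ that maps each subcube to its index, then read off the value of $f$ at a representative point---matches the paper exactly, and your account of the resolution step (the \CPwL{} soft floor run $L$ times, $d$ parallel tracks, contributing width $d(4N-1)$, rank $3d$, depth $L$) is right and corresponds to Proposition~\ref{prop:floor:approx}. You are also correct that the gain from $2d(4N-1)$ down to $d(4N-1)$ comes from using bare \ReLU{} neurons rather than approximating each one by two \SinTU{} neurons.

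Where you diverge is the value-assignment step, and there you have made the problem far harder than it is. You propose encoding the $\sim N^{dL}$ target values into network parameters and then decoding the relevant one by a bit-extraction scheme interleaved with the $L$ resolution layers---and you correctly flag this as the decisive difficulty. But the paper does not do any of that. The entire lookup is handled in just \emph{two} \sine{} layers of width~$1$ appended after the $L$ \ReLU{} layers: Proposition~\ref{prop:k:to:yk} shows that for any prescribed targets $y_1,\dots,y_K$ (here $K=N^{dL}$) and any $\epsilon>0$ one can choose three real numbers $u,v,w$ so that $\big|u\sin(v\sin(kw))-y_k\big|<\epsilon$ for every $k$. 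So $\phi_2(z)=u\sin(v\sin(wz))$ already realizes the lookup to arbitrary precision, and $\phi=\phi_2\circ\phi_1$ lands in $\mn{(\sine,\ReLU)}{d(4N-1)}{3d}{L+2}{\R^d}{\R}$ with no neuron-sharing or interleaving whatsoever. The ``$+2$'' in the depth is literally these two \sine{} compositions. The mechanism behind Proposition~\ref{prop:k:to:yk} is a density argument (rational independence of $\sin(kw)$ for suitable $w$, then the irrational-winding density on the torus), not digit extraction. Once you see this, the bookkeeping you were worried about disappears, and your error estimate (shrink $|E|$ and the quantization slack inside the gap between $\omega_f(\sqrt d\,N^{-L})$ and $2\sqrt d\,\omega_f(N^{-L})$) goes through exactly as in the paper.

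One small correction of perspective: in the paper the logical order is reversed from what you assume. Theorem~\ref{thm:main2} is proved \emph{first}, as the cleaner case, and Theorem~\ref{thm:main1} is then deduced from it by replacing each \ReLU{} neuron with two \SinTU{} neurons (Proposition~\ref{prop:approx:ReLU}) and each \sine{} neuron with a shifted \SinTU{} neuron via periodicity.
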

\begin{remark}
    The theorem does not impose a specific arrangement of $\sine$ and \ReLU{}. However, our proof demonstrates that applying \ReLU{} to all but the last two hidden layers, where $\sine$ is used, is sufficient. This result is theoretical; in practice, additional $\sine$ activation functions may be required, as discussed later.
\end{remark}

The proofs of Theorems~\ref{thm:main1} and~\ref{thm:main2} will be presented in Section~\ref{sec:proof:thm:main}. 
We adopt a notation for FCNNs analogous to \(\mn{\varrho}{N}{R}{L}{\R^d}{\R^n}\) used for MMNNs. Specifically, let \(\nn{\varrho}{N}{L}{\R^d}{\R^n}\) denote the set of vector-valued functions \(\bmphi: \mathbb{R}^d \to \mathbb{R}^n\) that can be realized by \(\varrho\)-activated FCNNs with width at most \(N \in \mathbb{N}^+\) and depth at most \(L \in \mathbb{N}^+\).
Similarly, if \(\varrho\) is replaced by \((\varrho_1,\dots,\varrho_k)\), it indicates that each neuron can be activated by any of the \(\varrho_i\)'s.
As a direct consequence of Theorems~\ref{thm:main1} and \ref{thm:main2}, we establish the following two corollaries for FCNNs.

\begin{corollary}
\label{coro:main1}
Given $f \in C([0,1]^d)$ and $\varrho\in\calS$, for any $N,L \in \mathbb{N}^+$ and $p \in [1,\infty)$, there exists  
$\phi\in \nn[\big]{\varrho}{2d(4N-1)}{L+2}{\mathbb{R}^d}{\mathbb{R}}$
such that
\[
\| \phi - f\|_{L^p([0,1]^d)} \leq  2\sqrt{d}\cdot\omega_f\big(N^{-L}\big).
\]
\end{corollary}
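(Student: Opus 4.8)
The plan is to obtain this as an immediate corollary of Theorem~\ref{thm:main1}, using the elementary fact that any $\varrho$-activated MMNN can be rewritten as a $\varrho$-activated FCNN of the same width and depth. By Theorem~\ref{thm:main1} there is $\phi \in \mn{\varrho}{2d(4N-1)}{3d}{L+2}{\mathbb{R}^d}{\mathbb{R}}$ with $\|\phi - f\|_{L^p([0,1]^d)} \le 2\sqrt{d}\cdot\omega_f(N^{-L})$, so it suffices to show that this $\phi$ is also realized by an FCNN of width $\le 2d(4N-1)$ and depth $\le L+2$.

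To prove the embedding, I would take an arbitrary $\bmh \in \mn{\varrho}{N}{R}{m}{\mathbb{R}^d}{\mathbb{R}^n}$, write it as $\bmh = \bmh_m \circ \cdots \circ \bmh_1$ with $\bmh_i(\bmy) = \bmA_i\,\varrho(\bmW_i \bmy + \bmb_i) + \bmc_i$, and absorb each interior affine map $\bmy \mapsto \bmA_{i-1}\bmy + \bmc_{i-1}$ into the preactivation of the next layer: for $2 \le i \le m$,
\[
\varrho\big(\bmW_i(\bmA_{i-1}\bmy + \bmc_{i-1}) + \bmb_i\big) = \varrho\big(\widetilde{\bmW}_i \bmy + \widetilde{\bmb}_i\big), \qquad \widetilde{\bmW}_i \coloneqq \bmW_i \bmA_{i-1}, \quad \widetilde{\bmb}_i \coloneqq \bmW_i \bmc_{i-1} + \bmb_i.
\]
Carrying this out from $i=2$ to $i=m$ leaves the first activation $\varrho(\bmW_1 \bmx + \bmb_1)$ acting directly on the input, a chain of $m$ activation layers of unchanged widths $n_1,\dots,n_m$ interleaved with the absorbed linear maps, and the final affine map $\bmy \mapsto \bmA_m\bmy + \bmc_m$ as the output layer. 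This is exactly the form of a $\varrho$-activated FCNN with $m$ hidden layers whose widths are $n_1,\dots,n_m$; in particular $\bmh$ lies in the FCNN class of depth $m$ and width $\max\{n_1,\dots,n_m\}$. Instantiating with the network from Theorem~\ref{thm:main1} (so $m = L+2$) yields the claimed FCNN of depth $L+2$.

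Since the argument is essentially bookkeeping, I do not expect a genuine obstacle; the one point that needs a second look is the width count. By definition the MMNN ``width'' is $\max\{n_i : 1 \le i \le m-1\}$ and does not control the last layer width $n_{L+2}$, whereas the FCNN obtained above has width $\max\{n_1,\dots,n_{L+2}\}$. I would therefore revisit the construction in the proof of Theorem~\ref{thm:main1} to confirm that the final hidden layer also has width $\le 2d(4N-1)$ (which it does, as that layer only recombines the preceding components into a scalar output). With this verified, collapsing the interior linear maps does not inflate the FCNN width beyond $2d(4N-1)$, the depth is preserved at $L+2$, and the $L^p$ error bound carries over verbatim, completing the proof.
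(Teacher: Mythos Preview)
Your proposal is correct and matches the paper's approach: the paper states that Corollary~\ref{coro:main1} is a ``direct consequence'' of Theorem~\ref{thm:main1}, and the mechanism is exactly the absorption of consecutive affine maps that you describe. Your care about the final hidden-layer width $n_{L+2}$ is warranted given the paper's width definition $\max\{n_i:1\le i\le m-1\}$, and indeed in the construction of Theorem~\ref{thm:main1} the last two layers come from $\tilde\phi_2(y)=u\cdot\varrho\big(2m\pi+v\cdot\varrho(2m\pi+wy)\big)$, so $n_{L+1}=n_{L+2}=1\le 2d(4N-1)$.
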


\begin{corollary}
\label{coro:main2}
Given $f \in C([0,1]^d)$, for any $N,L \in \mathbb{N}^+$ and $p \in [1,\infty)$, there exists 
$\phi\in \nn[\big]{(\sine,\,\ReLU)}{d(4N-1)}{L+2}{\mathbb{R}^d}{\mathbb{R}}$
such that
\[
\| \phi - f\|_{L^p([0,1]^d)} \leq  2\sqrt{d}\cdot\omega_f\big(N^{-L}\big).
\]
\end{corollary}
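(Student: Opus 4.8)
The plan is to derive Corollary~\ref{coro:main2} directly from Theorem~\ref{thm:main2} by observing that an MMNN is, structurally, a special fully connected network once one "unfolds" the component structure. Recall from Section~\ref{sec:MMNN:structure} that each layer of an MMNN has the form $\bmh_i(\bmx) = \bmA_i\,\sigma(\bmW_i\bmx+\bmb_i)+\bmc_i$, so the composition $\bmh_m\circ\cdots\circ\bmh_1$ is exactly a composition of affine maps and coordinatewise activations — i.e.\ an FCNN — where the affine map feeding into layer $i+1$ is the composite $\bmx\mapsto \bmW_{i+1}(\bmA_i\bmx+\bmc_i)+\bmb_{i+1}$. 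The only subtlety is bookkeeping of the width: between consecutive activation layers the "signal" passes through a bottleneck of dimension equal to the rank, but the number of neurons in each hidden (activation) layer of the resulting FCNN is the MMNN width $n_i$, which by hypothesis of Theorem~\ref{thm:main2} is at most $d(4N-1)$. Hence the FCNN width is $\le d(4N-1)$ and the depth is the same count of affine-activation stages, $L+2$.

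Concretely, I would first fix $f\in C([0,1]^d)$, $N,L\in\N^+$, $p\in[1,\infty)$, and invoke Theorem~\ref{thm:main2} to obtain $\phi\in\mn[\big]{(\sine,\,\ReLU)}{d(4N-1)}{3d}{L+2}{\R^d}{\R}$ with $\|\phi-f\|_{L^p([0,1]^d)}\le 2\sqrt d\cdot\omega_f(N^{-L})$. Next I would write out the MMNN realization $\phi=\bmh_{L+2}\circ\cdots\circ\bmh_1$ with $\bmh_i(\bmx)=\bmA_i\sigma_i(\bmW_i\bmx+\bmb_i)+\bmc_i$ (the activation $\sigma_i$ being $\sin$ or $\ReLU$ per the theorem), and absorb the matrices $\bmA_i,\bmc_i$ into the next layer's affine part: define $\tilde\bmW_1=\bmW_1$, $\tilde\bmb_1=\bmb_1$, and for $i\ge 2$ set $\tilde\bmW_i=\bmW_i\bmA_{i-1}$, $\tilde\bmb_i=\bmW_i\bmc_{i-1}+\bmb_i$, with a final output affine map $\bmx\mapsto\bmA_{L+2}\bmx+\bmc_{L+2}$. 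Then $\phi=(\text{affine})\circ\sigma_{L+2}\circ\tilde\bmW_{L+2}(\cdot)+\tilde\bmb_{L+2}\circ\cdots\circ\sigma_1\circ(\tilde\bmW_1(\cdot)+\tilde\bmb_1)$, which is precisely an $(\sin,\ReLU)$-activated FCNN with $L+2$ hidden layers, each of width $n_i\le d(4N-1)$. Therefore $\phi\in\nn[\big]{(\sin,\,\ReLU)}{d(4N-1)}{L+2}{\R^d}{\R}$, and the approximation bound carries over unchanged since it is the same function $\phi$.

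I would state this "MMNN $\subseteq$ FCNN of the same width and depth" embedding as a small standalone observation (or cite wherever it is recorded earlier), since Corollary~\ref{coro:main1} follows from Theorem~\ref{thm:main1} by the identical argument with $\varrho\in\calS$ in place of $(\sine,\ReLU)$. There is essentially no obstacle here: the only thing to be careful about is the convention for counting FCNN depth/width versus MMNN depth/width/rank, so that the claim "width at most $d(4N-1)$, depth at most $L+2$" is literally correct under the definitions of $\nn{\cdot}{\cdot}{\cdot}{\cdot}{\cdot}$ and $\mn{\cdot}{\cdot}{\cdot}{\cdot}{\cdot}{\cdot}$ given in the paper — in particular that the MMNN width $\max\{n_i\}$ becomes the FCNN width and the rank bottleneck does not inflate it. Once that convention check is done, the proof is a two-line consequence of Theorem~\ref{thm:main2}.
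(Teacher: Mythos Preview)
Your proposal is correct and matches the paper's approach: the paper simply states Corollaries~\ref{coro:main1} and~\ref{coro:main2} as ``direct consequences'' of Theorems~\ref{thm:main1} and~\ref{thm:main2} without further argument, and your unfolding observation (absorbing each $\bmA_i,\bmc_i$ into the next affine map so that the hidden-layer widths remain $n_i\le d(4N-1)$ and the depth stays $L+2$) is precisely the justification the paper has in mind.
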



\begin{remark}
    It is worth highlighting the substantial difference in the total number of parameters between an MMNN and an FCNN. For an MMNN of width \( N \), rank \( R \), and depth \( L \), the parameter count is \( O(NRL) \), whereas for an FCNN of width \( N \) and depth \( L \), it is \( O(N^2L) \). Notably, in an MMNN, the rank \( R \) (the number of components in each layer) is significantly smaller than the network width \( N \) (the number of random hidden neurons per layer), which guarantees that the set of $N$ random basis functions is diverse enough to approximate smooth functions in the input space of dimension $R$ from the previous layer. Additionally, as previously discussed, only approximately half of the total parameters in an MMNN are trained.
\end{remark}

\begin{remark}
By applying techniques from \cite{shijun:smooth:functions} (specifically Theorem~2.1), the above results could be extended to the \( L^\infty \)-norm, although the constants involved would be considerably larger. The extension involves more technical complexities and is of little importance to the main themes of this paper, so we do not pursue it here.
\end{remark}


The proofs of Theorems~\ref{thm:main1} and \ref{thm:main2} (see Section~\ref{sec:proof:thm:main}) rely on two key components. 
The first component involves constructing a subnetwork that partitions a $d$-dimensional unit hypercube into uniform subcubes of small size, with only a minor discrepancy due to the continuity of the activation function. Within each subcube, the function is approximated by a constant function.
The second component is the existence of a subnetwork that maps the index of each subcube to the function value at a representative point within the subcube (e.g., its center). In designing this subnetwork, it suffices to ensure accuracy at a finite set of points rather than over an entire interval. This highlights the power of composition, which simplifies the construction.
As we shall see later, the periodicity and irrationality of the \texttt{sine} function play a crucial role in efficiently addressing the second component. Specifically, for any $\epsilon > 0$ and any $M \in \mathbb{N}^+$, given $f_n \in [-1,1]$, there exist suitable values of $v$ and $w$ such that
\begin{equation}
\label{eq:sin:sin::-:fn}
    \left| \sin \big( v\cdot \sin (n w) \big) - f_n \right| < \epsilon  \quad \text{for } n = 1, 2, \dots, M.
\end{equation}

Many mathematical approximation results show theoretical representation power; however, in practice, a more important issue is whether one has an efficient training strategy to achieve a good computational performance. For most mathematical neural network approximation results (constructive or non-constructive), the network parameters depend on the target function nonlinearly and globally. On the other hand, most training processes are gradient descent based (first order) methods which are very local and sensitive to ill-conditioning of the cost function in terms of a very large number of parameters. This typically leads to a gap between the theoretical results and practical performance. In our case, although two \sine{} functions (or \SinTU{s}) are theoretically sufficient for value fitting (e.g., Equation~\eqref{eq:sin:sin::-:fn}), finding the two appropriate numbers, $v$ and $w$, is impractical in general. Consequently, a larger network with multiple components and layers is essential for effective optimization.

Mathematical and numerical investigations in later sections demonstrate that using \texttt{sine} or \SinTU{s} as activation functions in MMNNs with well-balanced structures significantly improves the network’s capability and learning efficiency. This is consistent with the key feature of MMNNs that each component, which is a one-hidden-layer network, only needs to approximate a smooth function and can be trained effectively while Fourier series can approximate smooth functions efficiently.

\subsection{Optimization Landscapes}
\label{sec:landscape}

In Section~\ref{sec:approx:power}, we demonstrate that MMNNs activated by \texttt{sine} or \SinTU{s} possess strong approximation capabilities. However, having good approximation power only reflects theoretical potential and does not necessarily guarantee effective learning in practice. 
Next, we discuss the practical learning difficulty of MMNNs activated by \texttt{sine} or \SinTU{s}. We focus on the most intuitive aspect: the landscape of the cost function with respect to the network parameters, which serves as an indicator of the training complexity in practice. This analysis is conducted across various network architectures and activation functions.











We first consider three basic cases where the target function \( f \) takes the following forms: 
\[
\sin(w^* x + b^*), \quad \sum_{i=1}^2 \sin(w_i^* x), \quad \text{and} \quad \sin\big(w_2^* \sin(w_1^* x)\big),
\]
respectively. The corresponding cost functions are given by
\begin{equation*}
    \mathcal{L}_1(w_1, w_2) = \int_{-\pi}^{\pi} \Big(\sin(w_1x + w_2) - \sin(w_1^* x + w_2^*)\Big)^2 \, dx,
\end{equation*}
\begin{equation*}
    \mathcal{L}_2(w_1, w_2) = \int_{-\pi}^{\pi} \Bigg(\sum_{i=1}^2 \sin(w_i x) - \sum_{i=1}^2 \sin(w_i^* x)\Bigg)^2 \, dx,
\end{equation*}
and
\begin{equation*}
    \mathcal{L}_3(w_1, w_2) = \int_{-\pi}^{\pi} \Big(\sin\big(w_2 \sin(w_1 x)\big) - \sin\big(w_2^* \sin(w_1^* x)\big)\Big)^2 \, dx.
\end{equation*}
\begin{figure}[ht]
            \centering

            \,\hfill
            \begin{subfigure}[b]{0.302245\textwidth}
                    \centering            
                    \includegraphics[width=0.999\textwidth]{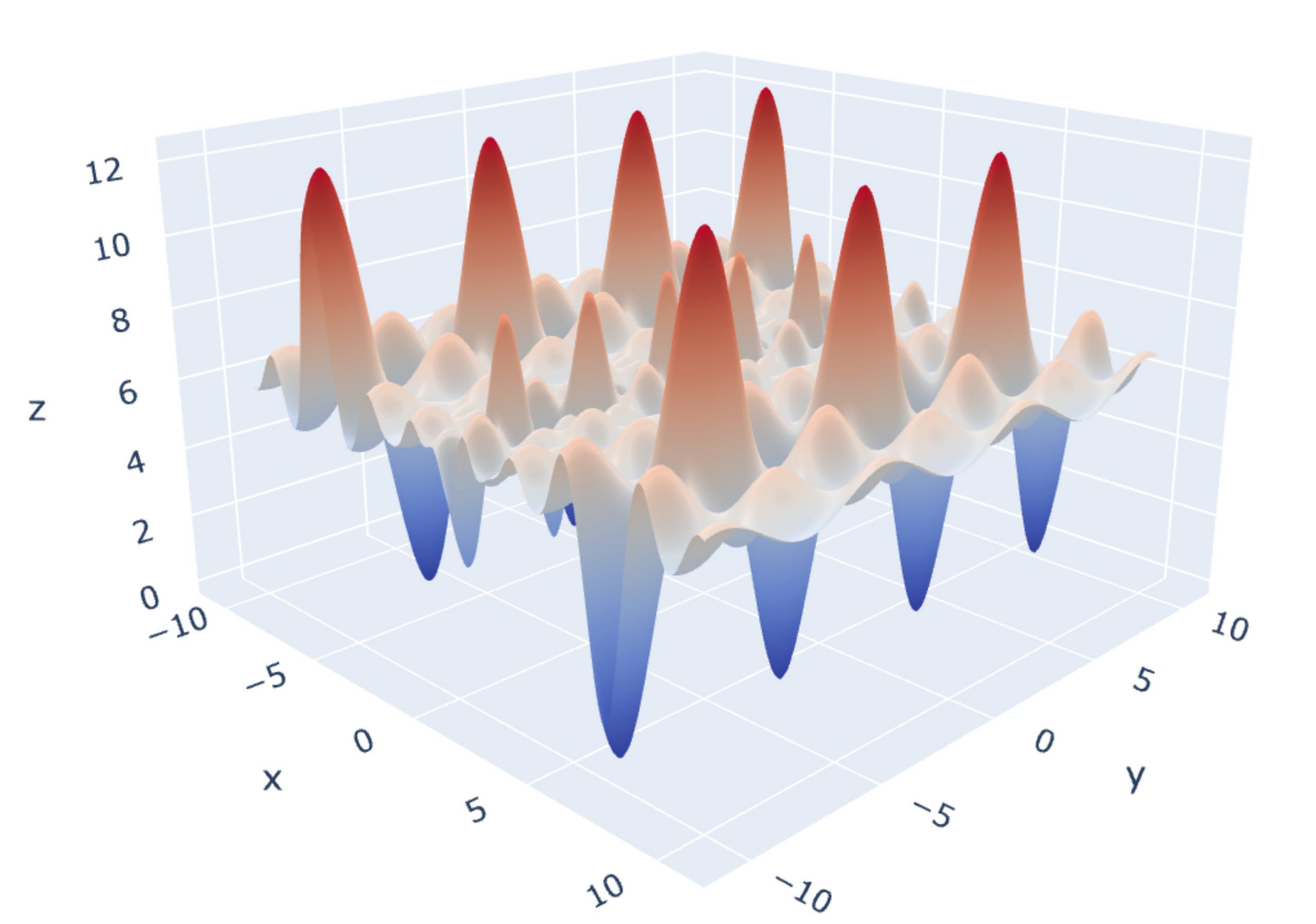}
                    \subcaption{$\calL_1(w_1,w_2)$.}
                \end{subfigure}
                \hfill
            \begin{subfigure}[b]{0.302245\textwidth}
                    \centering            \includegraphics[width=0.999\textwidth]{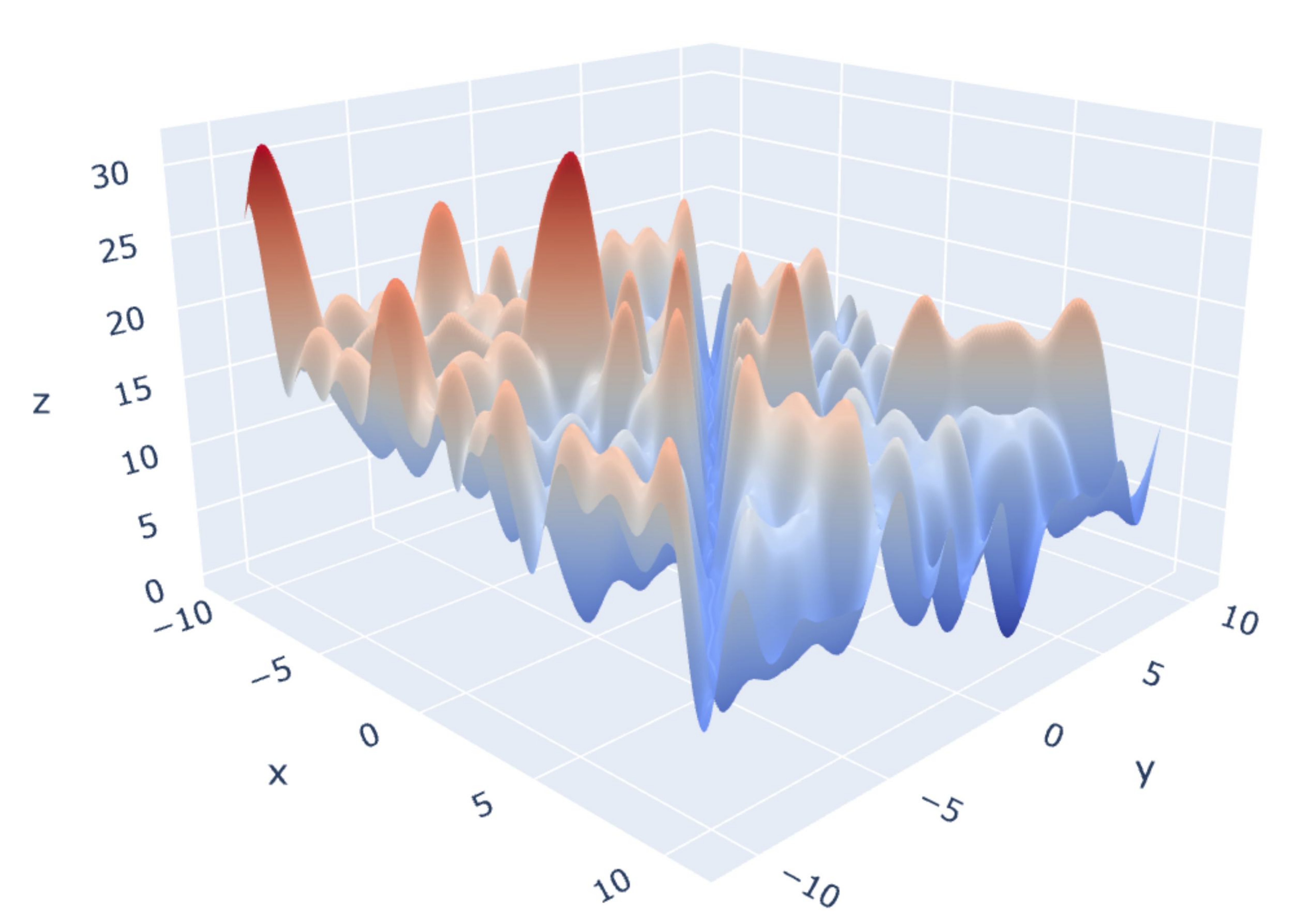}
                    \subcaption{$\calL_2(w_1,w_2)$.}
                \end{subfigure}
            \hfill
            \begin{subfigure}[b]{0.302245\textwidth}
                    \centering            \includegraphics[width=0.999\textwidth]{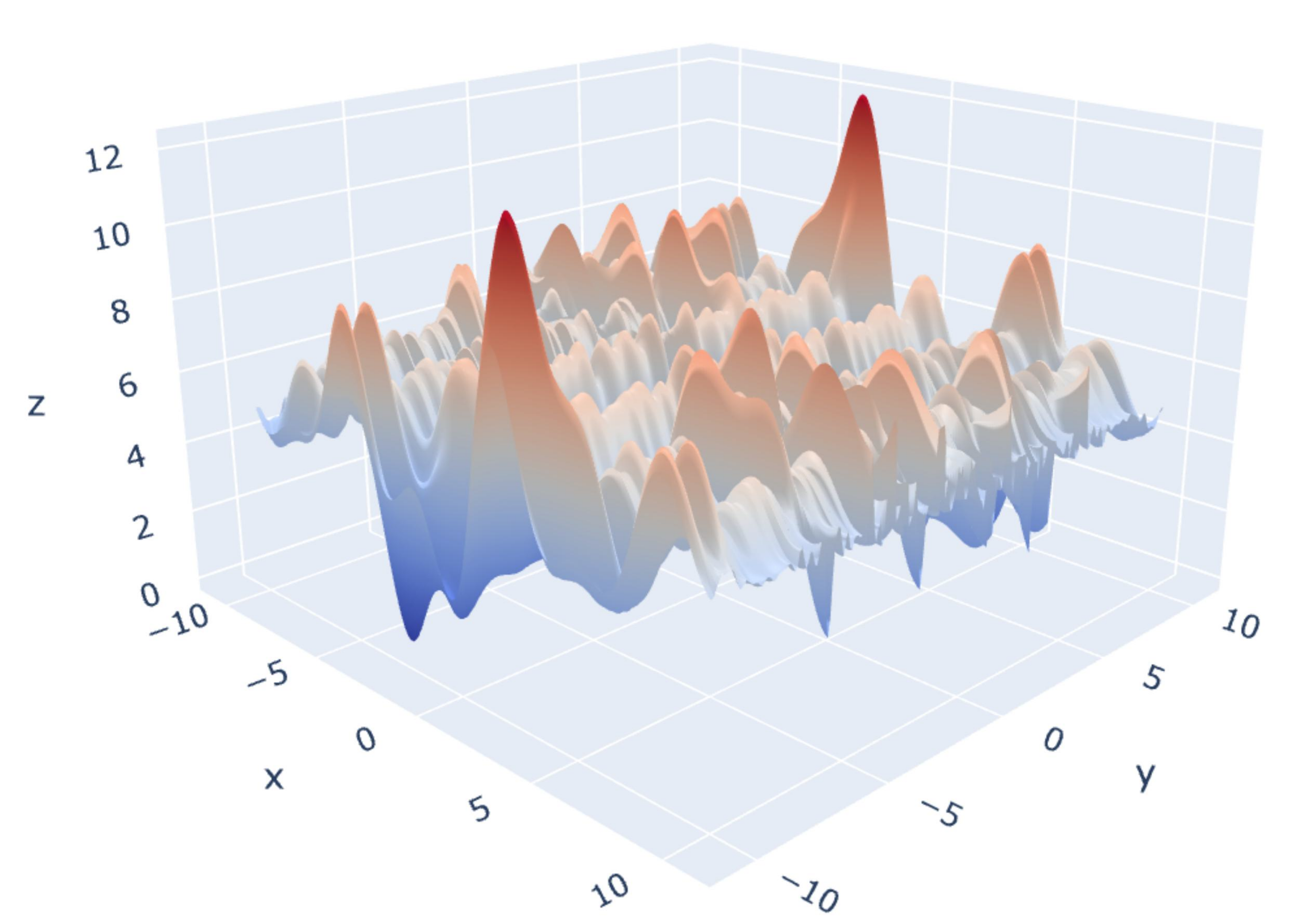}
                    \subcaption{$\calL_3(w_1,w_2)$.}
                \end{subfigure}\hfill
                \,                
    \caption{
    Landscape visualizations of $\mathcal{L}_i$ for $i = 1,2,3$, where $x$, $y$, and $z$ represent $w_1$, $w_2$, and $\mathcal{L}_i(w_1,w_2)$, respectively.
}
\label{fig:landscape:three:base:cases}
\end{figure}

\noindent
Here, we use the integration range \((-\pi, \pi)\) instead of \((-1, 1)\) because the \sine{} function has a period of \(2\pi\), which simplifies the calculations and makes the test more straightforward.
The landscapes of these three cost functions are illustrated in Figure~\ref{fig:landscape:three:base:cases}. As observed, the landscapes are quite complex, featuring numerous local minima. This indicates that using the \sine{} function as an activation function poses significant challenges for effective learning in practice.
We will later see that this issue is particularly severe for FCNNs. However, the structure of MMNNs simplifies the landscape, making them more conducive to effective learning.

Next, we examine general network architectures, specifically FMMNNs and FCNNs.
To ensure a fair comparison, the FCNN is designed to have a comparable number of learnable parameters to the MMNN.
The FCNN is defined as
\begin{equation}\label{eq:FCNN:eg}
    \phi(x) = \mathcal{A}_2 \circ \varrho \circ {\mathcal{A}}_1 \circ \varrho \circ {\mathcal{A}}_0(x),
\end{equation}
where \({\mathcal{A}}_0: \mathbb{R} \to \mathbb{R}^{64}\), \({\mathcal{A}}_1: \mathbb{R}^{64} \to \mathbb{R}^{64}\),   \(\mathcal{A}_2: \mathbb{R}^{64} \to \mathbb{R}\) are affine linear maps, and $\varrho$ is either the \sine{} or $\SinTU_0$ activation function.
The MMNN is defined as
\begin{equation}\label{eq:MMNN:eg}
    \phi(x) = \tilde{\mathcal{A}}_3 \circ \varrho \circ {\tilde{\mathcal{A}}}_2 \circ {\tilde{\mathcal{A}}}_1 \circ \varrho \circ {\tilde{\mathcal{A}}}_0(x),
\end{equation}
where \({\tilde{\mathcal{A}}}_0: \mathbb{R} \to \mathbb{R}^{128}\), \({\tilde{\mathcal{A}}}_1: \mathbb{R}^{128} \to \mathbb{R}^{32}\), \({\tilde{\mathcal{A}}}_2: \mathbb{R}^{32} \to \mathbb{R}^{128}\),  \(\tilde{\mathcal{A}}_3: \mathbb{R}^{128} \to \mathbb{R}\) are affine linear maps.
The cost function is given by
\begin{equation*}
    \calL(w_1,w_2)=\int_{-\pi}^{\pi} \Big(\phi(x)- f(x)\Big)^2 \, dx,\quad \tn{where}\quad 
    f(x)=\frac{1}{1+100x^2}.
\end{equation*}

As shown in (a), (b), (d), and (e) of Figure~\ref{fig:landscape:FCNN:vs:MMNN}, the learning landscape of MMNNs is considerably simpler than that of FCNNs. Likewise, (b), (c), (e), and (f) of Figure~\ref{fig:landscape:FCNN:vs:MMNN} clearly illustrate that our learning strategy, which involves fixing parameters in $\tildecalA_2$ while optimizing those in $\tildecalA_1$ rather than the reverse, is well-justified and reasonable. We would like to point out that
different initializations can produce varying results; however, the overall landscape complexity remains largely consistent. The figures shown in Figure~\ref{fig:landscape:FCNN:vs:MMNN} represent relatively complex cases among several initializations with identical settings.
 We have experimented with deeper networks and other target functions, and the results are generally similar to the two-hidden-layer cases presented.
 For deeper MMNNs, if two parameters are selected from the trainable parameters (i.e., \( \bmA_i \)'s and \( \bmc_i \)'s, see Section~\ref{sec:MMNN:structure}), the landscape always remains simple, reflecting the effectiveness and rationality of our training strategy.



\begin{figure}[ht]
            \centering
            %
            \,\hfill\begin{subfigure}[b]{0.27300245\textwidth}
                    \centering            
                    \includegraphics[width=0.999\textwidth]{figures/LandScapeFCNN1Act1.pdf}
                    \subcaption{FCNN ($\calA_1$ in \eqref{eq:FCNN:eg}).}
                \end{subfigure}
                \hfill
            \begin{subfigure}[b]{0.27300245\textwidth}
                    \centering            \includegraphics[width=0.999\textwidth]{figures/LandScapeMMNN1Act1.pdf}
                    \subcaption{MMNN ($\tildecalA_1$ in \eqref{eq:MMNN:eg}).}
                \end{subfigure}
                \hfill
            \begin{subfigure}[b]{0.27300245\textwidth}
                    \centering            \includegraphics[width=0.999\textwidth]{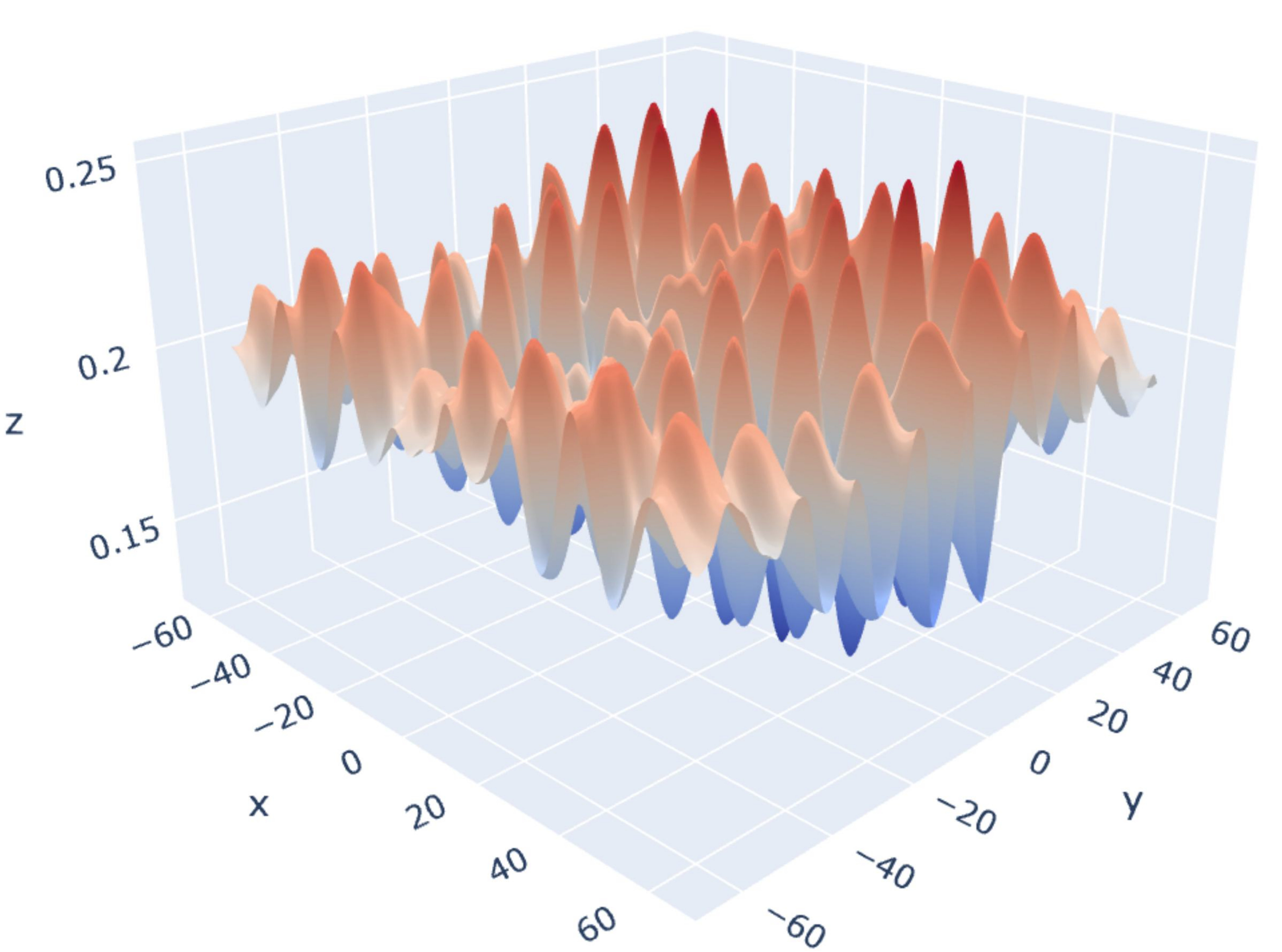}
                    \subcaption{MMNN ($\tildecalA_2$ in \eqref{eq:MMNN:eg}).}
                \end{subfigure}\hfill
                \,
                \\
                \,\hfill\begin{subfigure}[b]{0.27300245\textwidth}
                    \centering            
                    \includegraphics[width=0.999\textwidth]{figures/LandScapeFCNN1Act2.pdf}
                    \subcaption{FCNN ($\calA_1$ in \eqref{eq:FCNN:eg}).}
                \end{subfigure}
                \hfill
            \begin{subfigure}[b]{0.27300245\textwidth}
                    \centering            \includegraphics[width=0.999\textwidth]{figures/LandScapeMMNN1Act2.pdf}
                    \subcaption{MMNN ($\tildecalA_1$ in \eqref{eq:MMNN:eg}).}
                \end{subfigure}
                \hfill
            \begin{subfigure}[b]{0.27300245\textwidth}
                    \centering            \includegraphics[width=0.999\textwidth]{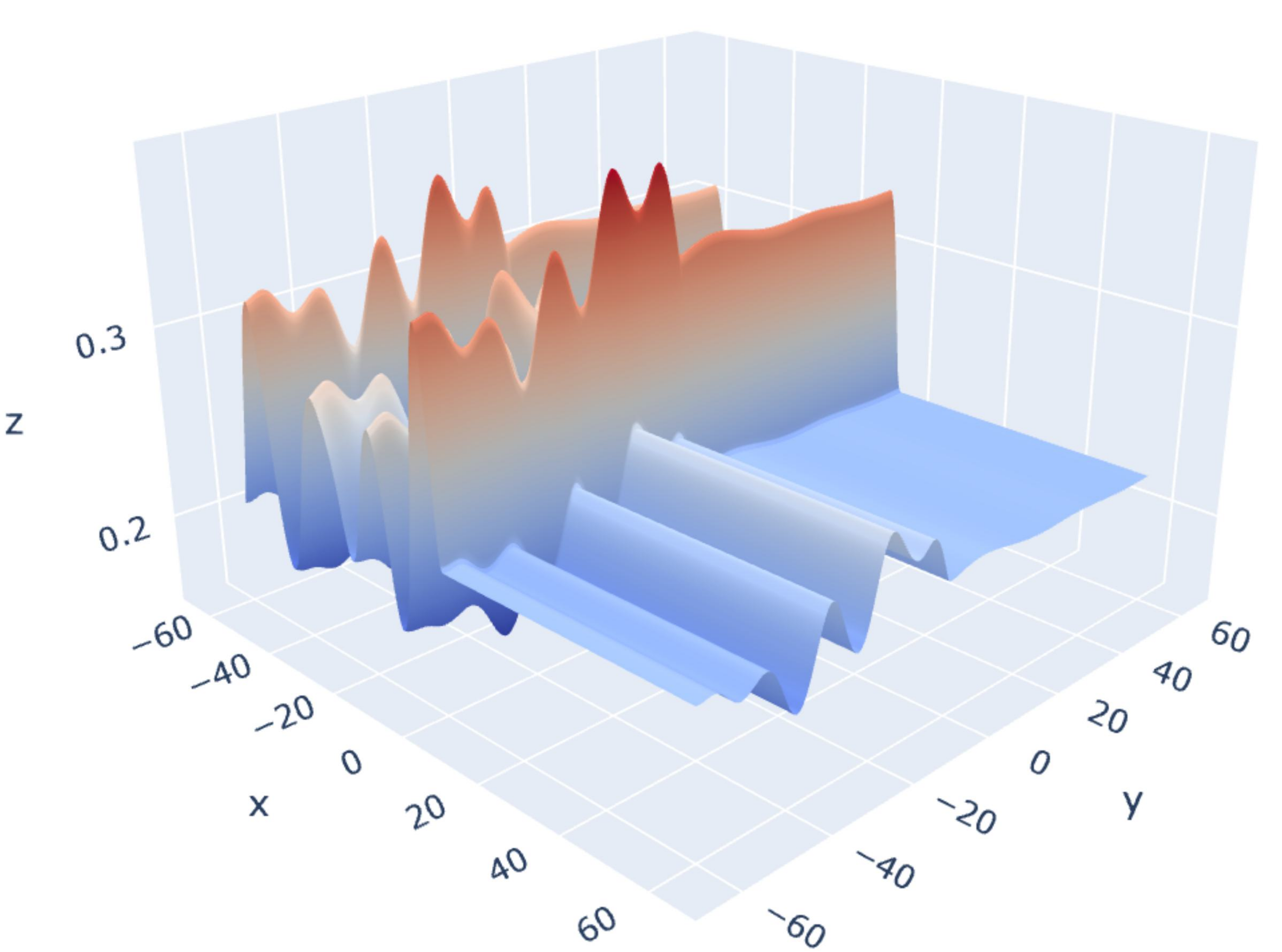}
                    \subcaption{MMNN ($\tildecalA_2$ in \eqref{eq:MMNN:eg}).}
                \end{subfigure}\hfill
                \,
                
                \caption{Comparison of the cost function landscapes for FCNNs and MMNNs. All parameters are initialized using PyTorch's default linear initialization. Here, $z$ represents the cost function, while $x$ and $y$ denote two parameters  from the weights of
\({\mathcal{A}}_1\), \({\tilde{\mathcal{A}}}_1\), and \({\tilde{\mathcal{A}}}_2\) in 
 \eqref{eq:FCNN:eg} and \eqref{eq:MMNN:eg}.
The top and bottom rows correspond to the \sine{} and $\SinTU_0$ activation functions, respectively.
}

    \label{fig:landscape:FCNN:vs:MMNN}
    
\end{figure}

\subsection{Related Work}
\label{sec:related:work}

Extensive research has explored the approximation capabilities of neural networks across various architectures. Early works established the universal approximation theorem for single-hidden-layer networks~\cite{Cybenko1989ApproximationBS,HORNIK1991251,HORNIK1989359}, proving their ability to approximate specific functions arbitrarily well, though without quantifying error in relation to network size. Subsequent studies~\citep{yarotsky18a,yarotsky2017,B_lcskei_2019,ZHOU2019,10.3389/fams.2018.00014,2019arXiv190501208G,2019arXiv190207896G,MO,shijun:NonlineArpprox,shijun:Characterized:by:Numer:Neurons,shijun:smooth:functions,shijun:arbitrary:error:with:fixed:size,shijun:thesis,shijun:intrinsic:parameters,yarotsky:2019:06, fang2024addressing,shijun:2023:beyond:ReLU:to:diverse:actfun} analyzed approximation errors, relating them to network width, depth, or parameter count, and addressed the spectral bias in neural network approximations. 
Here, we specifically highlight two papers \cite{shijun:arbitrary:error:with:fixed:size, yarotsky:2019:06} that are closely related to our theoretical results.
 The results in \cite{yarotsky:2019:06} imply that \ReLU/\sine-activated fully connected neural networks (FCNNs) with width $O(d)$ and depth $O(L)$ can approximate a 1-Lipschitz function $f:[0,1]^d\to\mathbb{R}$ within an error of $O(2^{-\sqrt{L}})$. Compared to this, our work achieves several key improvements: 
1) Our results incorporate width $N$, extending beyond fixed-width networks.  
2) We improve the approximation error rate to $O(N^{-L})$, much better than $O(2^{-\sqrt{L}})$ when $N$ is fixed.  
3) We introduce \SinTU{}, a novel hybrid activation function, where a single \SinTU{} function can replace two activation functions (\ReLU,\,\sine) in the approximation results.  
4) Our results specifically apply to MMNNs, which introduce an additional dimension called rank beyond width and depth.  
In \cite{shijun:arbitrary:error:with:fixed:size}, the authors propose a simple activation function, \EUAF, and demonstrate that a fixed-size \EUAF-activated FCNN can approximate any continuous function $f\in C([0,1]^d)$ to an arbitrarily small error by adjusting only finitely many parameters. 
\EUAF{} emphasizes theoretical approximation but tends to perform less effectively in practice, often appearing somewhat artificial. 
In contrast, our FMMNN is naturally structured. Although its theoretical approximation power is comparatively weaker, an exponential approximation rate is typically sufficient in practical applications. Our extensive experiments further confirm its effectiveness, demonstrating surprisingly strong empirical performance.

\begin{colorenv}[blue]
High-frequency function approximation has been studied from several
complementary perspectives. One important line of work concerns the frequency
principle or spectral bias of neural networks, which observes that standard
fully connected neural networks (FCNNs), also known as multi-layer perceptrons
(MLPs), tend to learn low-frequency components earlier than high-frequency
components during training
\cite{pmlr-v97-rahaman19a,xu2019frequency,NEURIPS2020_55053683}. This provides
one explanation for the difficulty of fitting highly oscillatory targets with
standard activations such as \ReLU, \GELU, \texttt{sigmoid}, and \texttt{tanh}
\cite{10.5555/3104322.3104425,2016arXiv160608415H}.

Several frequency-aware designs have been proposed to alleviate this difficulty.
Periodic or sinusoidal activations introduce an explicit oscillatory inductive
bias. For instance, SIREN-type networks use sine activations together with a
carefully hand-tuned initialization and have shown strong performance in implicit
neural representations and derivative fitting
\cite{NEURIPS2020_53c04118}. Snake-type activations add a periodic correction
to a linear term and are designed to improve the learning of periodic structures
\cite{NEURIPS2020_11604531}. Related sinusoidal mechanisms have also been
studied in different neural-network models
\cite{novello2024tamingfrequencyfactorysinusoidal,2025arXiv250200869M,
fathony2021multiplicative,doi:10.1137/21M144431X,WANG2021113938}. Another
widely used approach is to map the input coordinate into a trigonometric feature
space before applying an MLP, as in Fourier-feature or positional-encoding
networks. Such methods make high-frequency basis functions directly available
to the network and can substantially improve coordinate-based regression
\cite{NEURIPS2020_55053683,WANG2021113938,10.1145/3503250}.

Other methods incorporate frequency information in more structured ways.
WIRE/Gabor-type networks combine oscillation with spatial localization through
wavelet-like or Gabor-type activations, which is useful for representing local
oscillatory signals \cite{10204916}. Recent variable-periodic activations, such
as FINER and FINER++, further tune the supported frequency range through
activation or bias design \cite{liu2024finer,zhu2024finerpp}. Multi-scale
methods, including PhaseDNN and MscaleDNN, address high-frequency learning by
phase shifting, scaling, or decomposing the target into multiple frequency
regimes
\cite{doi:10.1137/19M1310050,liu2020mscalednn,cai2019mscalednn}. In scientific
computing and dynamical systems, oscillatory and multi-scale structures also
appear naturally; recent neural-computational studies on nonlinear chaotic and
fractional-order systems further illustrate this broader motivation
\cite{hassan2024lorenzluchen,raja2025fitzhughnagumo,raja2025hindmarshrose}.

Our approach is related to the above works in its emphasis on frequency-aware
representation, but differs in the role played by the architecture. FMMNNs do not just replace a standard activation with a Fourier-type activation, nor do they merely add Fourier features to the input. Instead, they combine sine-type randomized
components with the structured multi-component and multi-layer architecture of
MMNNs which can decompose a complicated function into smooth components. The key point is that each component can be represented and easily trained as a linear combination of fixed
random sine-type basis functions, while the multi-layer composition and
recombination mechanism produces more complex high-frequency features. Thus,
Fourier-type representation is used as a component-level building block rather
than as a stand-alone activation replacement in a conventional FCNN.

This distinction is important because using \sine{} as the activation function
in an FCNN does not automatically lead to robust performance. In our numerical
experiments, sine-activated FCNNs work well in some cases but are unstable or
less accurate in others, suggesting that the benefit of \sine{} depends strongly
on the NN architecture and optimization landscape. By contrast, the
MMNN structure and the \sine{} activation, or its variant \SinTU{s}, form a more
effective combination: the component-wise random-basis structure leads to a
simpler optimization problem, while the sine-type basis functions provide  effective building blocks in the frequency domain. In addition to the
numerical evidence, our approximation results and landscape observations provide
theoretical and empirical support for this architecture--activation matching
principle.
\end{colorenv}

\section{Numerical Experiments}
\label{sec:experiments}

In this section, we conduct extensive experiments to validate our analysis and demonstrate the effectiveness of MMNNs. Across all tests, we ensure: 1) Sufficient data sampling is used to capture fine details of the target function; 2) The Adam optimizer \cite{DBLP:journals/corr/KingmaB14} is used for training; 
3) Parameter initialization follows PyTorch's default settings;
4)  $\bmW$ and $\bmb$ (parameters inside activation functions) are fixed, while only $\bmA$ and $\bmc$ (parameters outside activation functions) are trained (see Section~\ref{sec:MMNN:structure} for details);
\begin{colorenv}[blue]
    5) We use the empirical mean squared error (MSE) as the training loss and report both MSE and the maximum absolute error (MaxE)  for  evaluation:
\[
    \mathrm{MSE}
    :=
    \frac{1}{|\mathcal{D}|}
    \sum_{\bmx_i\in \mathcal{D}}
    \bigl|\phi(\bmx_i)-f(\bmx_i)\bigr|^2,
    \qquad
    \mathrm{MaxE}
    :=
    \max_{\bmx_i\in \mathcal{D}}
    \bigl|\phi(\bmx_i)-f(\bmx_i)\bigr|,
\]
where \(\phi\) denotes the neural network approximation, \(f\) denotes the target function, and \(\mathcal{D}\)  denotes the training set or test sample set.
\end{colorenv}

\begin{colorenv}[blue]
Most comparisons in this section are controlled comparisons: the compared models
use the same target function, training and test samples, optimizer, learning-rate
schedule, precision, mini-batch size, and training budget, while only the factor
under comparison is changed, such as architecture, activation function, or
initialization strategy. For MMNN/FMMNN models, the internal random parameters
\(\bmW\) and \(\bmb\) are fixed after initialization and only \(\bmA\) and \(\bmc\) are
trained; for FCNN/MLP baselines, all affine parameters are trainable. We
therefore report trainable and total parameter counts when this distinction is
important.
\end{colorenv}

Section~\ref{sec:FMMNN_vs_FCNN} presents a comparison between MMNNs and FCNNs across a variety of activation functions, including \ReLU, \GELU, \texttt{tanh}, \texttt{sine}, \cosine, and \SinTU{s}. Section~\ref{sec:sine_vs_others} then focuses specifically on MMNNs, evaluating the performance of the \texttt{sine} activation relative to other activation functions. Finally, Section~\ref{sec:scaling:init} uses various numerical experiments to demonstrate that an initial scaling strategy can significantly speed up learning and improve performance when sufficient training data are available.







\subsection{MMNNs Versus FCNNs}
\label{sec:FMMNN_vs_FCNN}



We will thoroughly compare the performance of MMNNs and FCNNs using various activation functions.
\begin{colorenv}[blue]
SIREN-type models are sine-activated MLPs with carefully designed frequency
scaling and initialization. Therefore, the sine-activated FCNN baseline in this
section is related to, but is not intended to be a fully tuned SIREN baseline.
Here we use the same default initialization protocol as the other FCNN baselines
to isolate the effect of activation and architecture. The role of frequency
scaling is studied separately in Section~\ref{sec:scaling:init}.
\end{colorenv}
The overall message from these tests is that: 1) MMNNs perform better than FCNNs when the same activation function is used, and 2) FMMNNs consistently achieve the strongest or near-strongest results in these tests.
In our tests, we select rather complex target functions -- highly oscillatory with or without nonsmoothness, as both network types perform well on simple functions, making their differences less apparent. Additionally, our target functions should not be generated using \sine, \cosine, or their compositions and combinations, since we use \sine{} as the activation function.
Based on these considerations, we first choose an oscillatory target function $f_1\in C^\infty(\R)$ defined as
\begin{equation}\label{eq:def:f1:Cinfty:MMNN:vs:FCNN}
    f_1(x) = \frac{1}{1 + 2x^2}\sum_{i=-n}^{n}   (-1)^{(i\bmod 3)}\cdot\frac{|i| + n}{ n}  \cdot g\left( (2n+1)\left(x-\frac{i}{n+1} \right)\right),\\
\end{equation}
where \( n = 36 \), and \( (i \bmod 3)\in \{0,1,2\} \) represents the remainder when an integer \( i\in\Z \) is divided by 3.
Here, $g$ serves as a basis function, defined as
\begin{equation*}
   g(x) = \frac{g_0(x+1) g_0(1-x)}{g_0^2(1)},\quad \tn{where}\quad g_0(x) =
\begin{cases}
 \exp\left(-\frac{1}{x^2}\right) &\tn{if}\   x > 0, \\
 0 & \tn{if}\  x \leq 0.
\end{cases}
\end{equation*}
It is easy to verify that \( g_0 \in C^\infty(\mathbb{R}) \), and hence \( f_1 \in C^\infty(\mathbb{R}) \). See Figure~\ref{fig:fig:f1:g:f2:and:zoomin} for illustrations of \( f_1 \) and \( g \).
Next, we choose two nonsmooth oscillatory functions $f_2, f_3 \in C^0(\R)\setminus  C^1(\R)$ given by 
\begin{equation}
\label{eq:def:f2:C0:MMNN:vs:FCNN}
    f_2(x) = 
\frac{1 + 6x^8}{1 + 8x^6} \cdot 
\left( 120x^2 - 2 \left\lfloor \frac{120x^2 + 1}{2} \right\rfloor \right)^2
\end{equation}
and 
\begin{equation}
\label{eq:def:f3:C0:MMNN:vs:FCNN}
    f_3(x) = 
\frac{1 + 6x^8}{1 + 8x^6} \cdot 
\left( 32x  - 2 \left\lfloor \frac{32 x  + 1}{2} \right\rfloor \right)^2,
\end{equation}
where $\lfloor \cdot \rfloor$ denotes the floor function.
See Figure~\ref{fig:fig:f1:g:f2:and:zoomin} for visual depictions of \( f_2 \) and \( f_3 \).
For \( f_3 \), we use different mini-batch sizes to demonstrate that MMNNs are less sensitive to the hyper-parameters of training and therefore more stable compared to FCNNs.

\begin{figure}[ht]
            \centering
  
            \begin{subfigure}[b]{0.72\textwidth}
                    \centering            
                    \includegraphics[width=0.999\textwidth]{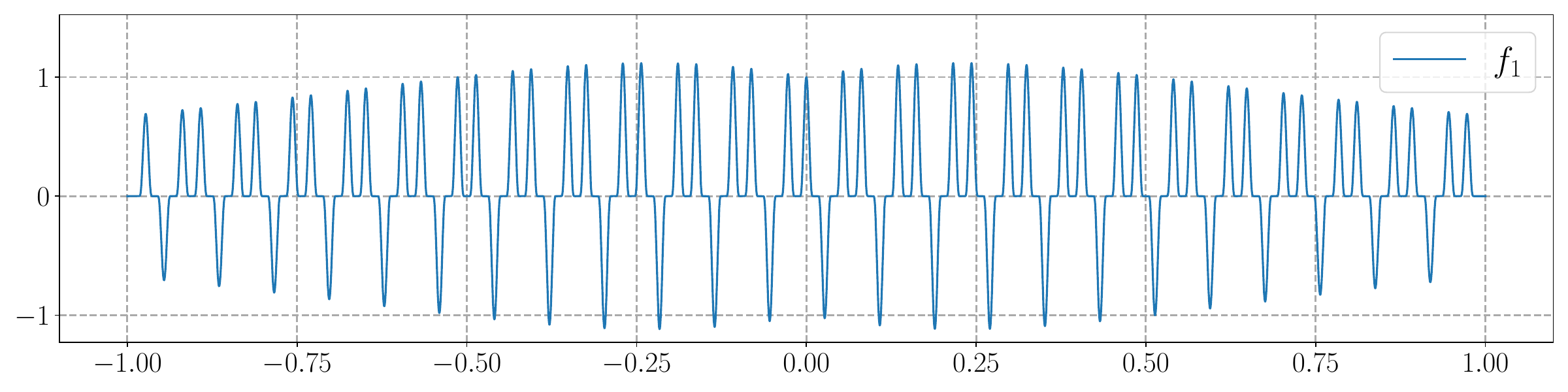}
                \end{subfigure}
            \begin{subfigure}[b]{0.27\textwidth}
                    \centering            \includegraphics[width=0.999\textwidth]{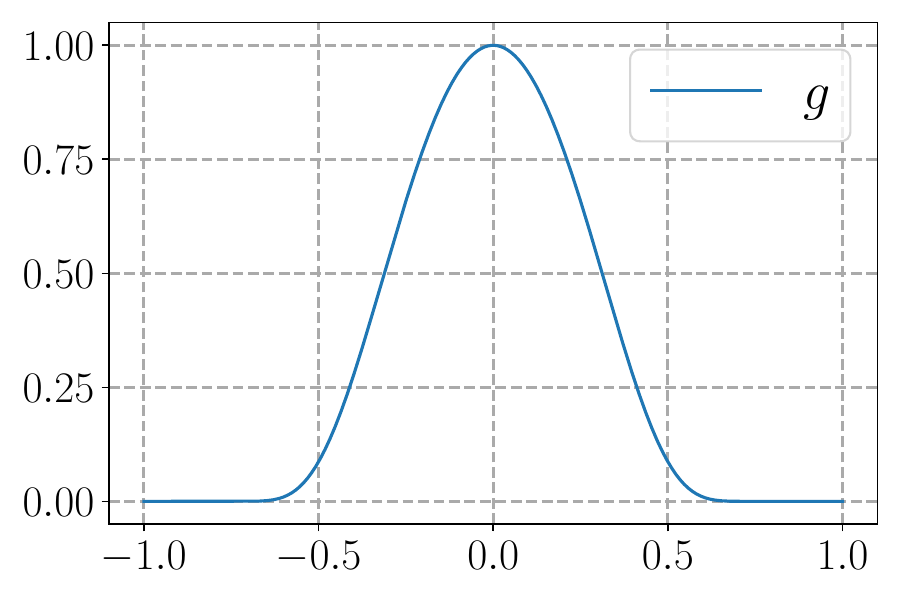}
                \end{subfigure}
      \begin{subfigure}[b]{0.999\textwidth}
                    \centering            
                    \includegraphics[width=0.999\textwidth]{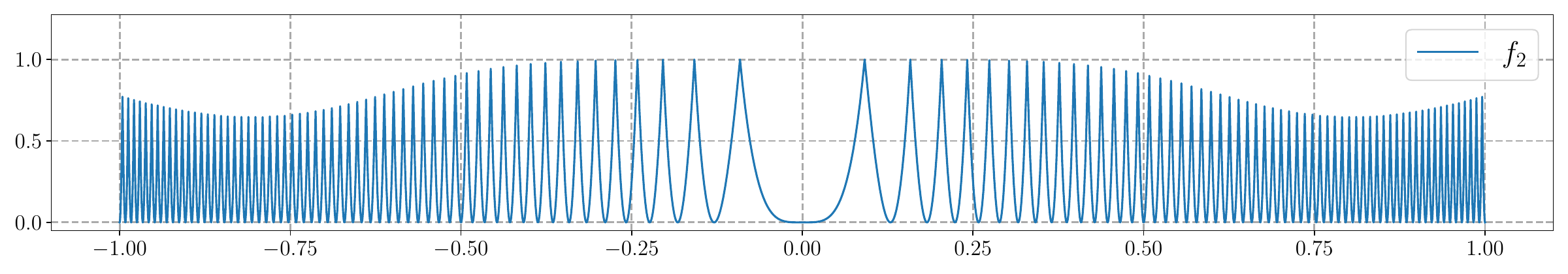}
                \end{subfigure}
                      \begin{subfigure}[b]{0.999\textwidth}
                    \centering            
                    \includegraphics[width=0.999\textwidth]{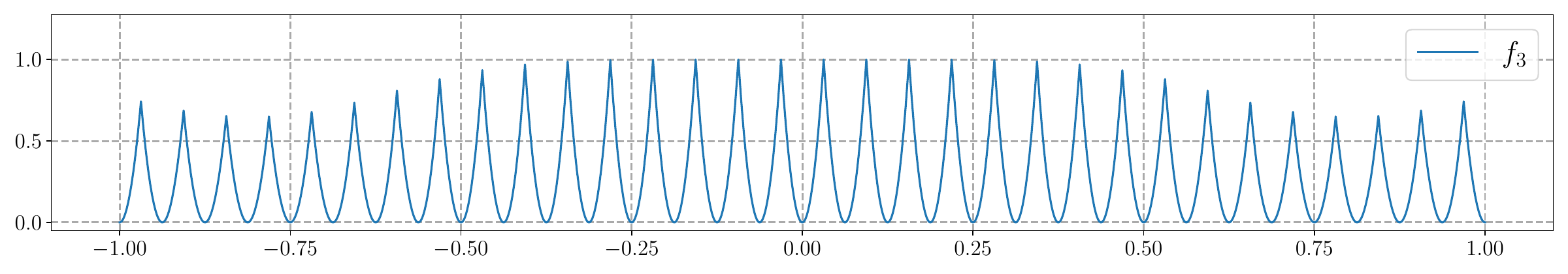}
                \end{subfigure}
        
\caption{Illustrations of $f_1$, $g$, $f_2$, and $f_3$.}
    \label{fig:fig:f1:g:f2:and:zoomin}   
\end{figure}

We employ various network architectures to approximate these functions and evaluate their performance. Notably, these tests are conducted using double precision rather than the default single precision to ensure precise comparisons.
For the test corresponding to \( f_1 \), we use a total of 3000 uniformly sampled points from \( [-1,1] \) for training. The mini-batch size is set to 3000, meaning all samples are trained simultaneously. The learning rate is defined as \( 10^{-3} \times 0.9^{\lfloor k/10000 \rfloor} \), where \( k = 1,2,\dots,1000000 \) denotes the epoch number. 
For the test error evaluation, we use another set of 3,000 samples drawn from the uniform distribution \( \mathcal{U}(-1,1) \).
We emphasize that the mini-batch method is not used for \( f_1 \) because our tests indicate that while mini-batching preserves the approximation of the original function, it leads to poor derivative approximation by only including function values in the cost function. As we can see from Table~\ref{tab:error:comparison:MMNNs:vs:FCNNs}, \sine-activated MMNNs perform best as the target function $f_1$ is smooth. We point out that the errors for derivatives in Table~\ref{tab:error:comparison:MMNNs:vs:FCNNs}  are relative errors, as absolute errors for derivatives can be misleading (the \( L^\infty \)-norm of \( f_1^\dprime \) exceeds 70,000). 
The accurate approximation of derivatives is surprising, given the complexity of the target function and the fact that the cost function is formulated solely based on the function values, meaning that no information about \( f_1^\prime\) or \( f_1^\dprime\) is incorporated into the optimization process.

\begin{table}[ht]
	\centering  
 \setlength{\tabcolsep}{0.68em} 
 \renewcommand{\arraystretch}{1.15}
 \caption{Comparison of test errors. Training is conducted in double precision, with the cost function relying only on function values and excluding derivatives. Relative errors are reported for derivatives, as the \( L^\infty \)-norm of \( f_1^\dprime \) exceeds 70000, making absolute errors misleading.
}
\label{tab:error:comparison:MMNNs:vs:FCNNs}
	\resizebox{0.985\textwidth}{!}{ 
		\begin{tabular}{ccccccccc@{\hspace{10pt}}c} 
			\toprule
             
            \multicolumn{2}{c}
            {\#parameters (trained / all)} &\multicolumn{2}{c}{35235 / \textbf{72993}} &\multicolumn{2}{c}{\textbf{72981} / 151281} &\multicolumn{2}{c}{\textbf{72961} / \textbf{72961}} && \\
            \cmidrule(lr){3-4}
            \cmidrule(lr){5-6}
            \cmidrule(lr){7-8}
        \rowcolor{mygray}   
        & 
        &\multicolumn{2}{c}{MMNN of size (434,16,6)} &\multicolumn{2}{c}{MMNN of size (900,16,6)} &\multicolumn{2}{c}{FCNN of size (120,--,6)} 
        &\multicolumn{2}{c}{\#training-samples}
        \\
            \cmidrule(lr){3-4}
            \cmidrule(lr){5-6}
            \cmidrule(lr){7-8}
            \cmidrule(lr){9-10}
      target function &   {activation}
      &          MSE 
    &  MaxE &    MSE 
    &  MaxE &    MSE 
    &  MaxE &
      mini-batch & all
    \\
    
			\midrule

$f_1$ & $\mathtt{ReLU}$ &  $ 4.47 \times 10^{-5} $  &  $ 6.34 \times 10^{-2} $  &  $ 2.24 \times 10^{-5} $  &  $ 4.06 \times 10^{-2} $  &  $ 2.31 \times 10^{-4} $  &  $ 1.93 \times 10^{-1} $  &  3000 & 3000  
 \\ 

\rowcolor{mygray}$f_1$ & $\mathtt{GELU}$ &  $ 5.54 \times 10^{-5} $  &  $ 5.94 \times 10^{-2} $  &  $ 1.45 \times 10^{-4} $  &  $ 1.00 \times 10^{-1} $  &  $ 1.63 \times 10^{-3} $  &  $ 2.91 \times 10^{-1} $  &  3000 & 3000  
 \\ 

 $f_1$ & $\mathtt{tanh}$ &  $ 4.12 \times 10^{-5} $  &  $ 2.90 \times 10^{-2} $  &  $ 4.91 \times 10^{-6} $  &  $ 1.24 \times 10^{-2} $  &  $ 2.67 \times 10^{-3} $  &  $ 3.69 \times 10^{-1} $  &  3000 & 3000  
 \\ 

\rowcolor{mygray}$f_1$ & $\mathtt{sine}$ &  $ 7.30 \times 10^{-7} $  &  $ 3.69 \times 10^{-3} $  &  $ \bm{3.43 \times 10^{-8}} $  &  $ \bm{8.37 \times 10^{-4}} $  &  $ 2.62 \times 10^{-5} $  &  $ 2.35 \times 10^{-2} $  &  3000 & 3000  
 \\ 

 $f_1$ & $\mathtt{cosine}$ &  $ 6.89 \times 10^{-6} $  &  $ 9.54 \times 10^{-3} $  &  $ 1.35 \times 10^{-7} $  &  $ 1.85 \times 10^{-3} $  &  $ 2.80 \times 10^{-6} $  &  $ 7.76 \times 10^{-3} $  &  3000 & 3000  
 \\ 

\rowcolor{mygray}$f_1$ & $\mathtt{SinTU}_{0}$ &  $ 8.11 \times 10^{-5} $  &  $ 8.65 \times 10^{-2} $  &  $ 4.40 \times 10^{-6} $  &  $ 1.81 \times 10^{-2} $  &  $ 2.15 \times 10^{-4} $  &  $ 1.52 \times 10^{-1} $  &  3000 & 3000  
 \\ 

 $f_1$ & $\mathtt{SinTU}_{-\pi}$ &  $ 1.25 \times 10^{-5} $  &  $ 2.76 \times 10^{-2} $  &  $ 3.65 \times 10^{-7} $  &  $ 4.90 \times 10^{-3} $  &  $ 4.14 \times 10^{-4} $  &  $ 1.44 \times 10^{-1} $  &  3000 & 3000  
 \\ 

\rowcolor{mygray}$f_1$ & $\mathtt{SinTU}_{-2\pi}$ &  $ 4.19 \times 10^{-6} $  &  $ 1.30 \times 10^{-2} $  &  $ 3.97 \times 10^{-7} $  &  $ 4.93 \times 10^{-3} $  &  $ 1.67 \times 10^{-5} $  &  $ 2.63 \times 10^{-2} $  &  3000 & 3000  
 \\

\midrule

  $f^\prime_1$ & $\mathtt{GELU}$ &  $ 1.24 \times 10^{-3} $  &  $ 2.61 \times 10^{-1} $  &  $ 2.34 \times 10^{-3} $  &  $ 2.97 \times 10^{-1} $  &  $ 7.74 \times 10^{-3} $  &  $ 5.16 \times 10^{-1} $  &    &   
 \\ 

\rowcolor{mygray}$f^\prime_1$ & $\mathtt{tanh}$ &  $ 7.50 \times 10^{-4} $  &  $ 1.11 \times 10^{-1} $  &  $ 1.24 \times 10^{-4} $  &  $ 5.67 \times 10^{-2} $  &  $ 1.10 \times 10^{-2} $  &  $ 6.31 \times 10^{-1} $  &    &   
 \\ 

 $f^\prime_1$ & $\mathtt{sine}$ &  $ 2.64 \times 10^{-5} $  &  $ 2.12 \times 10^{-2} $  &  $\bm{ 1.27 \times 10^{-6} }$  &  $ \bm{5.12 \times 10^{-3}} $  &  $ 5.83 \times 10^{-4} $  &  $ 1.06 \times 10^{-1} $  &    &   
 \\ 

\rowcolor{mygray}$f^\prime_1$ & $\mathtt{cosine}$ &  $ 1.77 \times 10^{-4} $  &  $ 4.76 \times 10^{-2} $  &  $ 5.35 \times 10^{-6} $  &  $ 1.15 \times 10^{-2} $  &  $ 8.91 \times 10^{-5} $  &  $ 3.84 \times 10^{-2} $  &    &   
 \\ 
\midrule

 $f^\dprime_1$ & $\mathtt{GELU}$ &  $ 1.03 \times 10^{-2} $  &  $ 1.22 \times 10^{0} $  &  $ 1.55 \times 10^{-2} $  &  $ 9.39 \times 10^{-1} $  &  $ 3.10 \times 10^{-2} $  &  $ 1.39 \times 10^{0} $  &    &   
 \\ 

\rowcolor{mygray}$f^\dprime_1$ & $\mathtt{tanh}$ &  $ 5.40 \times 10^{-3} $  &  $ 2.79 \times 10^{-1} $  &  $ 9.92 \times 10^{-4} $  &  $ 1.59 \times 10^{-1} $  &  $ 3.02 \times 10^{-2} $  &  $ 8.62 \times 10^{-1} $  &    &   
 \\ 

 $f^\dprime_1$ & $\mathtt{sine}$ &  $ 2.32 \times 10^{-4} $  &  $ 7.40 \times 10^{-2} $  &  $ \bm{7.82 \times 10^{-6} }$  &  $ \bm{1.37 \times 10^{-2} }$  &  $ 4.45 \times 10^{-3} $  &  $ 2.71 \times 10^{-1} $  &    &   
 \\ 

\rowcolor{mygray}$f^\dprime_1$ & $\mathtt{cosine}$ &  $ 1.48 \times 10^{-3} $  &  $ 1.30 \times 10^{-1} $  &  $ 4.12 \times 10^{-5} $  &  $ 5.98 \times 10^{-2} $  &  $ 7.87 \times 10^{-4} $  &  $ 1.15 \times 10^{-1} $  &    &   
 \\

 \midrule

$f_2$ & $\mathtt{ReLU}$ &  $ 4.15 \times 10^{-3} $  &  $ 4.87 \times 10^{-1} $  &  $ 1.53 \times 10^{-3} $  &  $ 4.37 \times 10^{-1} $  &  $ 2.13 \times 10^{-2} $  &  $ 6.35 \times 10^{-1} $  &  3000 & 60000  
 \\ 

\rowcolor{mygray}$f_2$ & $\mathtt{GELU}$ &  $ 4.15 \times 10^{-3} $  &  $ 4.72 \times 10^{-1} $  &  $ 1.24 \times 10^{-3} $  &  $ 4.15 \times 10^{-1} $  &  $ 1.33 \times 10^{-2} $  &  $ 5.20 \times 10^{-1} $  &  3000 & 60000  
 \\ 

 $f_2$ & $\mathtt{tanh}$ &  $ 4.19 \times 10^{-3} $  &  $ 5.16 \times 10^{-1} $  &  $ 1.42 \times 10^{-4} $  &  $ 1.51 \times 10^{-1} $  &  $ 1.21 \times 10^{-2} $  &  $ 5.56 \times 10^{-1} $  &  3000 & 60000  
 \\ 

\rowcolor{mygray}$f_2$ & $\mathtt{sine}$ &  $ 4.30 \times 10^{-5} $  &  $ 7.36 \times 10^{-2} $  &  $ 4.84 \times 10^{-6} $  &  $ 2.88 \times 10^{-2} $  &  $ 9.07 \times 10^{-5} $  &  $ 9.22 \times 10^{-2} $  &  3000 & 60000  
 \\ 

 $f_2$ & $\mathtt{cosine}$ &  $ 3.17 \times 10^{-4} $  &  $ 1.28 \times 10^{-1} $  &  $ 5.65 \times 10^{-6} $  &  $ 3.15 \times 10^{-2} $  &  $ 5.68 \times 10^{-5} $  &  $ 7.86 \times 10^{-2} $  &  3000 & 60000  
 \\ 

\rowcolor{mygray}$f_2$ & $\mathtt{SinTU}_{0}$ &  $ 2.10 \times 10^{-3} $  &  $ 4.61 \times 10^{-1} $  &  $ 2.51 \times 10^{-6} $  &  $ 2.61 \times 10^{-2} $  &  $ 2.00 \times 10^{-2} $  &  $ 6.19 \times 10^{-1} $  &  3000 & 60000  
 \\ 

 $f_2$ & $\mathtt{SinTU}_{-\pi}$ &  $ 3.61 \times 10^{-5} $  &  $ 7.42 \times 10^{-2} $  &  $ \bm{1.28 \times 10^{-6} }$  &  $\bm{ 2.31 \times 10^{-2} }$  &  $ 6.14 \times 10^{-3} $  &  $ 5.31 \times 10^{-1} $  &  3000 & 60000  
 \\ 

\rowcolor{mygray}$f_2$ & $\mathtt{SinTU}_{-2\pi}$ &  $ 3.46 \times 10^{-5} $  &  $ 7.03 \times 10^{-2} $  &  $ 5.04 \times 10^{-6} $  &  $ 3.05 \times 10^{-2} $  &  $ 3.05 \times 10^{-3} $  &  $ 3.47 \times 10^{-1} $  &  3000 & 60000  
 \\

\midrule

$f_3$ & $\mathtt{sine}$ &  $ 1.52 \times 10^{-7} $  &  $ 7.86 \times 10^{-3} $  &  $ 7.68 \times 10^{-8} $  &  $ 6.06 \times 10^{-3} $  &  $ 3.04 \times 10^{-2} $  &  $ 6.99 \times 10^{-1} $  & 500 & 18000  
 \\ 

\rowcolor{mygray} $f_3$ & $\mathtt{sine}$ &  $ 1.11 \times 10^{-6} $  &  $ 2.23 \times 10^{-2} $  &  $ 1.27 \times 10^{-7} $  &  $ 6.58 \times 10^{-3} $  &  $ 6.69 \times 10^{-2} $  &  $ 6.68 \times 10^{-1} $  & 1000 & 18000  
 \\ 

  $f_3$ & $\mathtt{sine}$ &  $ 3.52 \times 10^{-7} $  &  $ 1.07 \times 10^{-2} $  &  $ 8.75 \times 10^{-8} $  &  $ 6.28 \times 10^{-3} $  &  $ 2.43 \times 10^{-4} $  &  $ 1.52 \times 10^{-1} $  & 1500 & 18000  
 \\ 

\rowcolor{mygray} $f_3$ & $\mathtt{sine}$ &  $ 5.11 \times 10^{-7} $  &  $ 1.10 \times 10^{-2} $  &  $ \bm{6.41 \times 10^{-8}} $  &  $\bm{ 5.33 \times 10^{-3} }$  &  $ 5.86 \times 10^{-6} $  &  $ 3.38 \times 10^{-2} $  & 2000 & 18000  
 \\ 
 \bottomrule
		\end{tabular} 
	}
\end{table}

\begin{figure}[ht]
            \centering
            \begin{subfigure}[b]{0.49\textwidth}
                    \centering            
                    \includegraphics[width=0.999\textwidth]{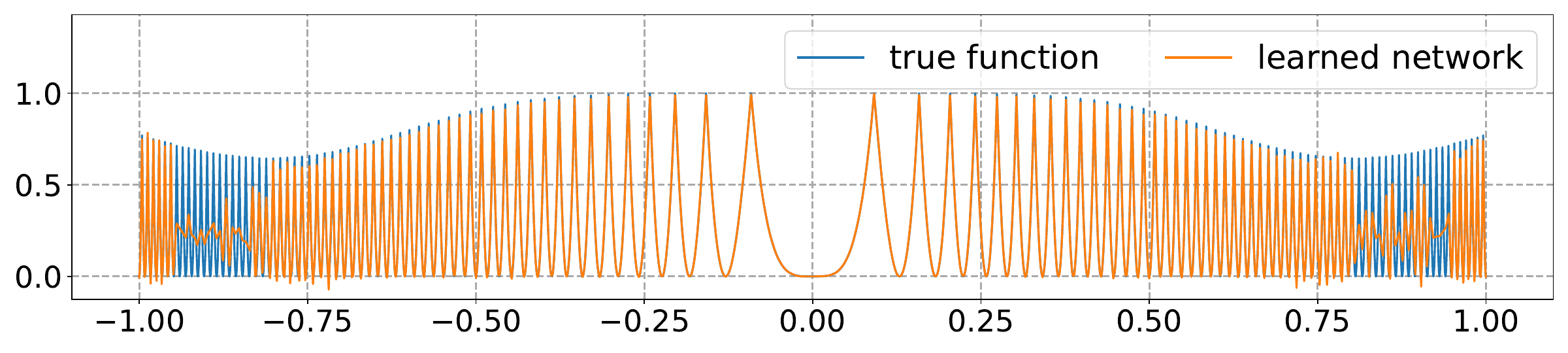}
                    \subcaption{\ReLU{} MMNN of size $(434,16,6)$. 
                    }
                \end{subfigure}
                \hfill
                \begin{subfigure}[b]{0.49\textwidth}
                    \centering   \includegraphics[width=0.999\textwidth]{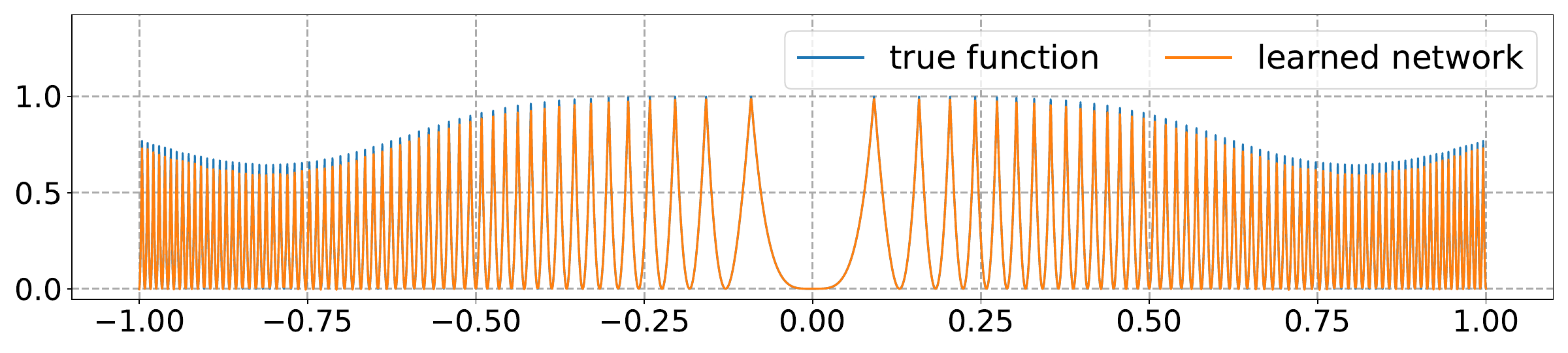}
                    \subcaption{\Sine{} MMNN of size $(434,16,6)$. 
                    }
                \end{subfigure}
                \\
            \begin{subfigure}[b]{0.490\textwidth}
                    \centering            \includegraphics[width=0.999\textwidth]{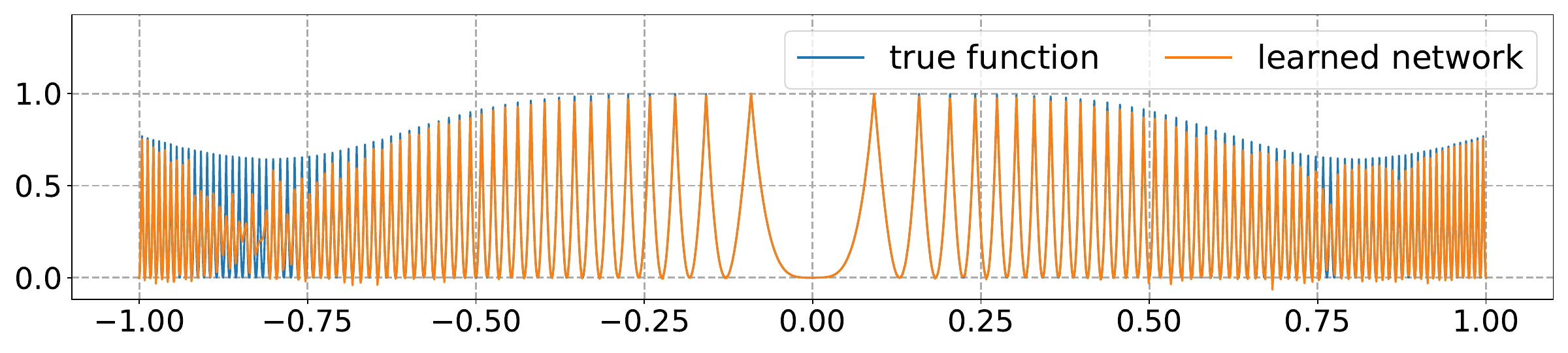}
                    \subcaption{\ReLU{} MMNN of size $(900,16,6)$.
                    }
                \end{subfigure}
                \hfill
                            \begin{subfigure}[b]{0.490\textwidth}
                    \centering            \includegraphics[width=0.999\textwidth]{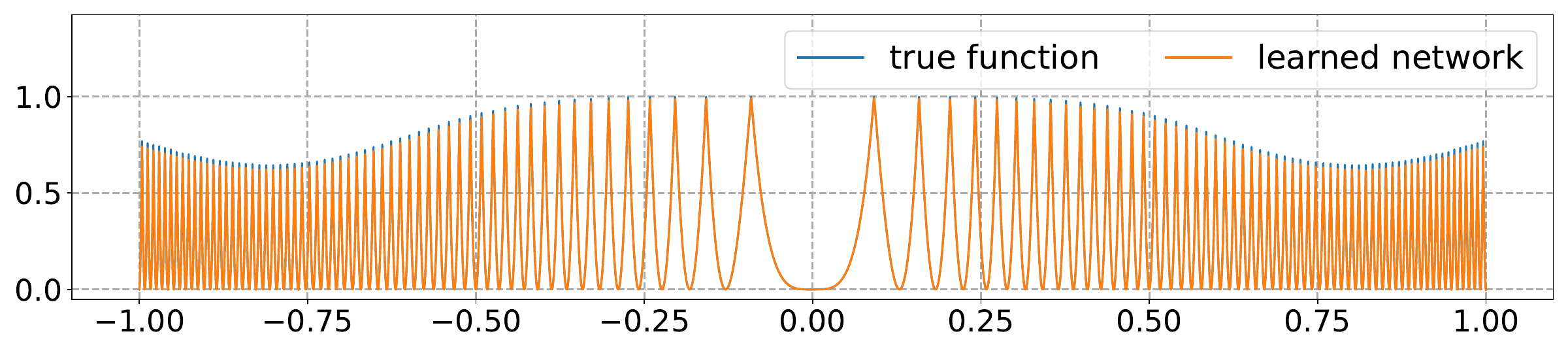}
                    \subcaption{\Sine{} MMNN of size $(900,16,6)$.
                    }
                \end{subfigure}
                \\
            \begin{subfigure}[b]{0.490\textwidth}
                    \centering            \includegraphics[width=0.999\textwidth]{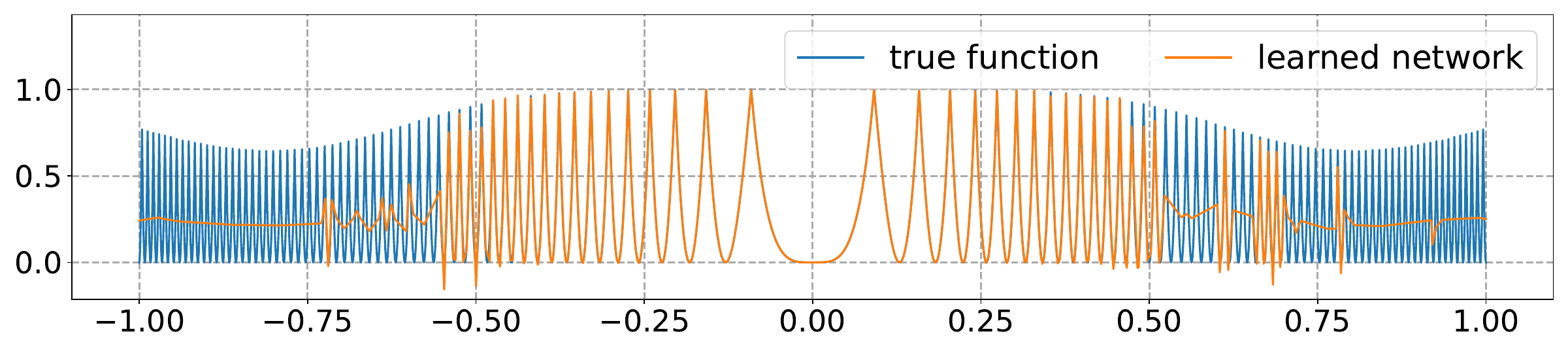}
                    \subcaption{\ReLU{} FCNN of size $(120,-,6)$. 
                    }
                \end{subfigure}
                \hfill
                \begin{subfigure}[b]{0.490\textwidth}
                    \centering            \includegraphics[width=0.999\textwidth]{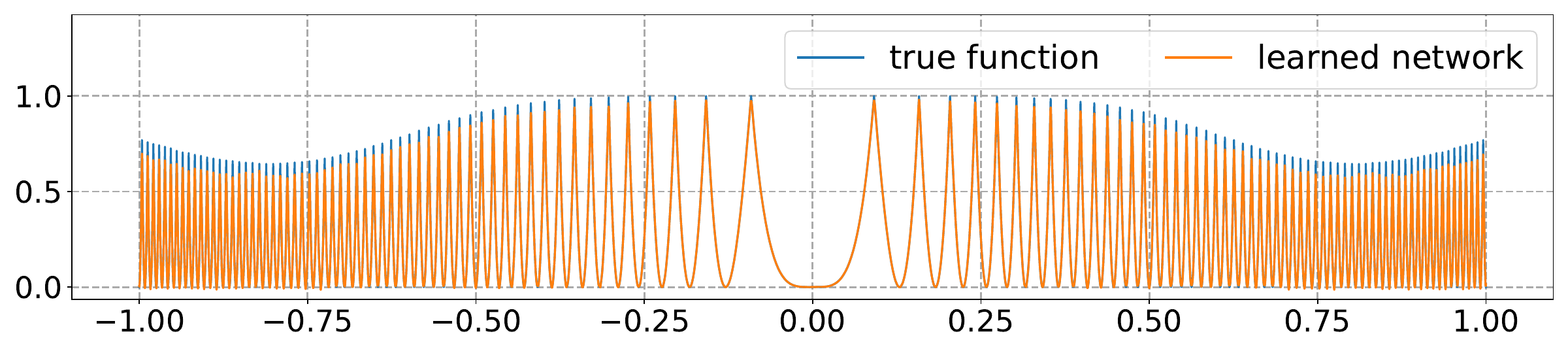}
                    \subcaption{\Sine{} FCNN of size $(120,-,6)$. 
                    }
                \end{subfigure}
                \caption{Illustrations of the true function $f_2$ and learned networks.
                }
    \label{fig:MMNNvsFCNN:123}   
\end{figure}

For the test corresponding to \( f_2 \), we use a total of 60000 uniformly sampled points from \( [-1,1] \) for training. The mini-batch size is set to 3000, and the learning rate is defined as \( 10^{-3} \times 0.9^{\lfloor k/500 \rfloor} \), where \( k = 1,2,\dots,50000 \) denotes the epoch number. 
To ensure accurate computation of the test error, we select another set of 60000 test samples from the uniform distribution \( \mathcal{U}(-1,1) \). 
As illustrated in Figure~\ref{fig:MMNNvsFCNN:123}, the MMNN architecture exhibits superior efficiency relative to the FCNN. Additionally, the \sine{} activation function proves more effective than \ReLU{} for approximating the complex target function \( f_2 \). It is worth noting that our training process involved a sufficiently large number of iterations, effectively eliminating the possibility of inadequate training as a contributing factor. Moreover, expanding the size of the MMNN would further substantially improve its performance.
As shown in Table~\ref{tab:error:comparison:MMNNs:vs:FCNNs}, MMNNs generally outperform FCNNs, regardless of the activation function. This advantage may stem from the simpler optimization landscape of MMNNs, which enables more efficient training.
Furthermore, Table~\ref{tab:error:comparison:MMNNs:vs:FCNNs} highlights that \SinTU{s} achieve the best performance, which is expected since \( f_2 \) contains many singularities that \SinTU{s} are well-suited to handle. Notably, even in this inherently unfavorable setting for \sine-based models, \sine{}-activated MMNNs still perform well. Thus, when the properties of the target function are uncertain in practical applications, trying the \sine{} activation function first is a reasonable strategy.
When the mini-batch size is relatively large, the number of training epochs tends to be high, making the process time-consuming. More importantly, even when \sine-activated FCNNs are given sufficiently large mini-batches and a sufficient number of training epochs, their final performance still falls short of \sine-activated MMNNs.

For the test corresponding to \( f_3 \), we use a total of 18000 uniformly sampled points from \( [-1,1] \) for training. The mini-batch size is set to \( n_{\tn{mbs}} \in \{500, 1000, 1500, 2000\} \), and the learning rate is defined as \( 10^{-3} \times 0.9^{\lfloor 2k/n_{\tn{mbs}} \rfloor} \), where \( k = 1,2,\dots,50n_{\tn{mbs}} \) denotes the epoch number. 
To ensure accurate computation of the test error, we select another set of 18000 test samples from the uniform distribution \( \mathcal{U}(-1,1) \).
As shown in Table~\ref{tab:error:comparison:MMNNs:vs:FCNNs}, FCNNs are relatively sensitive to the hyper-parameters of training. If the mini-batch size is too small, training may fail. However, MMNNs are more stable and succeed under various settings.

\subsection{MMNNs: \texttt{Sine} Versus Other Activation Functions}
\label{sec:sine_vs_others}

In this section, we compare the performance of MMNNs using different activation functions to demonstrate that FMMNNs consistently produce the best results.  The three target functions used in the tests are $f_4:[-1,1]\to\R$, $f_5:[-1,1]^2\to\R$, and $f_6:[-1,1]^3\to\R$, which are given by
\[
f_4(x) =
0.6\sin(200\pi x)+0.8\cos(160\pi x^2)
+ \frac{1 + 8x^8}{1 + 10x^4} \cdot \left| 180x - 2 \left\lfloor \frac{180x + 1}{2} \right\rfloor \right|,
\]
\[f_5(x_1, x_2) = \sum_{i=1}^2 \sum_{j=1}^2 a_{ij} \sin(b_i x_i +  c_{ij} x_i x_j)\cdot  \big|\cos( b_j x_j +  d_{ij} x_i^2)\big|,\]
and
$$
f_6(x_1, x_2,x_3) = \sum_{i=1}^3 \sum_{j=1}^3 \tildea_{ij} \sin(\tildeb_i x_i + \tildec_{ij} x_i x_j) \cdot\big|\cos(\tildeb_j x_j + \tilded_{ij} x_i^2)\big|,
$$
where  
$$
(a_{ij}) = \begin{bmatrix*} 0.3 & 0.2 \\ 0.2 & 0.3 \end{bmatrix*}, 
\
(b_i) = \begin{bmatrix*} 12\pi \\  8\pi \end{bmatrix*}, 
\
(c_{ij}) = \begin{bmatrix*} 4\pi & 18\pi \\ 16\pi & 10\pi \end{bmatrix*}, 
\
(d_{ij}) = \begin{bmatrix*} 14\pi & 12\pi \\ 18\pi & 10\pi \end{bmatrix*},
$$
$$
(\tildea_{ij}) = \begin{bmatrix*} 
0.3 & 0.1 & 0.4 \\
0.2 & 0.3 & 0.1 \\
0.2 & 0.1 & 0.3
\end{bmatrix*}, 
\
(\tildeb_i) = \begin{bmatrix*} 
\pi \\ 4\pi \\ 3\pi
\end{bmatrix*}, 
\
(\tildec_{ij}) = \begin{bmatrix*} 
2\pi & \pi & 3\pi \\
2\pi & 3\pi & 2\pi \\
3\pi & \pi & \pi
\end{bmatrix*}, 
\ \tn{and} \
(\tilded_{ij}) = \begin{bmatrix*} 
2\pi & 3\pi &  \pi \\
 \pi & 3\pi & 2\pi \\
 \pi & 2\pi & 3\pi
\end{bmatrix*}.
$$
Note that all three functions are only continuous but not differentiable. Illustrations of these three functions are shown in Figure~\ref{fig:f:123}.

\begin{figure}[ht]
            \centering
            \,\hfill
            \begin{subfigure}[b]{0.3227300245\textwidth}
                    \centering            
                    \includegraphics[width=0.8999\textwidth]{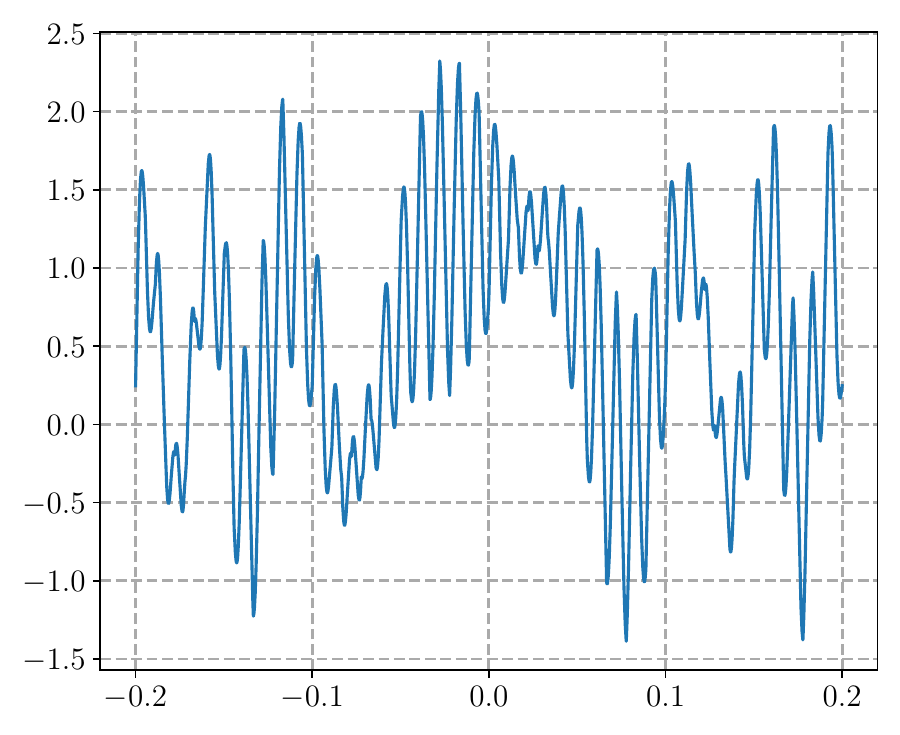}
                    \subcaption{$f_4$ limited on $[-0.2,0.2]$.}
                \end{subfigure}
                \hfill
            \begin{subfigure}[b]{0.3227300245\textwidth}
                    \centering            \includegraphics[width=0.8999\textwidth]{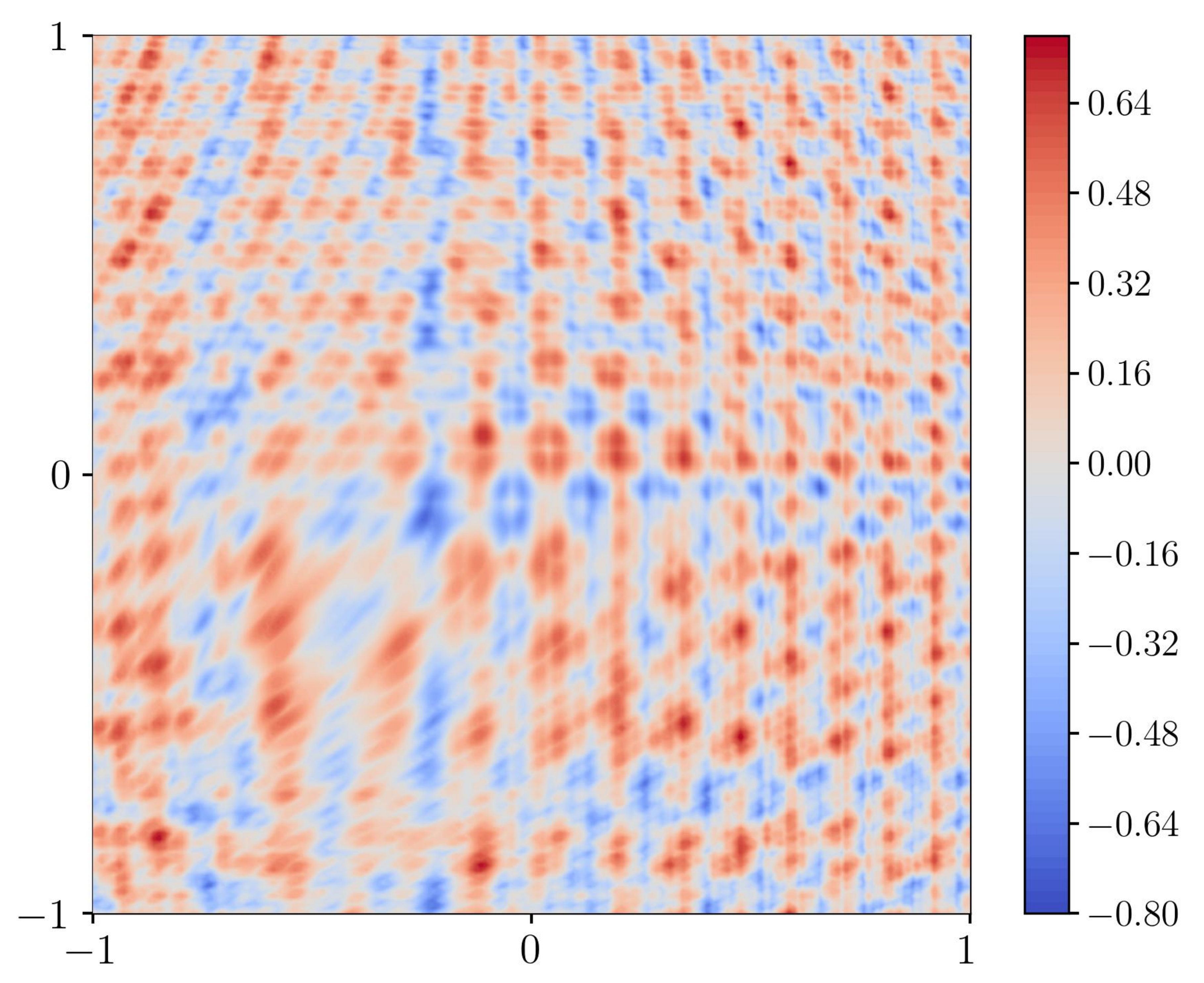}
                    \subcaption{$f_5$.}
                \end{subfigure}
                \hfill
            \begin{subfigure}[b]{0.3227300245\textwidth}
                    \centering            \includegraphics[width=0.8999\textwidth]{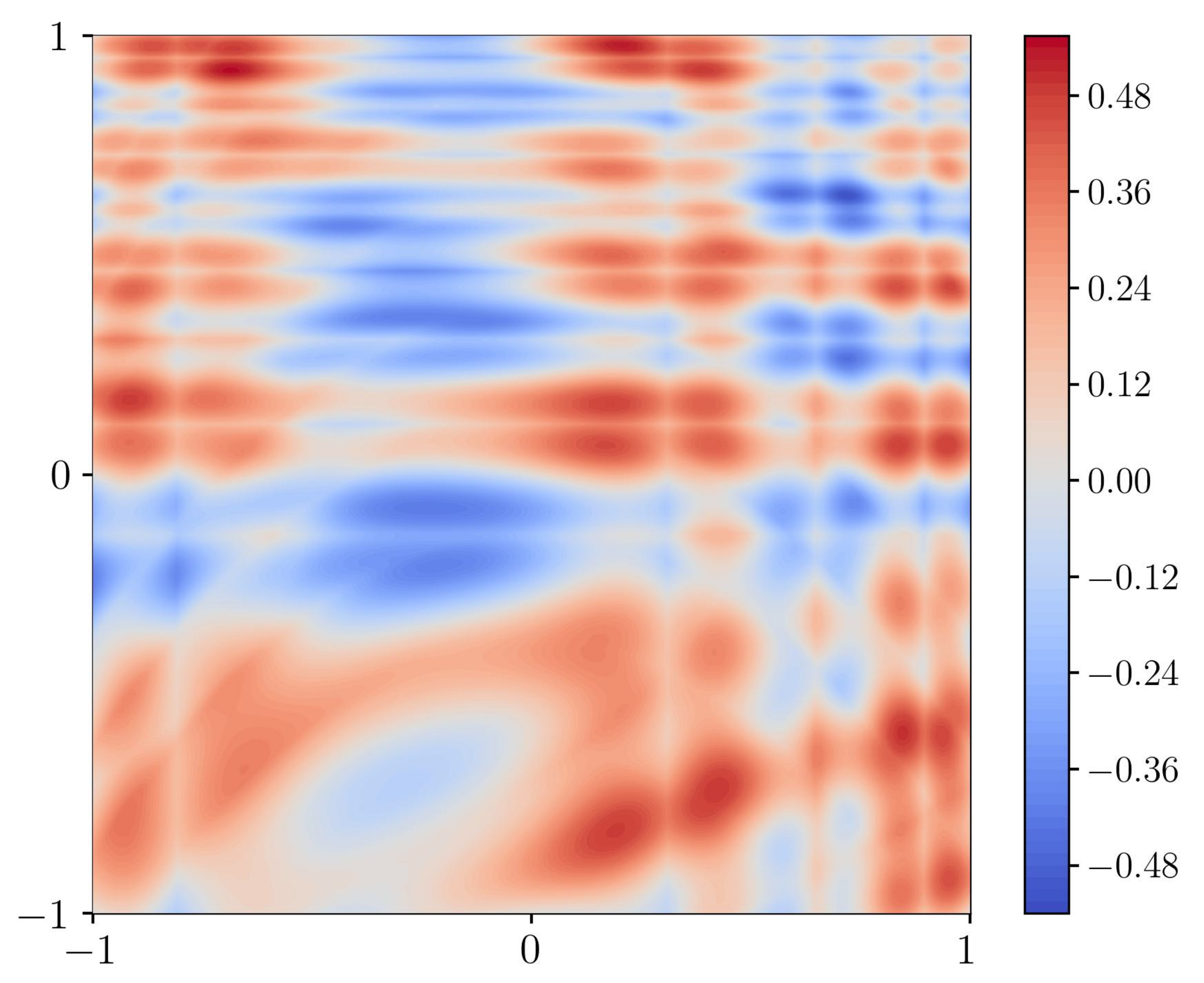}
                    \subcaption{$f_6(x,y,z=0)$.}
                \end{subfigure}
\hfill\, \\[5pt]
       \,\hfill
            \begin{subfigure}[b]{0.3227300245\textwidth}
                    \centering            
                    \includegraphics[width=0.8999\textwidth]{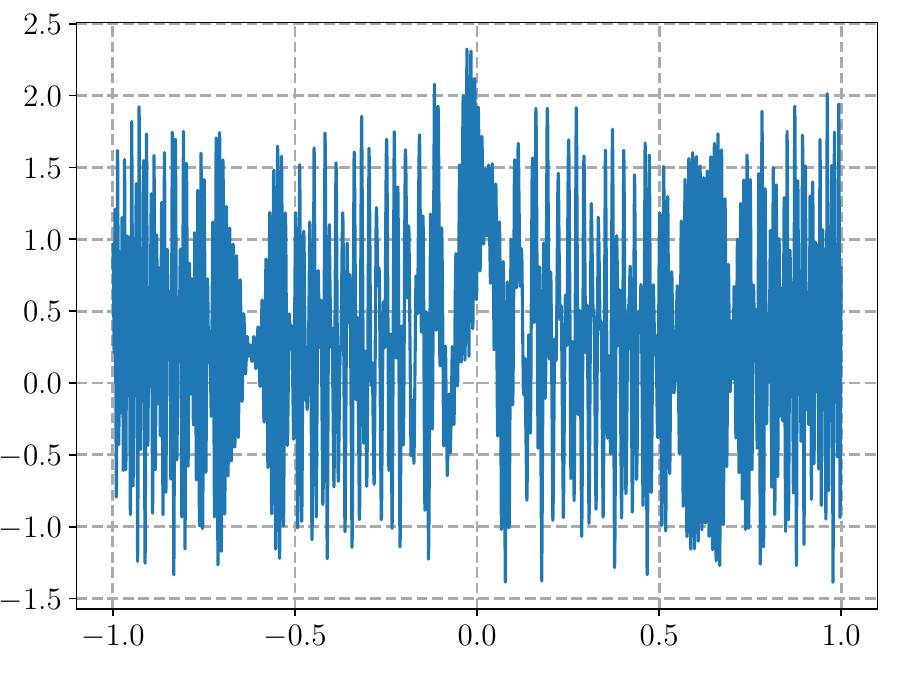}
                    \subcaption{$f_4$.}
                \end{subfigure}
                \hfill
            \begin{subfigure}[b]{0.3227300245\textwidth}
                    \centering            \includegraphics[width=0.999\textwidth]{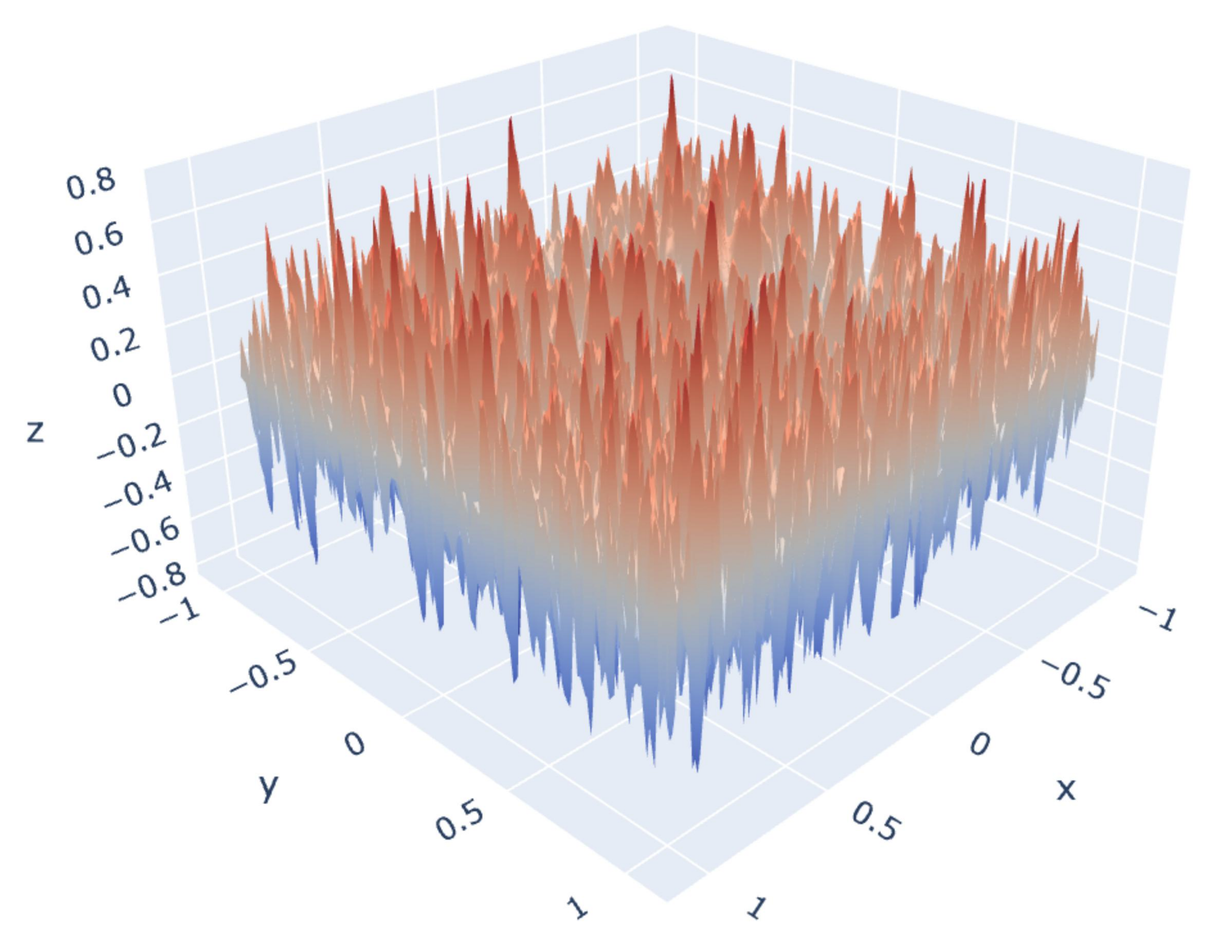}
                    \subcaption{$z=f_5(x,y)$.}
                \end{subfigure}
                \hfill
            \begin{subfigure}[b]{0.3227300245\textwidth}
                    \centering            \includegraphics[width=0.999\textwidth]{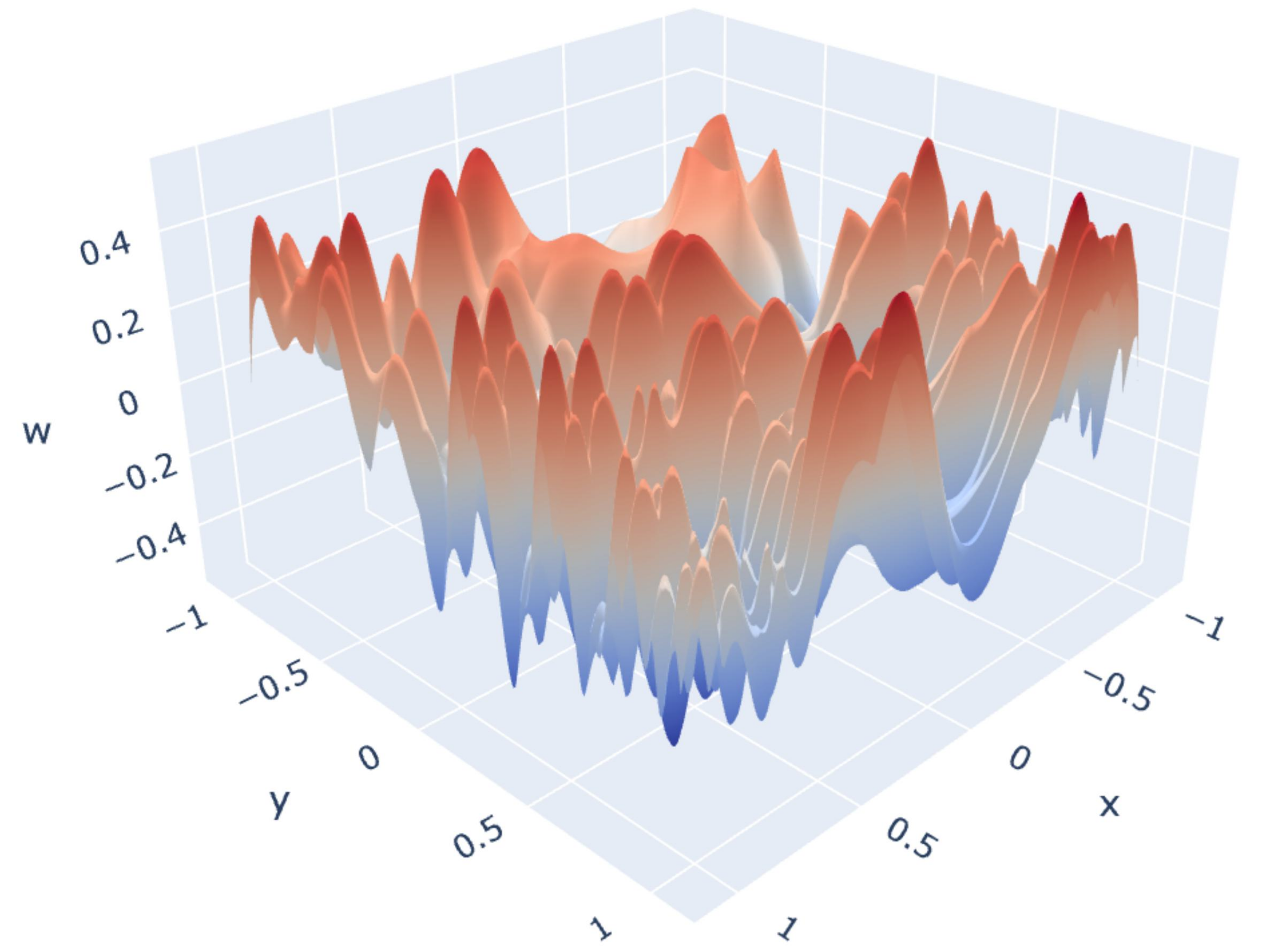}
                    \subcaption{$w=f_6(x,y,z=0)$.}
                \end{subfigure}
\hfill\,
\caption{Illustrations of $f_i$ for 
$i=4,5,6.$
}

    \label{fig:f:123}
    
\end{figure}


We employ MMNN structures with different activation functions to approximate the target functions and evaluate their performance.
For the one-dimensional case, we use 60000 uniformly sampled points from \([-1, 1]\) for training, with a mini-batch size of 600 and a learning rate defined as \( 10^{-3} \times 0.9^{\lfloor k/100 \rfloor} \), where \( k = 1, 2, \ldots, 10000 \) represents the epoch number. To ensure accurate computation of the test error, we select 60000 test samples from the uniform distribution in \( [-1,1] \).
In the two-dimensional case, \( 600^2 \) uniformly sampled points from \( [-1,1]^2 \) are used for training, with a mini-batch size of 1200 and a learning rate set to \( 10^{-3} \times 0.9^{\lfloor k/30 \rfloor} \), where \( k = 1, 2, \ldots, 3000 \). For test error evaluation, we select \( 300^2 \) samples from the uniform distribution in \( [-1,1]^2 \).
For the three-dimensional case, \( 150^3 \) points from \( [-1,1]^3 \) are uniformly sampled for training, with a mini-batch size of 1500 and a learning rate defined as \( 10^{-3} \times 0.9^{\lfloor k/4 \rfloor} \), where \( k = 1, 2, \ldots, 400 \). To ensure accurate computation of the test error, we select \( 100^3 \) samples from the uniform distribution in \( [-1,1]^3 \).





\begin{table}[ht]
	\centering  
 \setlength{\tabcolsep}{0.68em} 
 \renewcommand{\arraystretch}{1.15}
\caption{Comparison of test errors. Training is performed in single precision.}
	\label{tab:error:comparison:MMNNs}
	\resizebox{0.8292\textwidth}{!}{ 
		\begin{tabular}{ccccccccc} 
			\toprule
            {
            target function
            }
            & \multicolumn{2}{c}{$f_4:[-1,1]\to \R$}
            & \multicolumn{2}{c}{$f_5:[-1,1]^2\to \R$}
            & \multicolumn{2}{c}{$f_6:[-1,1]^3\to \R$}\\        
            \cmidrule(lr){2-3}
            \cmidrule(lr){4-5}
            \cmidrule(lr){6-7}
        \rowcolor{mygray}   
        &\multicolumn{2}{c}{MMNN of size (1024,16,6)} &\multicolumn{2}{c}{MMNN of size (1024,36,8)} &\multicolumn{2}{c}{ResMMNN of size (1024,36,10)}
        
        \\
            \cmidrule(lr){2-3}
            \cmidrule(lr){4-5}
            \cmidrule(lr){6-7}
        {activation} &         MSE 
    &  MaxE &    MSE 
    &  MaxE &    MSE 
    &  MaxE 
    \\
    
			\midrule

$\mathtt{ReLU}$ &  $ 3.52 \times 10^{-2} $  &  $ 1.57 \times 10^{0} $  &  $ 6.50 \times 10^{-5} $  &  $ 6.99 \times 10^{-2} $  &  $ 8.18 \times 10^{-5} $  &  $ 7.86 \times 10^{-2} $ 
 \\ 

\rowcolor{mygray}$\mathtt{ELU}$ &  $ 1.62 \times 10^{-1} $  &  $ 1.78 \times 10^{0} $  &  $ 2.43 \times 10^{-3} $  &  $ 2.68 \times 10^{-1} $  &  $ 6.70 \times 10^{-5} $  &  $ 7.03 \times 10^{-2} $ 
 \\ 

 $\mathtt{GELU}$ &  $ 1.51 \times 10^{-1} $  &  $ 1.61 \times 10^{0} $  &  $ 6.19 \times 10^{-5} $  &  $ 6.67 \times 10^{-2} $  &  $ 6.66 \times 10^{-5} $  &  $ 6.14 \times 10^{-2} $ 
 \\ 

\rowcolor{mygray}$\mathtt{sigmoid}$ &  $ 5.68 \times 10^{-1} $  &  $ 1.95 \times 10^{0} $  &  $ 4.97 \times 10^{-2} $  &  $ 8.00 \times 10^{-1} $  &  $ 1.04 \times 10^{-3} $  &  $ 1.91 \times 10^{-1} $ 
 \\ 

 $\mathtt{tanh}$ &  $ 1.84 \times 10^{-1} $  &  $ 1.77 \times 10^{0} $  &  $ 9.76 \times 10^{-4} $  &  $ 2.33 \times 10^{-1} $  &  $ 1.06 \times 10^{-4} $  &  $ 1.02 \times 10^{-1} $ 
 \\ 

\rowcolor{mygray}$\mathtt{sine}$ &  $ 1.16 \times 10^{-5} $  &  $ 3.18 \times 10^{-2} $  &  $ 2.26 \times 10^{-5} $  &  $ 5.18 \times 10^{-2} $  &  $ \bm{2.67 \times 10^{-5}} $  &  $ 5.22 \times 10^{-2} $ 
 \\ 

 $\mathtt{cosine}$ &  $ 1.55 \times 10^{-5} $  &  $ 3.50 \times 10^{-2} $  &  $ 2.46 \times 10^{-5} $  &  $ 4.26 \times 10^{-2} $  &  $ 3.06 \times 10^{-5} $  &  $ 5.74 \times 10^{-2} $ 
 \\ 


\rowcolor{mygray}  $\mathtt{SinTU}_{0}$ &  $ 2.14 \times 10^{-6} $  &  $ 3.04 \times 10^{-2} $  &  $ 3.18 \times 10^{-5} $  &  $ 5.18 \times 10^{-2} $  &  $ 4.21 \times 10^{-5} $  &  $ 6.17 \times 10^{-2} $ 
 \\ 

$\mathtt{SinTU}_{-\pi}$ &  $ \bm{1.72 \times 10^{-6} } $  &  $ \bm{2.80 \times 10^{-2}} $  &  $ 2.27 \times 10^{-5} $  &  $ 5.38 \times 10^{-2} $  &  $ 3.27 \times 10^{-5} $  &  $ 5.56 \times 10^{-2} $ 
 \\ 
\rowcolor{mygray}
 $\mathtt{SinTU}_{-2\pi}$ &  $ 5.87 \times 10^{-6} $  &  $ 3.17 \times 10^{-2} $  &  $ 1.62 \times 10^{-5} $  &  $ \bm{3.90 \times 10^{-2}} $  &  $ 2.85 \times 10^{-5} $  &  $ 5.27 \times 10^{-2} $ 
 \\ 

$\mathtt{SinTU}_{-4\pi}$ &  $ 1.21 \times 10^{-5} $  &  $ 3.17 \times 10^{-2} $  &  $ \bm{ 1.52 \times 10^{-5}} $  &  $ 4.73 \times 10^{-2} $  &  $ 2.68 \times 10^{-5} $  &  $ 5.43 \times 10^{-2} $ 
 \\ 
\rowcolor{mygray}
 $\mathtt{SinTU}_{-8\pi}$ &  $ 9.47 \times 10^{-6} $  &  $ 3.03 \times 10^{-2} $  &  $ 1.88 \times 10^{-5} $  &  $ 4.53 \times 10^{-2} $  &  $ 2.74 \times 10^{-5} $  &  $ \bm{5.16 \times 10^{-2}} $ 
 \\ 



 \bottomrule
		\end{tabular} 
	}
\end{table} 


As shown in Table~\ref{tab:error:comparison:MMNNs}, \texttt{sine} and \SinTU{s} are the most effective activation functions for MMNNs. Our results further confirm that the combination of sinusoidal activation functions and MMNN structures is particularly well-suited for function approximation.  
\begin{colorenv}[blue]
Table~\ref{tab:error:comparison:MMNNs} also illustrates the effect of the
truncation parameter \(s\) in \(\SinTU{s}\). Since
\[
    \SinTU_{s}(x)=\sin(\max\{x,s\}),
\]
the activation coincides with \(\sin(x)\) on the active region \(x\ge s\), while
it is constant on the truncated region \(x<s\). Thus, \(s\) controls the balance
between Fourier-like oscillation and a \ReLU-like singular feature. When \(s\) is
very negative, \(\SinTU{s}\) behaves close to the pure sine activation on most
preactivation values; when \(s\) is larger, more neurons may enter the flat
region, which can introduce localized nonsmoothness but may also reduce the
effective number of oscillatory neurons and slow training.

The results suggest that the best choice of \(s\) depends on the target
regularity. For smoother targets, pure \(\sine\) or a more negative truncation
level is often preferable, since the extra singularity is less useful and may
slightly affect stability or derivative accuracy. For nonsmooth targets, an
intermediate truncation level can be beneficial because it combines oscillatory
representation with localized nonsmooth features. Therefore, \(\SinTU{s}\) is
not intended to uniformly dominate \(\sine\); rather, it provides an additional
degree of freedom for matching the activation to the regularity of the target
function.
\end{colorenv}
\subsection{Training Acceleration via Scaled Weight Initialization}
\label{sec:scaling:init}

The initialization of neural network parameters plays a crucial role in determining the efficiency and effectiveness of training. 
Traditional initialization methods
focus on maintaining stable gradients but do not explicitly consider the frequency characteristics of the target function. In our previous study on frequency bias of two-layer networks~\cite{ZZZZ-23}, we showed that one can reduce the frequency bias and enhance the representation ability of the network by scaling the initial random slope weights inside the activation function, which is equivalent to the introduction of high-frequency components to the network representation initially. We apply this initial scaling strategy to the first layer of FMMNNs so that finer features in the samples can be captured by the first layer. We keep the standard initialization in all other layers to avoid instability due to fast multilayer amplification in training. Specifically, we first initialize all parameters by PyTorch's default initialization, and then 
scale $\bmW_1,\,\bmb_1$ by $\sqrt{d_0}(n_1/2)^{1/d_0}$, where $n_1$ is the width of the first layer and $d_0$ is the input dimension (all notation here is based on Section~\ref{sec:MMNN:structure}). \begin{colorenv}[blue] It was shown in~\citep{ZZZZ-23} that the best scaling strategy is to make the scaling compatible with the network resolution which is proportional to $n_1^{1/d_0}$ in the well-sampled regime. When data sampling is barely enough or even under-sampled for the underlying target function, the initial scaling rule needs to accommodate the sampling condition to avoid overfitting. This is an important and interesting issue which will be studied in our future work.
This subsection is also related to SIREN-type initialization ideas, in the sense
that both approaches recognize the importance of initial frequency scaling for
sine-based networks. Our goal here, however, is not to optimize the hand-tuned SIREN
initialization, but to study a simple architecture-compatible scaling rule for
FMMNNs and FCNNs under the same experimental setting. This allows us to separate
the effect of scaled initialization from the effect of the MMNN architecture.
\end{colorenv}

The motivation for this approach is to address the spectral bias commonly observed in neural networks, where higher frequency components are learned more slowly. By initializing the network with higher frequency modes, the model is able to capture more complex patterns at an earlier stage, which results in faster convergence and improved overall performance, particularly when there is sufficient training data. However, the introduction of higher frequency modes in the network can also lead to overfitting if the sample size is limited. Therefore, it is crucial to tailor both the initialization strategy and the network size to the amount of available data.


All training hyperparameters for the experiments shown in Figure~\ref{fig:scaling:init:MMNNvsFCNN} are consistent with those outlined in Section~\ref{sec:FMMNN_vs_FCNN}, except for minor adjustments to the mini-batch size and learning rate, as detailed below. 
For each function $f_i$ where $i=1,2,3$, the mini-batch size is set to 200. The learning rate is specified as $10^{-3} \times 0.9^{\lfloor k/m \rfloor}$, where $k = 1,2,\dots,100m$ represents the epoch number, and $m$ is set to 600, 30, and 100 for $f_1$, $f_2$, and $f_3$, respectively.
For Figure~\ref{fig:scaling:init:MMNN}, all training hyperparameters are identical to those used in Section~\ref{sec:sine_vs_others}. Additionally, all activation functions employed in these experiments are \texttt{sine}.


\begin{figure}[ht]
            \centering
                            \,\hfill
            \begin{subfigure}[b]{0.3227300245\textwidth}
                    \centering            
                    \includegraphics[width=0.8999\textwidth]{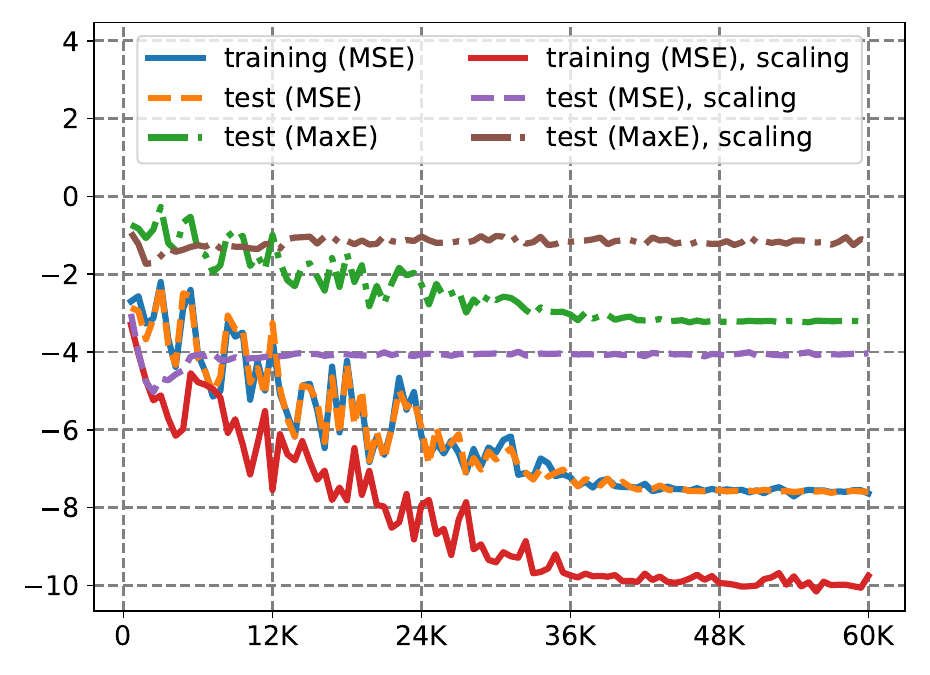}
                    \subcaption{MMNN of size (434,16,6).}
                \end{subfigure}
                \hfill
                            \begin{subfigure}[b]{0.3227300245\textwidth}
                    \centering            
                    \includegraphics[width=0.8999\textwidth]{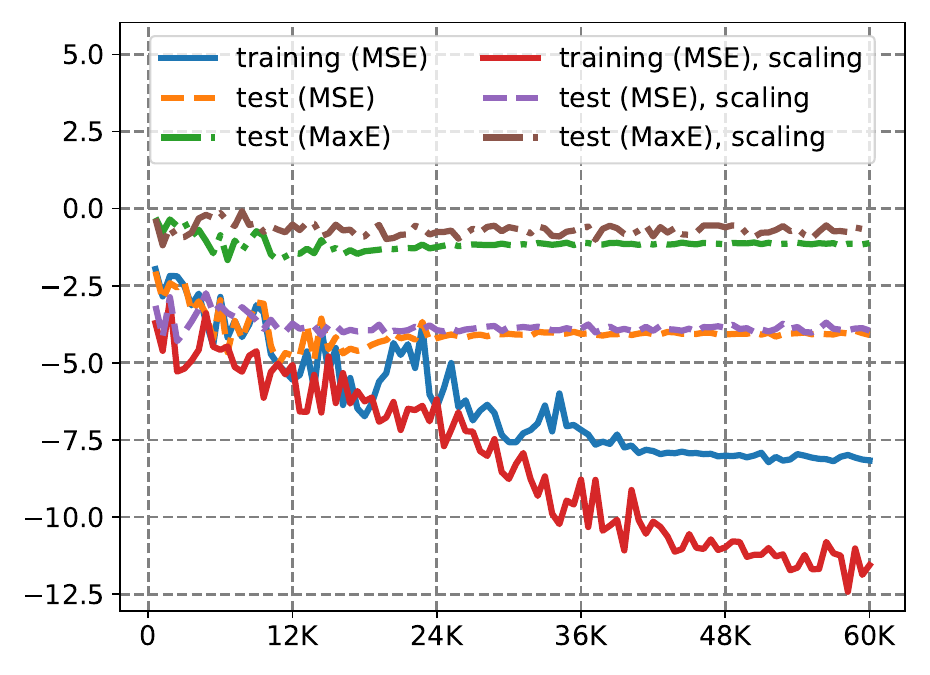}
                    \subcaption{MMNN of size (900,16,6).}
                \end{subfigure}
                \hfill
                            \begin{subfigure}[b]{0.3227300245\textwidth}
                    \centering            
                    \includegraphics[width=0.8999\textwidth]{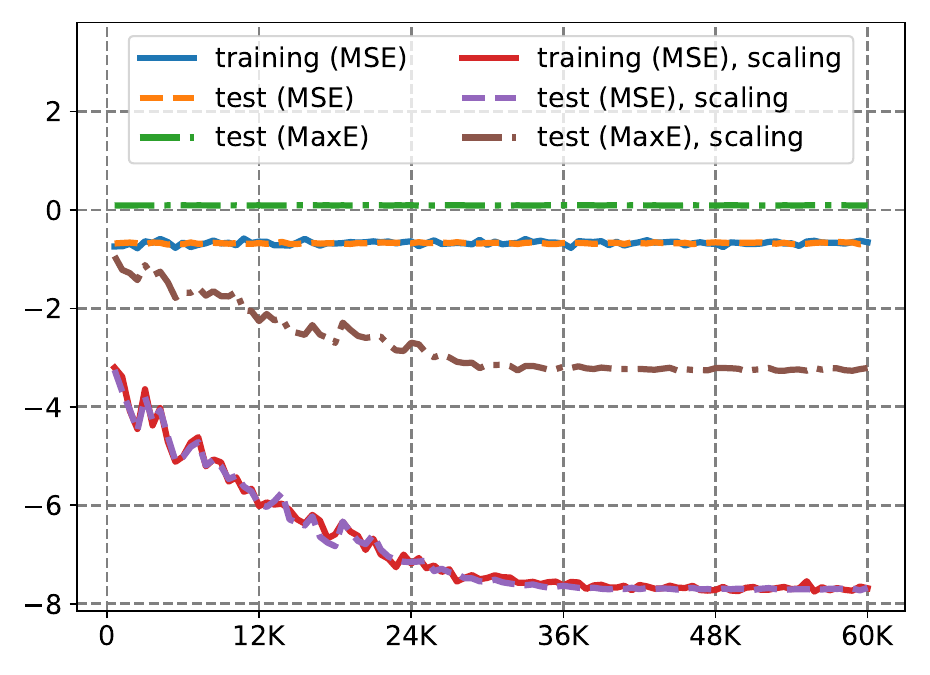}
                    \subcaption{FCNN of size (120,--,6).}
                \end{subfigure}
                \hfill
                \,
                \\
                    \,\hfill
            \begin{subfigure}[b]{0.3227300245\textwidth}
                    \centering            \includegraphics[width=0.8999\textwidth]{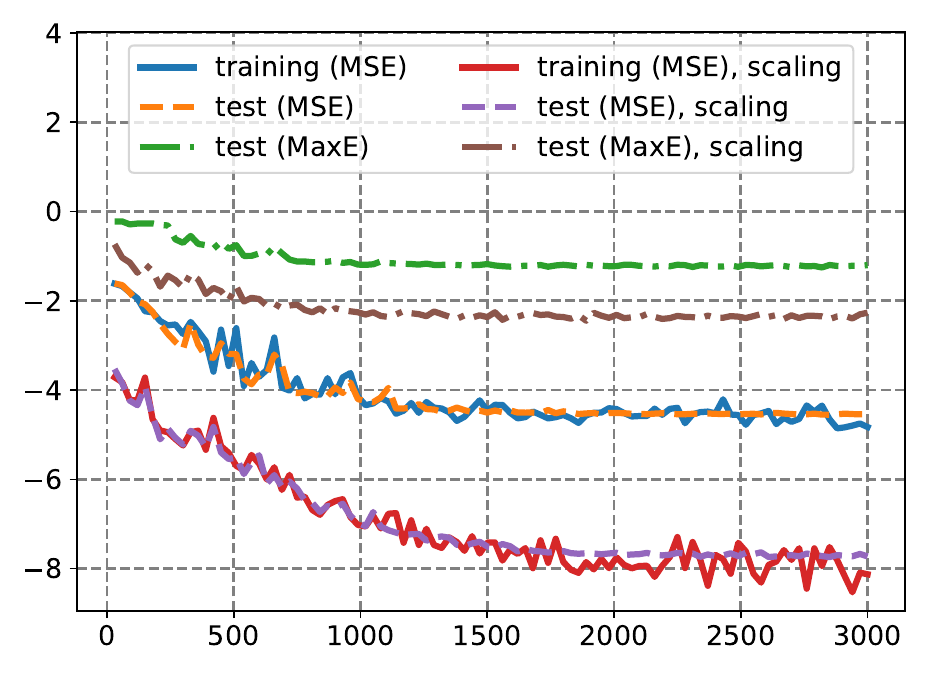}
                    \subcaption{MMNN of size (434,16,6).}
                \end{subfigure}
                \hfill
            \begin{subfigure}[b]{0.3227300245\textwidth}
                    \centering            \includegraphics[width=0.8999\textwidth]{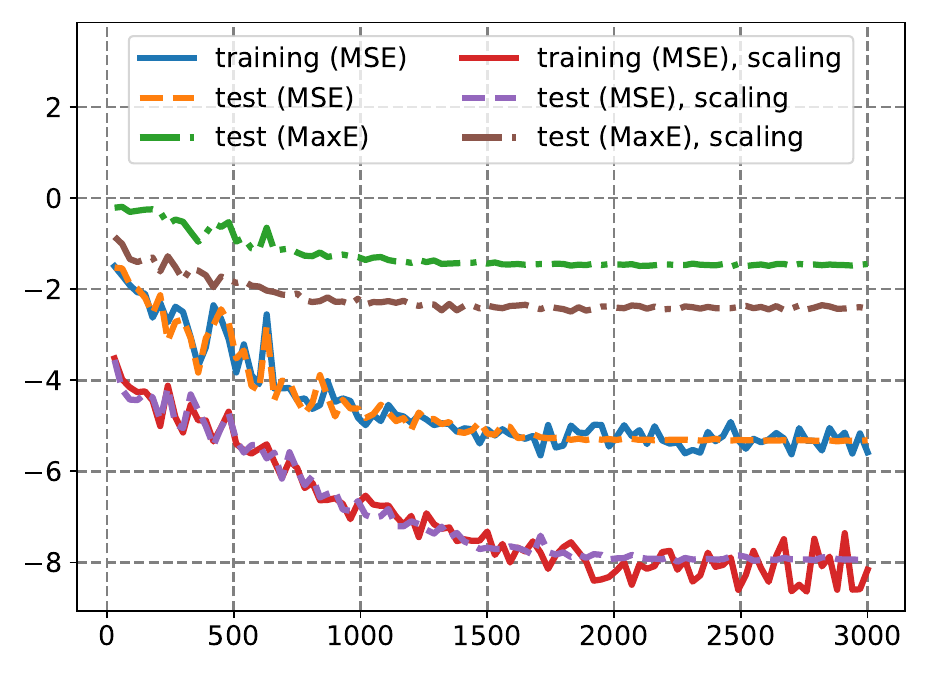}
                    \subcaption{MMNN of size (900,16,6).}
                \end{subfigure}
                \hfill
            \begin{subfigure}[b]{0.3227300245\textwidth}
                    \centering            \includegraphics[width=0.8999\textwidth]{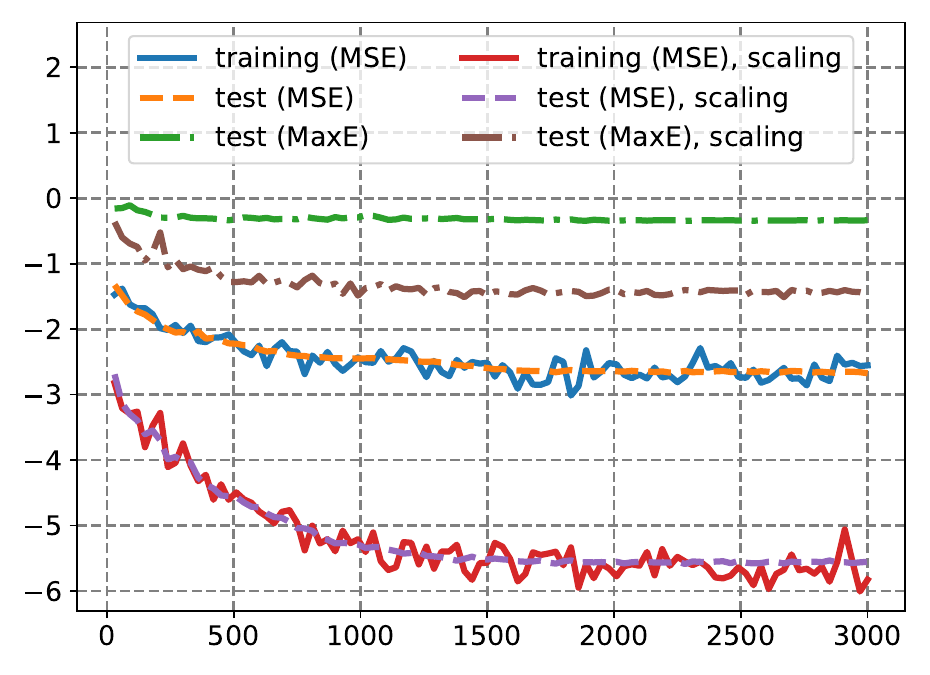}
                    \subcaption{FCNN of size (120,--,6).}
                \end{subfigure}
                \hfill
                \,
                \\
                       \,\hfill
            \begin{subfigure}[b]{0.3227300245\textwidth}
                    \centering            \includegraphics[width=0.8999\textwidth]{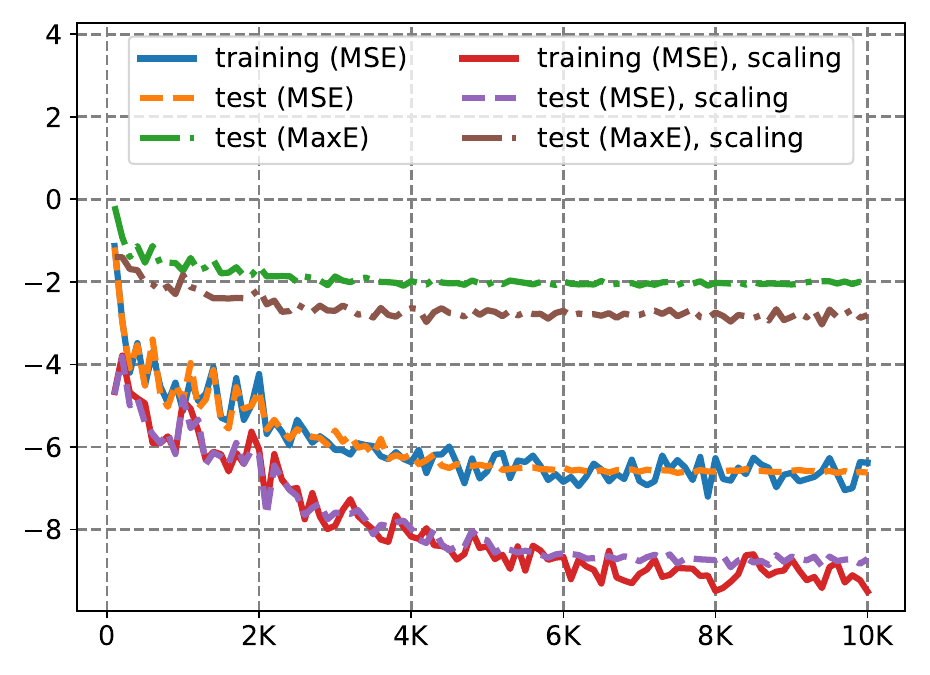}
                    \subcaption{MMNN of size (434,16,6).}
                \end{subfigure}\hfill
                            \begin{subfigure}[b]{0.3227300245\textwidth}
                    \centering            \includegraphics[width=0.8999\textwidth]{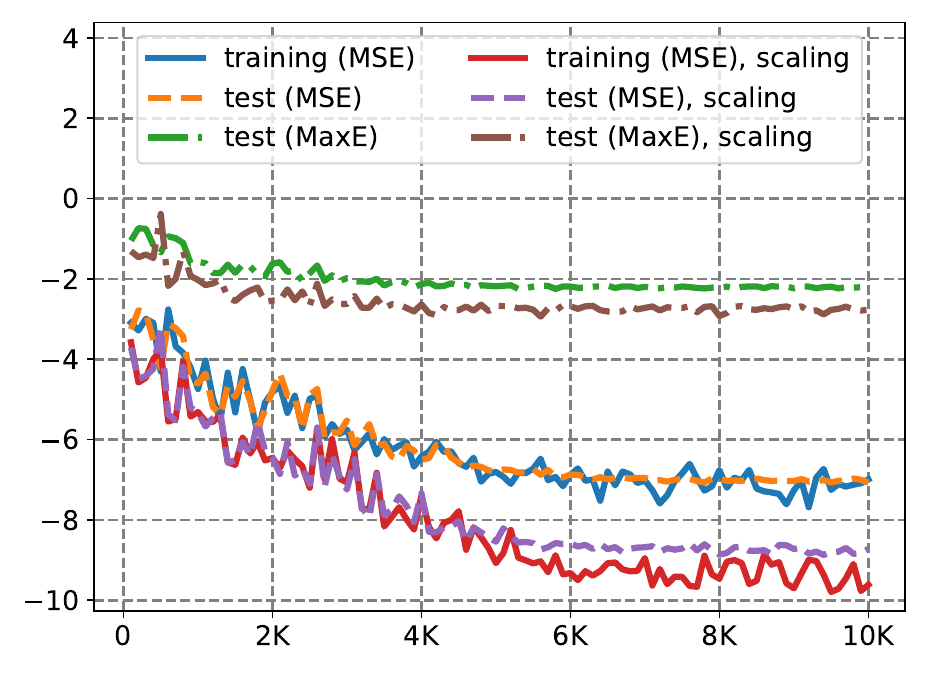}
                    \subcaption{MMNN of size (900,16,6).}
                \end{subfigure}\hfill
            \begin{subfigure}[b]{0.3227300245\textwidth}
                    \centering            \includegraphics[width=0.8999\textwidth]{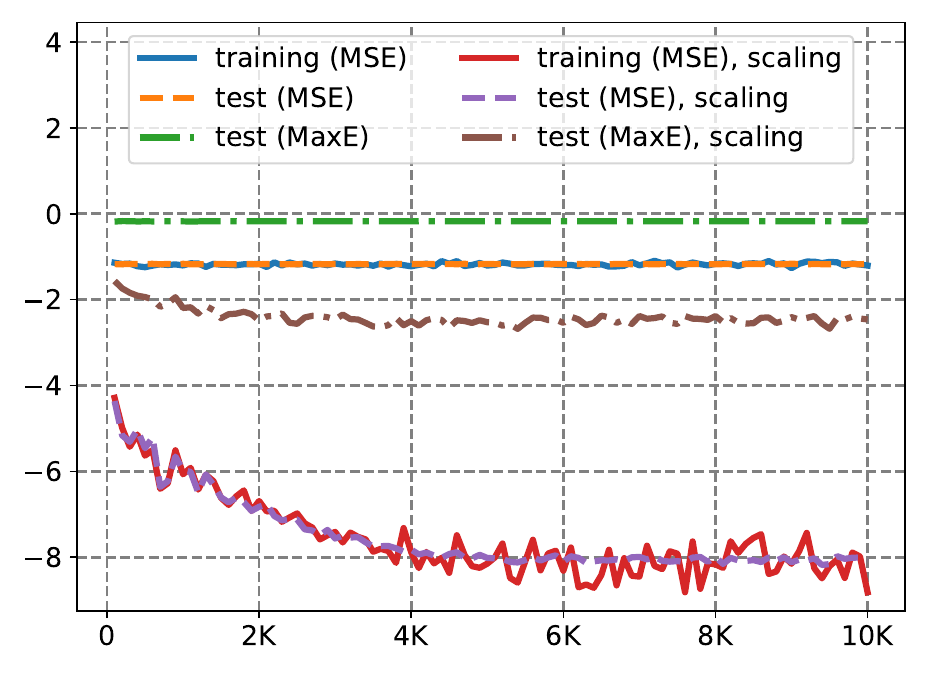}
                    \subcaption{FCNN of size (120,--,6).}
                \end{subfigure}\hfill\,
\caption{Visualization of training and test errors (base-10 logarithm) versus epoch for default and scaling initialization. The first, second, and third rows represent $f_1$, $f_2$, and $f_3$ from Section~\ref{sec:FMMNN_vs_FCNN}, respectively.}
\label{fig:scaling:init:MMNNvsFCNN}
\end{figure}

\begin{figure}[ht]
            \centering
            \,\hfill
            \begin{subfigure}[b]{0.32127300245\textwidth}
                    \centering            
                    \includegraphics[width=0.8999\textwidth]{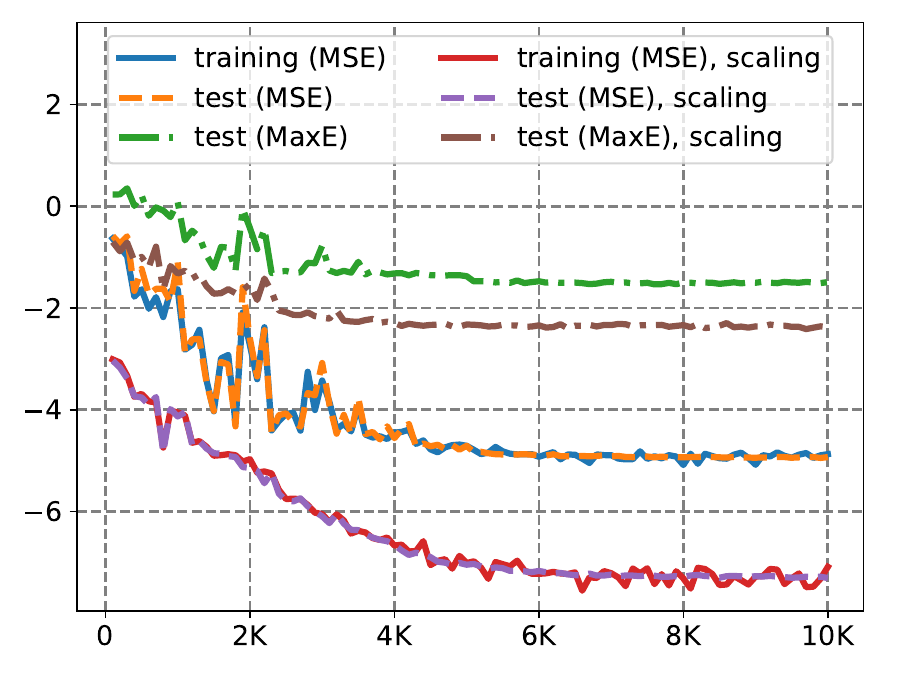}
                    \subcaption{MMNN of size (1024,16,6).}
                \end{subfigure}
                \hfill
            \begin{subfigure}[b]{0.32127300245\textwidth}
                    \centering            \includegraphics[width=0.8999\textwidth]{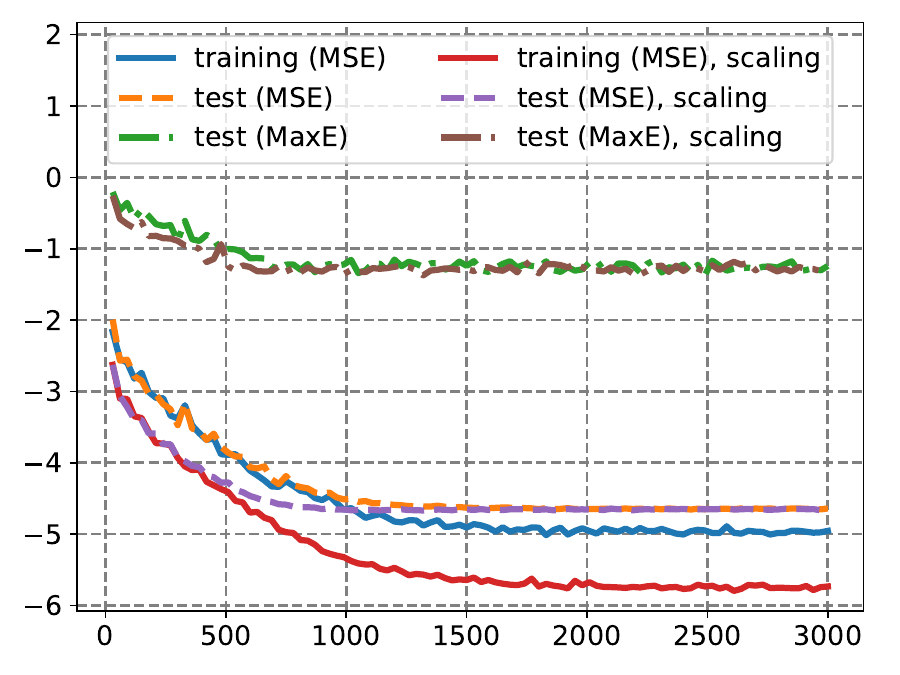}
                    \subcaption{MMNN of size (1024,36,8).}
                \end{subfigure}
                \hfill
            \begin{subfigure}[b]{0.331227300245\textwidth}
                    \centering            \includegraphics[width=0.8999\textwidth]{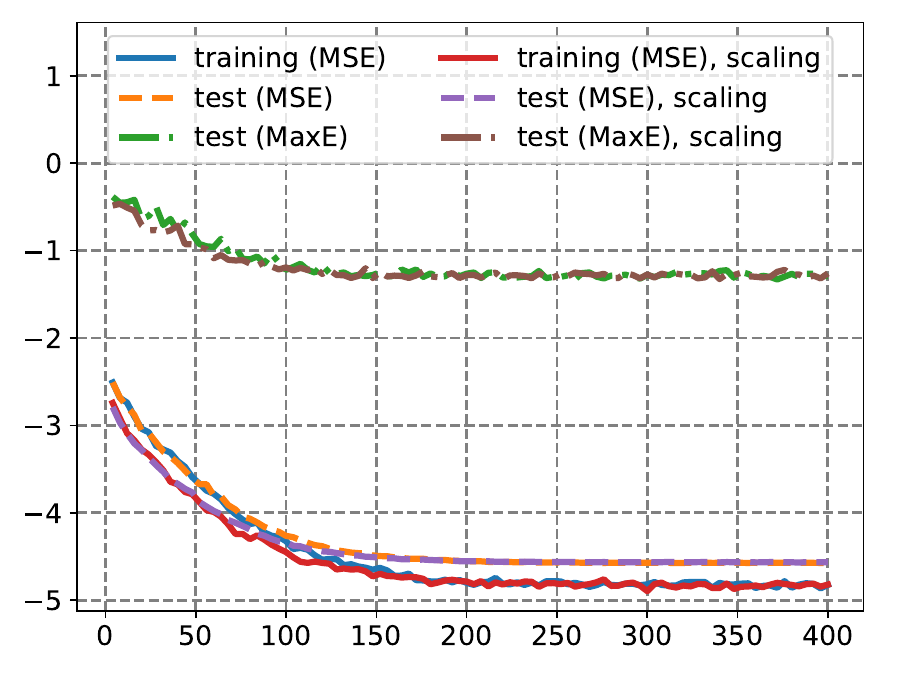}
                    \subcaption{ResMMNN of size (1024,36,10).}
                \end{subfigure}\hfill\,
\caption{Visualization of training and test errors (base-10 logarithm) versus epoch for default and scaling initialization. (a), (b), and (c) correspond to $f_4$, $f_5$, and $f_6$ from Section~\ref{sec:sine_vs_others}, respectively.}
\label{fig:scaling:init:MMNN}
\end{figure}

Figures~\ref{fig:scaling:init:MMNNvsFCNN}, \ref{fig:scaling:init:MMNN}, and \ref{fig:small:MMNNs:scaling} demonstrate that our scaled weight initialization approach is particularly effective when there are enough training samples, resulting in significantly faster learning and better final performance. When the sample size is small, as shown in the first column of Figure~\ref{fig:scaling:init:MMNNvsFCNN} for $f_1$ (a highly oscillatory function) with only 3000 samples, the MMNN tends to overfit. This overfitting occurs because the network width is too large, which introduces excessively high frequencies at initialization. In contrast, the FCNN does not exhibit overfitting in this case, since its width is much smaller.

\begin{figure}[ht]
            \centering
            \,\hfill
            \begin{subfigure}[b]{0.32127300245\textwidth}
                    \centering            
                    \includegraphics[width=0.8999\textwidth]{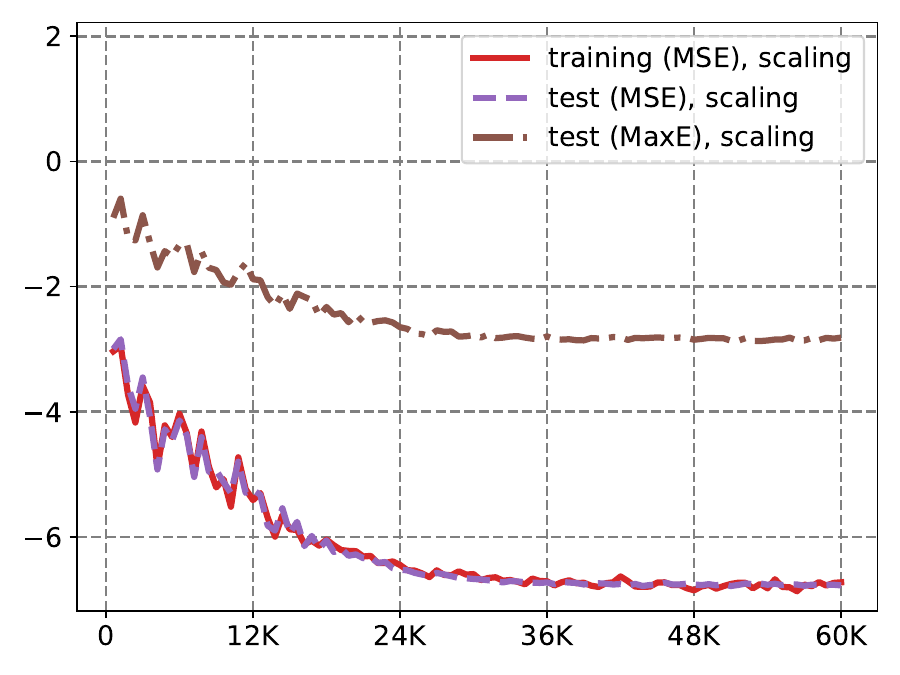}
                    \subcaption{MMNN of size (120,16,6).}
                \end{subfigure}
                \hfill
            \begin{subfigure}[b]{0.331227300245\textwidth}
                    \centering            \includegraphics[width=0.8999\textwidth]{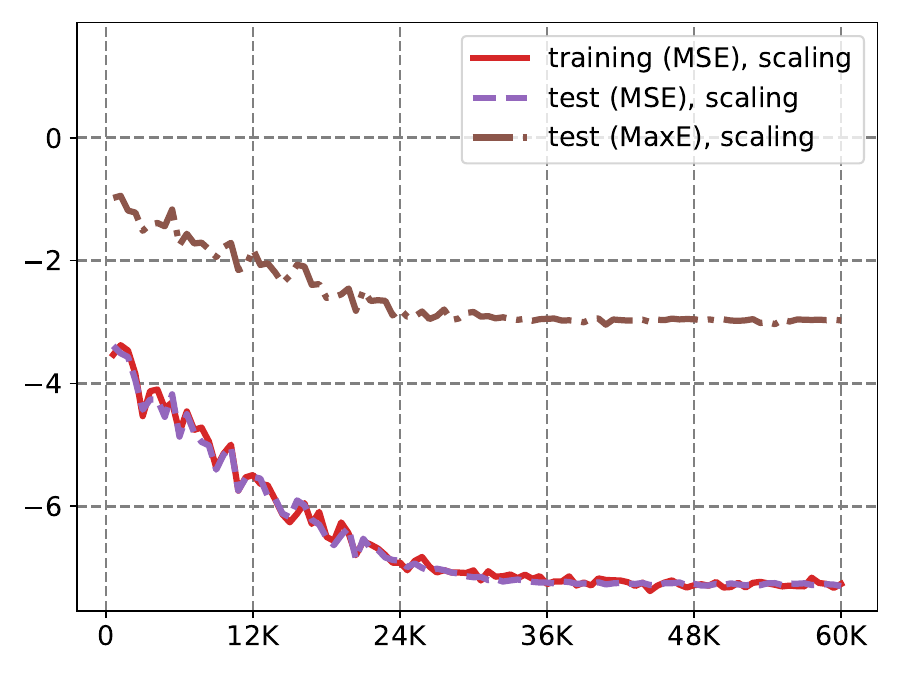}
                    \subcaption{MMNN of size (120,24,6).}
                \end{subfigure}\hfill
            \begin{subfigure}[b]{0.32127300245\textwidth}
                    \centering            
                    \includegraphics[width=0.8999\textwidth]{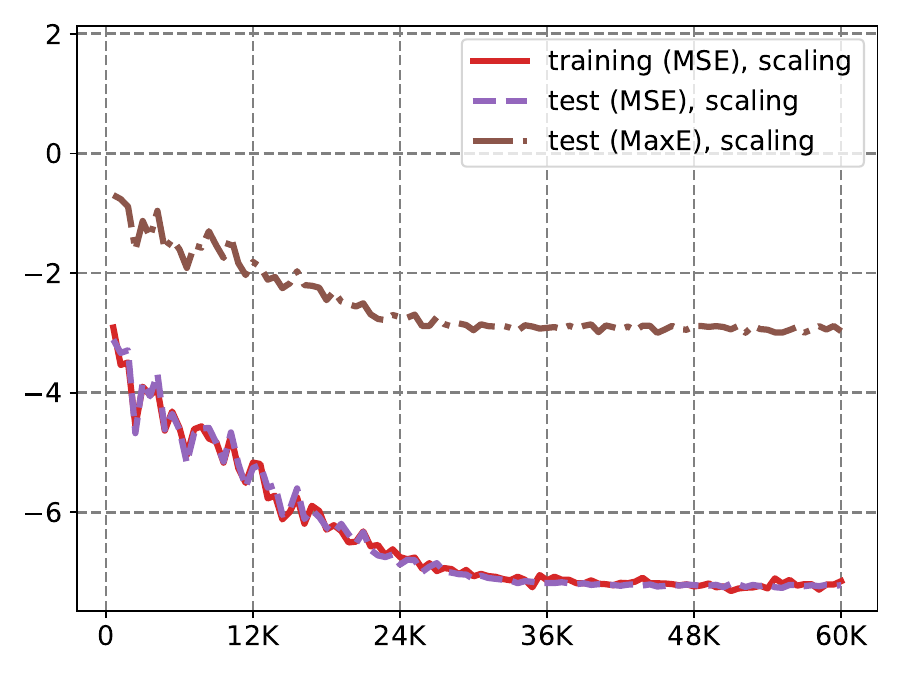}
                    \subcaption{MMNN of size (120,32,6).}
                \end{subfigure}\hfill\,
                \\
            \begin{subfigure}[b]{0.331227300245\textwidth}
                    \centering            \includegraphics[width=0.8999\textwidth]{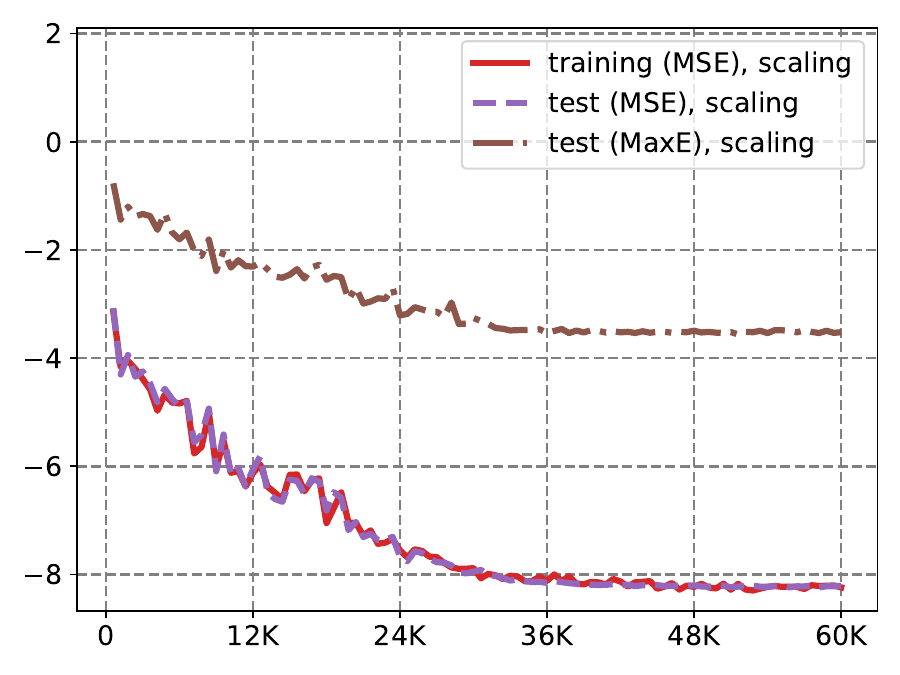}
                    \subcaption{MMNN of size (180,16,6).}
                \end{subfigure}\hfill
            \begin{subfigure}[b]{0.32127300245\textwidth}
                    \centering            
                    \includegraphics[width=0.8999\textwidth]{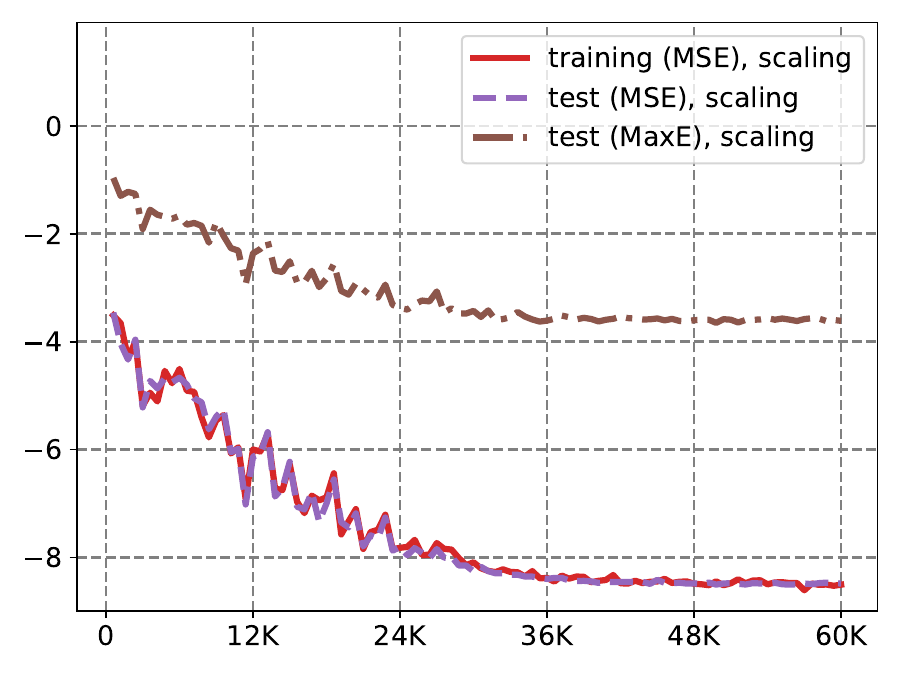}
                    \subcaption{MMNN of size (180,24,6).}
                \end{subfigure}
                \hfill
            \begin{subfigure}[b]{0.331227300245\textwidth}
                    \centering            \includegraphics[width=0.8999\textwidth]{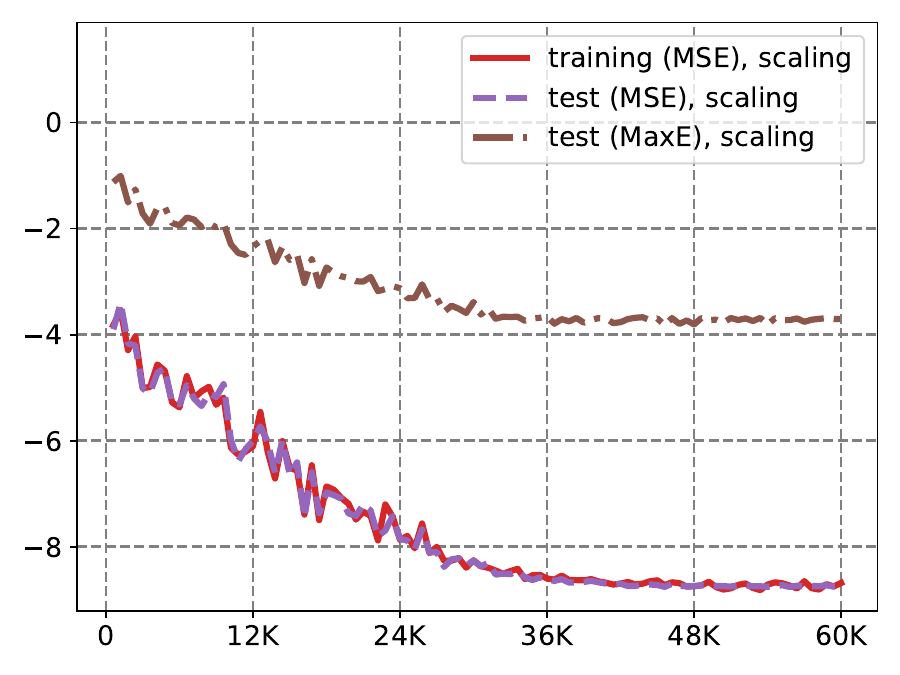}
                    \subcaption{MMNN of size (180,32,6).}
                \end{subfigure}\hfill\,
\caption{Visualization of training and test errors (base-10 logarithm) versus epoch for $f_1$ from Section~\ref{sec:FMMNN_vs_FCNN} using our scaling initialization.}
\label{fig:small:MMNNs:scaling}
\end{figure}

Therefore, it is important to adjust the initialization method and select an appropriate network size based on the available sample size. In Figure~\ref{fig:small:MMNNs:scaling}, we present additional experiments with the same settings except for a smaller network size. As shown in the figure, overfitting is avoided and the results are quite good.
Finally, we note that our scaling initialization is also effective for FCNNs, as demonstrated by the results in the third row of Figure~\ref{fig:scaling:init:MMNNvsFCNN}. The results clearly show that our scaling initialization brings significant improvements.



\begin{colorenv}[blue]

\subsection{Comparison with Fourier Features}
\label{sec:fourier-feature-comparison}

Fourier-feature MLPs \cite{NEURIPS2020_55053683} are among the most commonly used frequency-aware
coordinate networks for low-dimensional regression. Their main idea is to
preprocess the input coordinate by fixed sinusoidal features before applying a
standard MLP, thereby injecting high-frequency basis functions into the input
representation. In this subsection, we compare FMMNN with this widely
used baseline. 
The purpose of this comparison is to separate two mechanisms:
whether the improvement comes merely from providing trigonometric input
features, or from the multi-component and multi-layer composition structure of
FMMNNs.
We consider the same one-dimensional high-frequency approximation benchmark $f_1$ defined in \eqref{eq:def:f1:Cinfty:MMNN:vs:FCNN}. 
Together with the sine-activated FCNN comparisons in Section~\ref{sec:FMMNN_vs_FCNN} and the scaled
initialization study in Section~\ref{sec:scaling:init}, this experiment provides a representative
comparison against modern frequency-aware coordinate-network mechanisms while
keeping the experimental scope focused.

For the Fourier-feature baseline, we use the Gaussian random Fourier feature mapping proposed in \cite{NEURIPS2020_55053683}. In the one-dimensional case, with \(M\) Fourier modes, this mapping is defined by
\[
    \gamma_{\bm{B}}(x)
    =
    \begin{bmatrix}
        \cos(2\pi \bm{B}x) \\
        \sin(2\pi \bm{B}x)
    \end{bmatrix}
    \in \mathbb{R}^{2M},
    \qquad
    \bm{B}\in\mathbb{R}^{M\times 1},
\]
where \(\cos\) and \(\sin\) are applied componentwise. The entries of \(\bm{B}\) are sampled independently from Gaussian distribution \(\mathcal{N}(0,\sigma^2)\), and \(\bm{B}\) is fixed after initialization rather than optimized during training.
The prediction is then given by
\[
    \phi^{\mathrm{FF}}(x)
    =
    \phi\bigl(\gamma_{\bm{B}}(x)\bigr),
\]
where \(\phi\) is a \ReLU-activated fully connected MLP with six hidden
layers. We sweep the Fourier-feature scale
\(\sigma\in\{1,2,4,8,16,32,64,128\}\)
and the number of modes
\(M\in\{32,64,128\}.\)
For \(M=32,64,128\), the MLP widths are set to \(50,45,37\), respectively, so that the number of trainable parameters is comparable to that of the FMMNN baseline with size \((200,16,6)\), where the first-layer parameters \(\bmW_1\) and \(\bmb_1\) are scaled by \(100\) after the default initialization. 
The scaling factor \(100\) is fixed according to
Section~\ref{sec:scaling:init} before the Fourier-feature sweep, and is not
tuned separately for this comparison.
All models are trained on the same uniform grid of \(3000\) points in \([-1,1]\) and evaluated on an independent test set of \(3000\) points sampled uniformly from \([-1,1]\). We use double precision, a mini-batch size of \(200\), and the same training budget. The learning rate at epoch \(k\) is
\(10^{-3}\times 0.9^{\lfloor k/600 \rfloor}\)
for $k=1,2,\dots,60000.$
\begin{table}[!htbp]
    \centering
    \footnotesize
    \setlength{\tabcolsep}{4.2pt}
    \renewcommand{\arraystretch}{1.08}
    \caption{Comparison of FMMNN and Fourier-feature models for the one-dimensional approximation benchmark. For each run, the last 10 logged evaluations are averaged first; each entry reports mean $\pm$ standard deviation over 16 trials.}
    \label{tab:fmmnn-fourier-feature-comparison}
    \resizebox{0.998\textwidth}{!}{%
    \begin{tabular}{@{}lccccccccc@{}}
        \toprule
        model & width & rank & depth & scale $\sigma$ & \#modes $M$ & \#trainable-parameters  & training error (MSE) & test error (MSE) & test error (MaxE) \\
        \midrule
        FMMNN 
        & 200 & 16 & 6 & -- & -- & 16\,281 & $2.74\times 10^{-9}\,\pm\,5.94\times 10^{-10}$ & $\mathbf{2.77\times 10^{-9}}\,\pm\,6.28\times 10^{-10}$ & $\mathbf{2.41\times 10^{-4}}\,\pm\,4.15\times 10^{-5}$ \\
        \midrule
        Fourier Features & 50 & -- & 6 & 1 & 32 & 16\,051 & $1.36\times 10^{-2}\,\pm\,1.07\times 10^{-2}$ & $1.38\times 10^{-2}\,\pm\,1.09\times 10^{-2}$ & $6.41\times 10^{-1}\,\pm\,2.65\times 10^{-1}$ \\
        Fourier Features & 50 & -- & 6 & 2 & 32 & 16\,051 & $3.28\times 10^{-4}\,\pm\,8.05\times 10^{-4}$ & $3.12\times 10^{-4}\,\pm\,7.41\times 10^{-4}$ & $9.31\times 10^{-2}\,\pm\,1.09\times 10^{-1}$ \\
        Fourier Features & 50 & -- & 6 & 4 & 32 & 16\,051 & $2.51\times 10^{-6}\,\pm\,1.33\times 10^{-6}$ & $4.56\times 10^{-6}\,\pm\,1.69\times 10^{-6}$ & $1.66\times 10^{-2}\,\pm\,4.10\times 10^{-3}$ \\
        Fourier Features & 50 & -- & 6 & 8 & 32 & 16\,051 & $3.50\times 10^{-7}\,\pm\,2.20\times 10^{-7}$ & $1.69\times 10^{-6}\,\pm\,3.73\times 10^{-7}$ & $9.50\times 10^{-3}\,\pm\,1.67\times 10^{-3}$ \\
        Fourier Features & 50 & -- & 6 & 16 & 32 & 16\,051 & $5.44\times 10^{-8}\,\pm\,5.09\times 10^{-8}$ & $1.24\times 10^{-6}\,\pm\,2.66\times 10^{-7}$ & $8.40\times 10^{-3}\,\pm\,1.28\times 10^{-3}$ \\
        Fourier Features & 50 & -- & 6 & 32 & 32 & 16\,051 & $1.30\times 10^{-8}\,\pm\,1.03\times 10^{-8}$ & $1.16\times 10^{-6}\,\pm\,1.63\times 10^{-7}$ & $8.34\times 10^{-3}\,\pm\,1.29\times 10^{-3}$ \\
        Fourier Features & 50 & -- & 6 & 64 & 32 & 16\,051 & $1.06\times 10^{-8}\,\pm\,4.70\times 10^{-9}$ & $1.72\times 10^{-6}\,\pm\,3.88\times 10^{-7}$ & $1.05\times 10^{-2}\,\pm\,1.54\times 10^{-3}$ \\
        Fourier Features & 50 & -- & 6 & 128 & 32 & 16\,051 & $1.55\times 10^{-8}\,\pm\,1.73\times 10^{-8}$ & $1.05\times 10^{-5}\,\pm\,8.17\times 10^{-6}$ & $2.43\times 10^{-2}\,\pm\,8.92\times 10^{-3}$ \\
        \midrule
        Fourier Features & 45 & -- & 6 & 1 & 64 & 16\,201 & $1.86\times 10^{-2}\,\pm\,1.14\times 10^{-2}$ & $1.89\times 10^{-2}\,\pm\,1.12\times 10^{-2}$ & $7.43\times 10^{-1}\,\pm\,2.51\times 10^{-1}$ \\
        Fourier Features & 45 & -- & 6 & 2 & 64 & 16\,201 & $7.00\times 10^{-5}\,\pm\,1.85\times 10^{-4}$ & $7.45\times 10^{-5}\,\pm\,1.72\times 10^{-4}$ & $5.95\times 10^{-2}\,\pm\,5.79\times 10^{-2}$ \\
        Fourier Features & 45 & -- & 6 & 4 & 64 & 16\,201 & $2.85\times 10^{-6}\,\pm\,2.10\times 10^{-6}$ & $5.15\times 10^{-6}\,\pm\,3.32\times 10^{-6}$ & $1.81\times 10^{-2}\,\pm\,7.26\times 10^{-3}$ \\
        Fourier Features & 45 & -- & 6 & 8 & 64 & 16\,201 & $3.10\times 10^{-7}\,\pm\,1.31\times 10^{-7}$ & $1.36\times 10^{-6}\,\pm\,2.46\times 10^{-7}$ & $8.19\times 10^{-3}\,\pm\,1.30\times 10^{-3}$ \\
        Fourier Features & 45 & -- & 6 & 16 & 64 & 16\,201 & $6.05\times 10^{-8}\,\pm\,4.69\times 10^{-8}$ & $1.14\times 10^{-6}\,\pm\,2.53\times 10^{-7}$ & $8.32\times 10^{-3}\,\pm\,1.98\times 10^{-3}$ \\
        Fourier Features & 45 & -- & 6 & 32 & 64 & 16\,201 & $1.04\times 10^{-8}\,\pm\,5.24\times 10^{-9}$ & $8.81\times 10^{-7}\,\pm\,1.11\times 10^{-7}$ & $6.97\times 10^{-3}\,\pm\,7.23\times 10^{-4}$ \\
        Fourier Features & 45 & -- & 6 & 64 & 64 & 16\,201 & $2.66\times 10^{-9}\,\pm\,1.47\times 10^{-9}$ & $1.06\times 10^{-6}\,\pm\,1.71\times 10^{-7}$ & $8.14\times 10^{-3}\,\pm\,9.57\times 10^{-4}$ \\
        Fourier Features & 45 & -- & 6 & 128 & 64 & 16\,201 & $3.19\times 10^{-9}\,\pm\,2.41\times 10^{-9}$ & $3.45\times 10^{-6}\,\pm\,1.49\times 10^{-6}$ & $1.41\times 10^{-2}\,\pm\,3.40\times 10^{-3}$ \\
        \midrule
        Fourier Features & 37 & -- & 6 & 1 & 128 & 16\,577 & $4.07\times 10^{-2}\,\pm\,2.67\times 10^{-2}$ & $4.12\times 10^{-2}\,\pm\,2.60\times 10^{-2}$ & $9.75\times 10^{-1}\,\pm\,1.65\times 10^{-1}$ \\
        Fourier Features & 37 & -- & 6 & 2 & 128 & 16\,577 & $1.08\times 10^{-3}\,\pm\,1.91\times 10^{-3}$ & $1.05\times 10^{-3}\,\pm\,1.87\times 10^{-3}$ & $1.93\times 10^{-1}\,\pm\,1.64\times 10^{-1}$ \\
        Fourier Features & 37 & -- & 6 & 4 & 128 & 16\,577 & $8.27\times 10^{-6}\,\pm\,5.73\times 10^{-6}$ & $1.16\times 10^{-5}\,\pm\,7.03\times 10^{-6}$ & $2.63\times 10^{-2}\,\pm\,9.86\times 10^{-3}$ \\
        Fourier Features & 37 & -- & 6 & 8 & 128 & 16\,577 & $5.32\times 10^{-7}\,\pm\,2.63\times 10^{-7}$ & $1.60\times 10^{-6}\,\pm\,4.59\times 10^{-7}$ & $9.70\times 10^{-3}\,\pm\,2.37\times 10^{-3}$ \\
        Fourier Features & 37 & -- & 6 & 16 & 128 & 16\,577 & $9.89\times 10^{-8}\,\pm\,6.63\times 10^{-8}$ & $1.10\times 10^{-6}\,\pm\,2.74\times 10^{-7}$ & $7.72\times 10^{-3}\,\pm\,1.69\times 10^{-3}$ \\
        Fourier Features & 37 & -- & 6 & 32 & 128 & 16\,577 & $1.31\times 10^{-8}\,\pm\,1.36\times 10^{-8}$ & $8.68\times 10^{-7}\,\pm\,1.28\times 10^{-7}$ & $6.84\times 10^{-3}\,\pm\,8.01\times 10^{-4}$ \\
        Fourier Features & 37 & -- & 6 & 64 & 128 & 16\,577 & $1.48\times 10^{-9}\,\pm\,6.91\times 10^{-10}$ & $1.61\times 10^{-6}\,\pm\,3.85\times 10^{-6}$ & $8.06\times 10^{-3}\,\pm\,5.95\times 10^{-3}$ \\
        Fourier Features & 37 & -- & 6 & 128 & 128 & 16\,577 & $\mathbf{4.76\times 10^{-10}}\,\pm\,5.81\times 10^{-10}$ & $1.67\times 10^{-6}\,\pm\,6.89\times 10^{-7}$ & $1.06\times 10^{-2}\,\pm\,2.38\times 10^{-3}$ \\
        \bottomrule
    \end{tabular}%
    }
\end{table}

\begin{figure}[ht]
                    \centering            
\includegraphics[width=0.958999\textwidth]{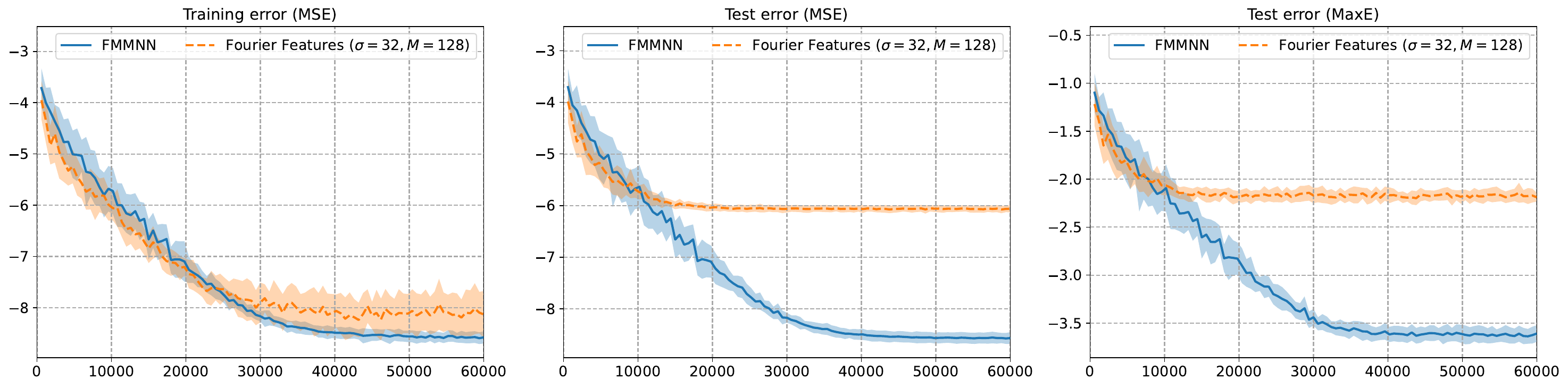}
\caption{
The curves show the mean over \(16\) independent trials, with translucent envelopes indicating one standard deviation, computed after applying the base-\(10\) logarithm to each error.
}
\label{fig:FMMNNs:vs:FF}
\end{figure}

For a more reliable comparison, we conduct \(16\) independent trials for each
configuration. Figure~\ref{fig:FMMNNs:vs:FF} compares FMMNN with the best
Fourier-feature MLP selected from the parameter sweep, namely
\((\sigma,M)=(32,128)\). The curves show the mean over the \(16\) trials, with
translucent envelopes indicating one standard deviation.
Table~\ref{tab:fmmnn-fourier-feature-comparison} reports the mean and one
standard deviation of the final errors.
Figure~\ref{fig:FMMNNs:vs:FF} shows that both methods achieve small training
errors, whereas FMMNN generalizes much better: its test MSE and test MaxE are
both substantially lower than those of the best Fourier-feature MLP.
Table~\ref{tab:fmmnn-fourier-feature-comparison} further reveals a clear
bias--variance tradeoff for Fourier-feature MLPs. When the frequency scale
\(\sigma\) is too small, the model underfits the target function; for example,
\(\sigma=1\) yields test MSEs of order \(10^{-2}\). Increasing \(\sigma\)
substantially reduces the training error and initially also improves the test
error. However, excessively large frequency scales can further reduce the
training error without improving generalization. For instance, the configuration
\((\sigma,M)=(128,128)\) achieves the smallest training MSE among all models,
\(4.76\times 10^{-10}\), but its test MSE is \(1.67\times 10^{-6}\), which is
worse than the best Fourier-feature test result.

Among all Fourier-feature MLPs in the sweep, the best test MSE is
\(8.68\times 10^{-7}\), attained at \((\sigma,M)=(32,128)\), and the best test
MaxE is \(6.84\times 10^{-3}\). In contrast, with a comparable number of
trainable parameters, the FMMNN achieves a test MSE of \(2.77\times 10^{-9}\)
and a test MaxE of \(2.41\times 10^{-4}\).
These results indicate that the advantage of FMMNN is not simply due to the
presence of trigonometric functions. Fourier-feature MLPs inject fixed
sinusoidal features only at the input layer, after which the model is a standard
fully connected network. In contrast, FMMNNs use random sine bases inside each
multi-component layer and then compose these layers. Each component only needs
to learn a linear combination of fixed random bases, while high-frequency
structures can be generated through multi-layer compositions. This provides a
more structured and stable way to represent high-frequency functions under a
similar trainable-parameter budget.
\end{colorenv}

\section{Conclusion}
\label{sec:conclusion}

In this work, we investigate the crucial interplay between neural network architectures and activation functions, emphasizing how their proper alignment significantly influences practical performance. Specifically, we propose the use of \texttt{sine} and a new class of activation functions, \SinTU{s}, and examine their effectiveness within the MMNN structure, termed FMMNN. Our findings demonstrate that the combination of \texttt{sine} or \SinTU{s} with MMNNs establishes a highly synergistic framework, offering both theoretical and empirical advantages, especially for capturing high-frequency components in the target functions.

First, we establish that MMNNs equipped with \texttt{sine} or \SinTU{s} exhibit strong approximation capabilities, surpassing traditional architectures in mathematical expressiveness. We further analyze the optimization landscape of MMNNs, revealing that their training dynamics are considerably more favorable than those of standard FCNNs. This insight suggests that MMNNs benefit from reduced training complexity and improved convergence properties.

To validate our theoretical analysis, we conduct extensive numerical experiments focused on function approximation. The results consistently show that FMMNNs outperform conventional models in both accuracy and computational efficiency. These findings highlight the potential of MMNNs with \texttt{sine}-based activation functions as 
\blue{a promising paradigm for approximation of complicated functions with significant high-frequency features
and related scientific-computing applications.}

While our current experiments primarily focus on function approximation, applying FMMNNs to broader practical tasks remains an important direction for future research. 
From a theoretical standpoint, we have established approximation/expressiveness
results and provided preliminary landscape observations. A rigorous analysis of
the optimization dynamics remains an important direction for future work.
These aspects are left for future investigation.

\section*{Acknowledgments}
S. Zhang was partially supported by
start-up fund  P0053092  from 
Hong Kong Polytechnic University.
H. Zhao was partially supported by NSF grants DMS-2309551 and DMS-2012860. Y. Zhong was partially supported by NSF grant DMS-2309530. H. Zhou was partially supported by NSF grants DMS-2307465 and DMS-2510829.




\hypersetup{
citecolor=black,linkcolor=black,urlcolor=black}
\renewcommand{\doi}[1]{\textnormal{\doitext}~\texttt{\href{https://doi.org/#1}{\detokenize{#1}}}}

\bibliographystyle{plainnat}
  
\bibliography{references}

\appendix

\section{Proofs of Theorems~\ref{thm:main1} and \ref{thm:main2}}
\label{sec:proof:thm:main}

In this section, we establish the proofs of Theorems~\ref{thm:main1} and \ref{thm:main2}. To facilitate understanding, Section~\ref{sec:notation} provides a concise overview of the notations used throughout the paper. In Section~\ref{sec:proof:ideas:thm:main}, we outline the main ideas behind the proofs of Theorems~\ref{thm:main1} and \ref{thm:main2}. Additionally, for simplification, we introduce three propositions whose proofs are deferred to later sections. Assuming the validity of these propositions, we present the full detailed proofs of Theorems~\ref{thm:main1} and \ref{thm:main2} in Section~\ref{sec:proof:thm:main}.


\subsection{Notations}
\label{sec:notation}

Below is a summary of the fundamental notations used throughout this paper.

\begin{itemize}
    \item The difference between two sets \( A \) and \( B \) is denoted by \( A \setminus  B \coloneqq \{ x : x \in A, \ x \notin B \} \).
    
    \item The symbols \( \mathbb{N} \), \( \mathbb{Z} \), \( \mathbb{Q} \), and \( \mathbb{R} \) represent the sets of natural numbers (including 0), integers, rational numbers, and real numbers, respectively. We denote the set of positive natural numbers by \( \mathbb{N}^+ = \mathbb{N} \setminus  \{0\} = \{1, 2, 3, \dots \} \).
    
    \item The floor and ceiling functions of a real number \( x \) are given by
    \( \lfloor x \rfloor = \max \{ n : n \le x, \ n \in \mathbb{Z} \} \) and \( \lceil x \rceil = \min \{ n : n \ge x, \ n \in \mathbb{Z} \} \).
    
    \item For any \( p \in [1, \infty] \), the \( p \)-norm (or \( \ell^p \)-norm) of a vector \( \bm{x} = (x_1, \dots, x_d) \in \mathbb{R}^d \) is defined as
    \begin{equation*}
        \|\bm{x}\|_p = \|\bm{x}\|_{\ell^p} \coloneqq \big( |x_1|^p + \dots + |x_d|^p \big)^{1/p}\quad \text{for } p \in [1, \infty),
    \end{equation*}
    and
    \begin{equation*}
        \|\bm{x}\|_{\infty} = \|\bm{x}\|_{\ell^\infty} \coloneqq \max\big\{ |x_i| : i = 1, 2, \dots, d \big\}.
    \end{equation*}

    \item Let $\cpl(n)$ denote the space of all continuous piecewise linear functions on $\R$ with at most $n \in \N$ breakpoints.
    
    \item The supremum norm of a bounded vector-valued function $\bmf: \Omega\subseteq \R^d \to \R^n$ is defined as
    \begin{equation*}
        \|\bmf\|_{\sup(\Omega)}\coloneqq \sup\big\{|f_i(\bmx)|: \bmx\in \Omega,\   i\in\{1,2,\dots,n\}\big\},
    \end{equation*}
    where $f_i$ represents the $i$-th component of $\bmf$ for $i = 1,2,\dots,n$.

    \item The symbol ``$\rightrightarrows$'' denotes uniform convergence. Specifically, if $\bmf:\R^d\to\R^n$ is a vector-valued function and $\bmf_\delta(\bmx) \rightrightarrows \bmf(\bmx)$ as $\delta\to 0$ for all $\bmx\in \Omega\subseteq \R^d$, then for any $\eps>0$, there exists $\delta_\eps\in (0,1)$ such that
    \begin{equation*}		
        \|\bmf_\delta-\bmf\|_{\sup(\Omega)}< \eps\quad \text{for all } \delta\in (0,\delta_\eps).
    \end{equation*}

    \item We adopt slicing notation for vectors and matrices. Given a vector $\bmx = (x_1, \dots, x_d) \in \mathbb{R}^d$, the notation $\bmx[n:m]$ refers to the slice from the $n$-th to the $m$-th entry for any $n, m \in \{1,2,\dots,d\}$ with $n \leq m$, while $\bmx[n]$ represents the $n$-th entry. For example, if $\bmx = (x_1, x_2, x_3) \in \mathbb{R}^3$, then 
    \(    (6\bmx)[2:3] = (6x_2, 6x_3)\) and \( (8\bmx+1)[3] = 8x_3 + 1.\)
    Similarly, for a matrix $\bmA$, the notation $\bmA[:,i]$ denotes its $i$-th column, while $\bmA[i,:]$ represents its $i$-th row. Moreover, $\bmA[i,n:m]$ is equivalent to $(\bmA[i,:])[n:m]$, extracting the $n$-th to $m$-th entries from the $i$-th row.

\end{itemize}

\subsection{Ideas and Propositions for Proving Theorems~\ref{thm:main1} and \ref{thm:main2}}
\label{sec:proof:ideas:thm:main}

Before presenting the detailed proofs of Theorems~\ref{thm:main1} and \ref{thm:main2}, let us first outline the key ideas underlying our approach. The main strategy in the proof involves constructing a piecewise constant function that approximates the desired continuous target function. However, achieving a uniform approximation with piecewise constants is challenging due to the continuity of \ReLU{} and \sine{}  functions. To address this, we design networks that approximate piecewise constant behavior over most of the domain, specifically outside a small region, ensuring that the approximation error remains well-controlled. Within this small region, the error is manageable, as its measure can be made arbitrarily small.

With this foundation, we now proceed to the details. We divide the domain $[0,1]^d$ into a collection of ``important'' cubes, denoted $\{Q_k\}_{k \in \{1,2,\cdots,M^d\}}$, along with a ``negligible'' region $\Omega$, where $M = N^L$. Each cube $Q_k$ is associated with a representative point $\bm{x}_k \in Q_k$. An illustration of $\bm{x}_k$, $\Omega$, and $Q_k$ can be seen in Figure~\ref{fig:idea:main}. The construction of the desired network to approximate the target function is organized into two main steps below.

\begin{enumerate}
    \item First, we construct a sub-network that realizes a function $\phi_1$ which maps each cube $Q_k$ to its respective index $k$. Specifically, we have $\phi_1(\bm{x}) = k$ for any $\bm{x} \in Q_k$ and $k \in \{1,2,\cdots,M^d\}$.
    
    \item Next, we design a sub-network to implement a function $\phi_2$ that maps each index $k$ approximately to $f(\bm{x}_k)$. Consequently, we obtain $\phi_2 \circ \phi_1(\bm{x}) = \phi_2(k) \approx f(\bm{x}_k) \approx f(\bm{x})$ for any $\bm{x} \in Q_k$ and $k \in \{1,2,\cdots,M^d\}$, implying that $\phi_2 \circ \phi_1 \approx f$ outside of $\Omega$.
\end{enumerate}

\begin{figure}[ht]
	\centering
	\includegraphics[width=0.825\linewidth]{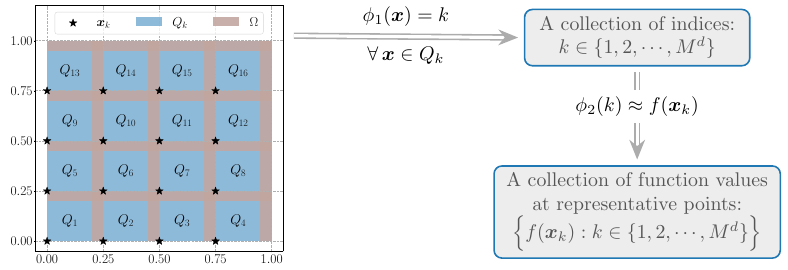}
\caption{An illustration of the approach for constructing a network to approximate \( f \). 
Observe that \(\phi_2 \circ \phi_1 \approx f\) outside of \(\Omega\), as \(\phi_2 \circ \phi_1(\bm{x}) = \phi_2(k) \approx f(\bm{x}_k) \approx f(\bm{x})\) for any \(\bm{x} \in Q_k\) and \(k \in \{1, 2, \dots, M^d\}\). 
}
	\label{fig:idea:main}
\end{figure}


The floor function is quite effective for handling the first step. To simplify the final proof, we introduce Proposition~\ref{prop:floor:approx} below, which demonstrates how to construct a network that efficiently approximates the floor function. The proof of Proposition~\ref{prop:floor:approx} is provided in Section~\ref{sec:proof:prop:floor:approx}.

\begin{proposition}
	\label{prop:floor:approx}
	Given any $\delta\in (0,1)$ and $N,L\in \N^+$,
	there exists 
 \[\phi\in \mn[\big]{\ReLU}{4N-1}{3}{L}{\R}{\R}\]
 such that 
	\begin{equation*}
		\phi(x)=\lfloor x\rfloor\quad \tn{for any $x\in \bigcup_{k=0}^{N^L-1}\big[k,\, k+1-\delta\big].$}
	\end{equation*}
\end{proposition}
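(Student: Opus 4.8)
The plan is to build the step-like approximation to $\lfloor\cdot\rfloor$ one digit at a time in base $N$, exploiting the fact that a single ReLU layer of width $O(N)$ can realize a one-dimensional continuous piecewise linear ``staircase'' with $N$ steps that agrees with $\lfloor\cdot\rfloor$ away from the jump neighborhoods. Concretely, for a fixed small $\gamma\in(0,1)$ I would first construct, using $\cpl(O(N))$ functions, a map $T_\gamma:\R\to\R$ that satisfies $T_\gamma(x)=\lfloor x\rfloor$ for every $x\in\bigcup_{k\ge 0}[k,\,k+1-\gamma]$; this is the standard sawtooth/staircase gadget, writable as a short linear combination of $\ReLU(Nx-j(1-\gamma)-\dots)$-type terms, or more cleanly as $x$ minus a scaled sawtooth, and it uses $O(N)$ neurons in one hidden layer. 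The point of doing this at scale $N$ rather than $2$ is that composing $L$ such maps will multiply the ``resolution'' to $N^L$.

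Next I would iterate. Writing an integer $0\le k< N^L$ in base $N$ as $k=\sum_{i=0}^{L-1} k_i N^i$, the idea is that applying the width-$O(N)$ staircase, suitably rescaled, peels off the top base-$N$ digit, and recursion on the remaining $L-1$ layers handles the rest. More precisely I expect to define $\phi$ as a composition of $L$ layers, each of the form $\bmx\mapsto \bmA_i\,\ReLU(\bmW_i\bmx+\bmb_i)+\bmc_i$ with the inner weights fixed, where one coordinate carries the ``already-decoded'' integer part and another carries the ``remaining fractional/residual'' quantity that still needs to be resolved at finer scale. Keeping a running sum forces the rank (number of components carried between layers) to be a small constant; a bookkeeping count of how many scalar quantities must be passed forward — roughly: the partial decoded value, the rescaled residual, and possibly an auxiliary for the affine offset — should come out to rank $3$, matching the statement, and the per-layer width to $4N-1$. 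I would verify the exact width count by writing the one-layer staircase explicitly: a sawtooth with $N$ teeth on a given interval needs about $4N$ ReLU pieces after accounting for the two-sided clipping, and a careful accounting of shared breakpoints trims this to $4N-1$.

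The key tracking lemma is that after $j$ layers, for any $x$ lying in one of the ``safe'' subintervals $[k,k+1-\delta]$ with $k<N^L$, the $j$-th intermediate value equals exactly $\lfloor x/N^{L-j}\rfloor$ (the top $j$ base-$N$ digits), and — crucially — the residual fed into layer $j+1$ still lands in a safe subinterval for the layer-$(j+1)$ staircase, i.e. the exclusion zones never get hit. This is where the choice of the internal tolerance $\gamma$ (taken small relative to $\delta$, e.g. $\gamma = \delta/(\text{something}\ge 1)$, or even $\gamma$ shrinking geometrically across layers) matters: I need the forbidden $\gamma$-neighborhoods at each scale to be contained inside the single forbidden $\delta$-neighborhood at the coarsest scale, so that excluding $\bigcup_k[k,k+1-\delta]^c$ at the end is enough to guarantee exactness throughout. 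Propagating this inclusion is a short induction on $j$, but it is the one place where the estimates have to be set up correctly rather than just invoked.

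The main obstacle I anticipate is precisely this scale-matching bookkeeping: making the per-layer staircase's exclusion set shrink (in the rescaled coordinate) fast enough that the union over all $L$ layers still sits inside the prescribed $\delta$-gaps, while simultaneously keeping the layer width at the advertised $4N-1$ and the rank at $3$ — rather than letting either blow up by constant factors as one chains the constructions. Everything else (the existence of the one-layer CPwL staircase, the fact that an MMNN layer can realize any $\bmA\,\ReLU(\bmW\bmx+\bmb)+\bmc$, the base-$N$ decomposition) is routine; the care is entirely in the constants and in the residual staying inside the safe region at every level.
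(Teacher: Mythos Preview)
Your proposal is correct and matches the paper's approach essentially exactly: base-$N$ digit extraction via a per-layer CPwL staircase, carrying forward a running partial sum, the freshly extracted digit, and the rescaled residual (hence rank $3$), with width $4N-1 = 1 + (2N-1) + (2N-1)$ coming from one passthrough neuron plus two $\cpwl(2N-2)$ functions (the staircase $h_\ell$ and the residual map $\tilde h_\ell$). Your scale-matching worry is over-cautious: a single fixed gap $\gamma = \delta/N^{L-1}$ works uniformly, since the rescaled residual $z_\ell/N^{L-\ell-1}$ lands in $[n_{\ell+1},\,n_{\ell+1}+1-\delta/N^{L-\ell-1}]\subseteq[n_{\ell+1},\,n_{\ell+1}+1-\gamma]$ and the safe zone only widens as $\ell$ grows.
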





The purpose of $\phi_2$ is to map each $k$ approximately to $f(\bmx_k)$ for $k \in \{1, 2, \cdots, M^d\}$. Notably, in constructing $\phi_2$, we only need to ensure correct values at a finite set of points $\{1, 2, \cdots, M^d\}$, rather than over an entire continuous domain. This key insight significantly simplifies the design of a network that realizes $\phi_2$.
However, even with this simplification, the \ReLU\ activation function is not particularly effective for this type of point-matching problem. In Proposition~\ref{prop:k:to:yk} below, we demonstrate that the \sine{}  function is exceptionally efficient for this task. The proof of Proposition~\ref{prop:k:to:yk} is provided in Section~\ref{sec:proof:prop:k:to:yk}.


\begin{proposition}
    \label{prop:k:to:yk}
    Given any  $\eps>0$ and $y_k\in \R$ for $k=1,2,\cdots,K$, there exist $ v ,w \in\R$ such that
    \begin{equation*}
        \big|u \cdot \sin\big(v  \cdot \sin(kw )\big)-y_k\big|<\eps \quad \tn{for $k=1,2,\cdots,K$,}
    \end{equation*}
    where   \( u = \max\{|y_k| : k = 1, 2, \cdots, K\} \).
\end{proposition}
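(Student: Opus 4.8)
The plan is to determine the three parameters one at a time, working from the outside in. First, fix the outer amplitude by setting $u := 1 + \max_{1 \le k \le K} |y_k|$; then each normalized target $z_k := y_k/u$ lies in $(-1,1)$ and admits some $\alpha_k \in \mathbb{R}$ with $\sin \alpha_k = z_k$. Since $\sin$ is $2\pi$-periodic and $1$-Lipschitz, it descends to a $1$-Lipschitz map on the circle $\mathbb{R}/2\pi\mathbb{Z}$, so it suffices to exhibit $v, w \in \mathbb{R}$ with
\[
\operatorname{dist}\!\bigl(v\sin(kw),\ \alpha_k + 2\pi\mathbb{Z}\bigr) < \eps/u \qquad \text{for } k = 1, \dots, K,
\]
because this yields $\bigl|u\sin\bigl(v\sin(kw)\bigr) - y_k\bigr| = u\,\bigl|\sin\bigl(v\sin(kw)\bigr) - z_k\bigr| < \eps$. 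Writing $t_k := \sin(kw)$, the task becomes a simultaneous approximation statement on the torus $(\mathbb{R}/2\pi\mathbb{Z})^K$ for the linear flow $v \mapsto (vt_1, \dots, vt_K)$.

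Second, I would choose $w$ so that $t_1, \dots, t_K$ are linearly independent over $\mathbb{Q}$. Using the identity $\sin(k\theta) = \sin\theta \cdot U_{k-1}(\cos\theta)$, where $U_{k-1}$ denotes the Chebyshev polynomial of the second kind of degree $k-1$, any relation $\sum_{k=1}^K c_k \sin(kw) = 0$ with $c_k \in \mathbb{Q}$ and $\sin w \neq 0$ is equivalent to $P(\cos w) = 0$ for the polynomial $P := \sum_k c_k U_{k-1}$. Since $U_0, \dots, U_{K-1}$ have pairwise distinct degrees, they are linearly independent, so a nonzero coefficient vector produces a nonzero $P$ of degree $\le K-1$, which has at most $K-1$ roots. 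Taking the union over the countably many nonzero rational vectors $(c_k)$, together with the values where $\sin w = 0$, shows that the set of ``bad'' $w$ is countable; hence almost every $w$ works, and we fix one such $w$. In particular this forces $t_k \neq 0$ for all $k$, since $t_k = 0$ would itself be a rational relation.

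Third, I would produce $v$ via Kronecker's theorem for a linear flow on a torus: $\{(vt_1, \dots, vt_K) \bmod 2\pi\mathbb{Z}^K : v \in \mathbb{R}\}$ is dense in $(\mathbb{R}/2\pi\mathbb{Z})^K$ precisely when $t_1, \dots, t_K$ are $\mathbb{Q}$-linearly independent (equivalently, by Weyl equidistribution). Applying density at the point $(\alpha_1, \dots, \alpha_K)$ with tolerance $\eps/u$ in the sup-metric on the torus gives a real $v$ satisfying the displayed estimate, which completes the argument.

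The one genuinely delicate point is the second step: securing a single $w$ for which all $K$ numbers $\sin(kw)$ are simultaneously $\mathbb{Q}$-independent. This independence is exactly what removes the addition-formula obstructions -- for instance, if $t_i + t_j = t_\ell$ then $\sin(vt_\ell)$ would be forced by $\sin(vt_i)$ and $\sin(vt_j)$ -- and is what lets Kronecker's theorem realize arbitrary target phases. The remaining ingredients (the amplitude rescaling, the Lipschitz reduction to the torus, and the appeal to Kronecker/Weyl) are routine, and degenerate cases such as all $y_k = 0$ or small $K$ are absorbed by the choice $u \ge 1$.
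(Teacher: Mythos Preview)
Your proposal is correct and follows the same three-step skeleton as the paper: normalize by an outer amplitude $u$, choose $w$ so that $\sin(w),\sin(2w),\dots,\sin(Kw)$ are rationally independent, and then invoke torus density to locate $v$. The differences lie in how the two supporting lemmas are proved. For rational independence, the paper argues by real-analyticity: if every $w$ in an interval gave a rational dependence, a countability/Baire-type step would force a single fixed relation $\sum_k \lambda_k \sin(kw)\equiv 0$, and differentiating at $w=0$ (using $\sin^{(m)}(0)=\pm 1$ for odd $m$) forces $\lambda=0$. Your Chebyshev argument---rewriting $\sin(kw)=\sin w\cdot U_{k-1}(\cos w)$ and observing that each nonzero rational relation cuts out finitely many values of $\cos w$---is more elementary and algebraic, and gives the same countable bad set directly. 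For the density step, the paper isolates a separate lemma showing $\{(\sin(va_1),\dots,\sin(va_K)):v\in\R\}$ is dense in $[-1,1]^K$, proved by pulling back Kronecker's theorem through a careful boundary adjustment near $\pm 1$; you instead work on the torus $(\R/2\pi\Z)^K$ and use that $\sin$ is $2\pi$-periodic and $1$-Lipschitz, which is cleaner and sidesteps the boundary issue thanks to your choice $u=1+\max_k|y_k|$ placing targets strictly inside $(-1,1)$. Both routes are sound; yours is a bit more streamlined, while the paper's analyticity argument would transfer more readily to activations other than $\sin$.
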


We remark that Proposition~\ref{prop:k:to:yk} can also be understood through the concept of density. Specifically, for any \( K \in \mathbb{N}^+ \), the set
\begin{equation*}
    \Bigg\{\bigg(u \cdot \sin\big(v \cdot \sin(w)\big),\; u \cdot \sin\big(v \cdot \sin(2w)\big),\; \cdots,\; u \cdot \sin\big(v \cdot \sin(Kw)\big)\bigg) : u, v, w \in \mathbb{R} \Bigg\}
\end{equation*}
is dense in \( \mathbb{R}^K \). 

When analyzing the approximation power of MMNNs activated by \SinTU{s}, we need to leverage the singularity of  \SinTU{s}  for spatial partitioning. To simplify this process, we use \(\ReLU\) for spatial partitioning and employ sub-MMNNs activated by  \SinTU{s}  to reproduce/approximate  \ReLU. Proposition~\ref{prop:approx:ReLU} below is specifically introduced to streamline the proof. The detailed proof of Proposition~\ref{prop:approx:ReLU} can be found in Section~\ref{sec:proof:prop:approx:ReLU}.

\begin{proposition}
	\label{prop:approx:ReLU}
	Given any $B>0$ and $\varrho\in \calS$, there exists
	$\phi_\alpha\in \nn{\varrho}{2}{1}{\R}{\R}$ for each $\alpha\in (0,1)$
	such that
	\begin{equation*}
		\phi_\alpha(x)\rightrightarrows \ReLU(x)\quad \tn{as}\   \alpha\to 0^+ \quad \tn{for any $x\in [-B,B]$.}
	\end{equation*}
\end{proposition}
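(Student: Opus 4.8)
The plan is to realize $\ReLU$ as a uniform limit of one‑neuron $\varrho$‑networks, exploiting two features of $\varrho=\SinTU_s=\sin\circ\calT_s$ (write $s$ for the threshold, so $\varrho(y)=\sin(\max\{y,s\})$): the clamp $\calT_s$ produces a genuine corner, and $\sin$ is locally affine near any point where its derivative is nonzero. The crucial identity is that, for every $\eta\in(0,1)$ and $x\in\R$,
\begin{equation*}
\varrho(\eta x+s)=\sin\!\big(\max\{\eta x+s,\,s\}\big)=\sin\!\big(\eta\,\ReLU(x)+s\big),
\end{equation*}
since $\eta x+s\ge s$ exactly when $x\ge0$. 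Assuming $\cos s\neq0$, I expand the sine and rescale, taking
\begin{equation*}
\phi_\eta(x)\coloneqq\frac{1}{\eta\cos s}\Big(\varrho(\eta x+s)-\sin s\Big)=\frac{\sin(\eta\,\ReLU(x))}{\eta}+\frac{\sin s}{\cos s}\cdot\frac{\cos(\eta\,\ReLU(x))-1}{\eta}.
\end{equation*}
This is exactly $a\,\varrho(wx+b)+c$ with $w=\eta$, $b=s$, $a=(\eta\cos s)^{-1}$, $c=-\sin s\,(\eta\cos s)^{-1}$, so $\phi_\eta\in\nn{\varrho}{2}{1}{\R}{\R}$; in fact one hidden neuron already works and the width bound $2$ is to spare.

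For the convergence step, $\ReLU(x)\in[0,B]$ for $x\in[-B,B]$, so the elementary estimates $|\sin t-t|\le|t|^3/6$ and $|\cos t-1|\le t^2/2$ give, for all $x\in[-B,B]$,
\begin{equation*}
\Big|\tfrac{\sin(\eta\,\ReLU(x))}{\eta}-\ReLU(x)\Big|\le\frac{\eta^2B^3}{6}\qquad\text{and}\qquad\Big|\tfrac{\cos(\eta\,\ReLU(x))-1}{\eta}\Big|\le\frac{\eta B^2}{2}.
\end{equation*}
Both right‑hand sides vanish as $\eta\to0^+$, hence $\phi_\eta\to\ReLU$ uniformly on $[-B,B]$, which is the assertion; only $B$ and $\varrho$ enter, the parameter $k$ being inert. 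In particular this yields an explicit rate $\|\phi_\eta-\ReLU\|_{\sup([-B,B])}\le C(s,B)(\eta+\eta^2)$, which is convenient for tracking the total error in the proofs of Theorems~\ref{thm:main1}--\ref{thm:main2}.

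\textbf{The main obstacle} is the degenerate threshold $s\in\tfrac{\pi}{2}+\pi\Z$, where $\cos s=0$: then the corner of $\varrho(\eta x+s)$ sits exactly at a stationary point of $\sin$, so $\varrho(\eta x+s)=\pm\cos(\eta\,\ReLU(x))$ is flat to first order in $\eta$ and the rescaling above collapses; shifting the input does not help either, since the clamp always activates at argument $s$, pinning the slope there to $\cos s=0$. For this exceptional family I would abandon the small‑angle rescaling and instead build the ramp from the second‑order behaviour of $\varrho$ near its corner, combining two neurons whose quadratic corners are placed symmetrically about the origin. Extracting a genuine uniform $\ReLU$ limit from such a combination — rather than a softplus‑ or quadratic‑type surrogate — within the stated width is the point I expect to require the most care, so I would treat these $s$ by a separate argument (relaxing the width to a small absolute constant for them if necessary).
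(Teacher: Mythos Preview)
Your treatment of the non-degenerate case $\cos s\neq0$ is correct and in substance identical to the paper's Case~1, though your formulation via the identity $\varrho(\eta x+s)=\sin(\eta\,\ReLU(x)+s)$ is more transparent and yields explicit rates; the paper phrases the same construction as a one-sided difference quotient at the corner $x_0=s$.

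The genuine gap is the degenerate case $s\in\tfrac{\pi}{2}+\pi\Z$. Your suggestion of placing two ``quadratic corners'' symmetrically about the origin does not recover $\ReLU$: with $\cos s=0$ one has $\varrho(\eta x+s)=\pm\cos(\eta\,\ReLU(x))$, so any affine combination of $\varrho(\pm\eta x+s)$ produces even/odd combinations of $\cos(\eta|x|)$, which after rescaling give $|x|^2$- or $x|x|$-type limits rather than $\ReLU(x)$. Relaxing the width is also unnecessary.

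The paper's fix is different in spirit: when $\cos s=0$ the corner of $\varrho$ is $C^1$, but $\varrho'$ itself has a genuine first-order corner at $s$ (left slope $0$, right slope $-\sin s=\pm1$). One therefore runs the \emph{same} difference-quotient construction on $\varrho'$ to get $\tilde\phi_\eps(x)=\tfrac{\varrho'(s+\eps x)-\varrho'(s)}{\tilde L\eps}\to\ReLU(x)$, and then accesses $\varrho'$ through a finite difference $h_{\eta}(z)=\tfrac{\varrho(z+\eta)-\varrho(z)}{\eta}$. Substituting yields
\[
\phi_\eps(x)=\frac{\varrho(s+\eps x+\eta_\eps)-\varrho(s+\eps x)-\eta_\eps\,\varrho'(s)}{\tilde L\,\eps\,\eta_\eps},
\]
which is an affine combination of exactly two $\varrho$-neurons, so $\phi_\eps\in\nn{\varrho}{2}{1}{\R}{\R}$, and a diagonal choice of $\eta_\eps$ gives uniform convergence to $\ReLU$ on $[-B,B]$. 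The key idea you are missing is to extract the first-order kink from the \emph{derivative} via a second finite difference, not from a symmetric pair of quadratic pieces.
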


The above proposition demonstrates that two active \(\varrho\)-activated neurons are sufficient to approximate \(\ReLU\) arbitrarily well.



\subsection{Detailed Proofs of Theorems~\ref{thm:main1} and \ref{thm:main2} Based on Propositions}
\label{sec:detailed:proof:thm:main}

We are now prepared to present the detailed proofs of Theorems~\ref{thm:main1} and \ref{thm:main2}, assuming the validity of Propositions~\ref{prop:floor:approx},  \ref{prop:k:to:yk}, and~\ref{prop:approx:ReLU}, which will be proven in Sections~\ref{sec:proof:prop:floor:approx}, \ref{sec:proof:prop:k:to:yk}, and~\ref{sec:proof:prop:approx:ReLU}, respectively.

	
	

Let $M=N^L$ and $\delta\in (0,1)$ be a small number determined later.
We first divide $[0,1]^d$ into a set of sub-cubes  and a small region. To this end, we
define  $\tildebmx_\bmbeta \coloneqq \bmbeta/M$ and 
\[
\tildeQ_{\bm{\beta}}\coloneqq\Big\{{\bm{x}}= (x_1,\cdots,x_d) \in [0,1]^d:x_i\in[\tfrac{\beta_i}{M},\tfrac{\beta_i+1-\delta}{M}], \ i=1,\cdots,d\Big\}
\]
for each $d$-dimensional index  ${\bm{\beta}}= (\beta_1,\cdots,\beta_d) \in \{0,1,\cdots,M-1\}^d$. Then the ``negligible'' region $\Omega$, given by
\[\Omega=[0,1]^d \setminus  \big(\cup_{\bm{\beta}\in \{0,1,\cdots,M-1\}^d}\tildeQ_{\bm{\beta}}\big),\]
has a sufficiently small measure for small \(\delta\).

To simplify notation, we reindex the \(d\)-dimensional indices as one-dimensional indices. For this purpose, we establish a one-to-one mapping between \(\{0, 1, \cdots, M-1\}^d\) and \(\{1, 2, \cdots, M^d\}\), defined by\footnote{Note that the definition of \(g\) is inspired by concepts from representations of integers in various bases.}
\[
g(\bm{\beta}) = 1 + \sum_{i=1}^{d} \beta_i \cdot M^{i-1}\quad \tn{for any $\bmbeta=(\beta_1,\cdots,\beta_d)\in \{0, 1, \cdots, M-1\}^d$}.
\]
Thus, for each \(k \in \{1, 2, \cdots, M^d\}\), there exists a unique \(\bm{\beta} \in \{0, 1, \cdots, M-1\}^d\) such that \(g(\bm{\beta}) = k\). Accordingly, we reindex \(\tilde{\bm{x}}_{\bm{\beta}}\) and \(\tilde{Q}_{\bm{\beta}}\) as \(\bm{x}_k\) and \(Q_k\), respectively. That is,
\begin{equation*}
    \bm{x}_{k}= \tilde{\bm{x}}_{\bm{\beta}}
 \quad \tn{and}\quad  Q_{k}=  \tilde{Q}_{\bm{\beta}}\quad \tn{with $k=g(\bmbeta)$}\quad \tn{for any $\bmbeta=(\beta_1,\cdots,\beta_d)\in \{0, 1, \cdots, M-1\}^d$}.
\end{equation*}
See Figure~\ref{fig:Omega:Q:x} for illustrations of  $\Omega$,  $\tildeQ_\bmbeta$,  $\tildebmx_\bmbeta$, $Q_k$, and $\bmx_k$ for $\bmbeta\in \{0,1,\cdots,M-1\}^d$ and $k\in \{1,2,\cdots,M^d\}$
 when $M=4$ and $d=2$.

 \begin{figure}[!htp]
	\centering
	\begin{minipage}{0.905\textwidth}
		\centering
		\begin{subfigure}[b]{0.4\textwidth}
			\centering
			\includegraphics[width=0.7999\textwidth]{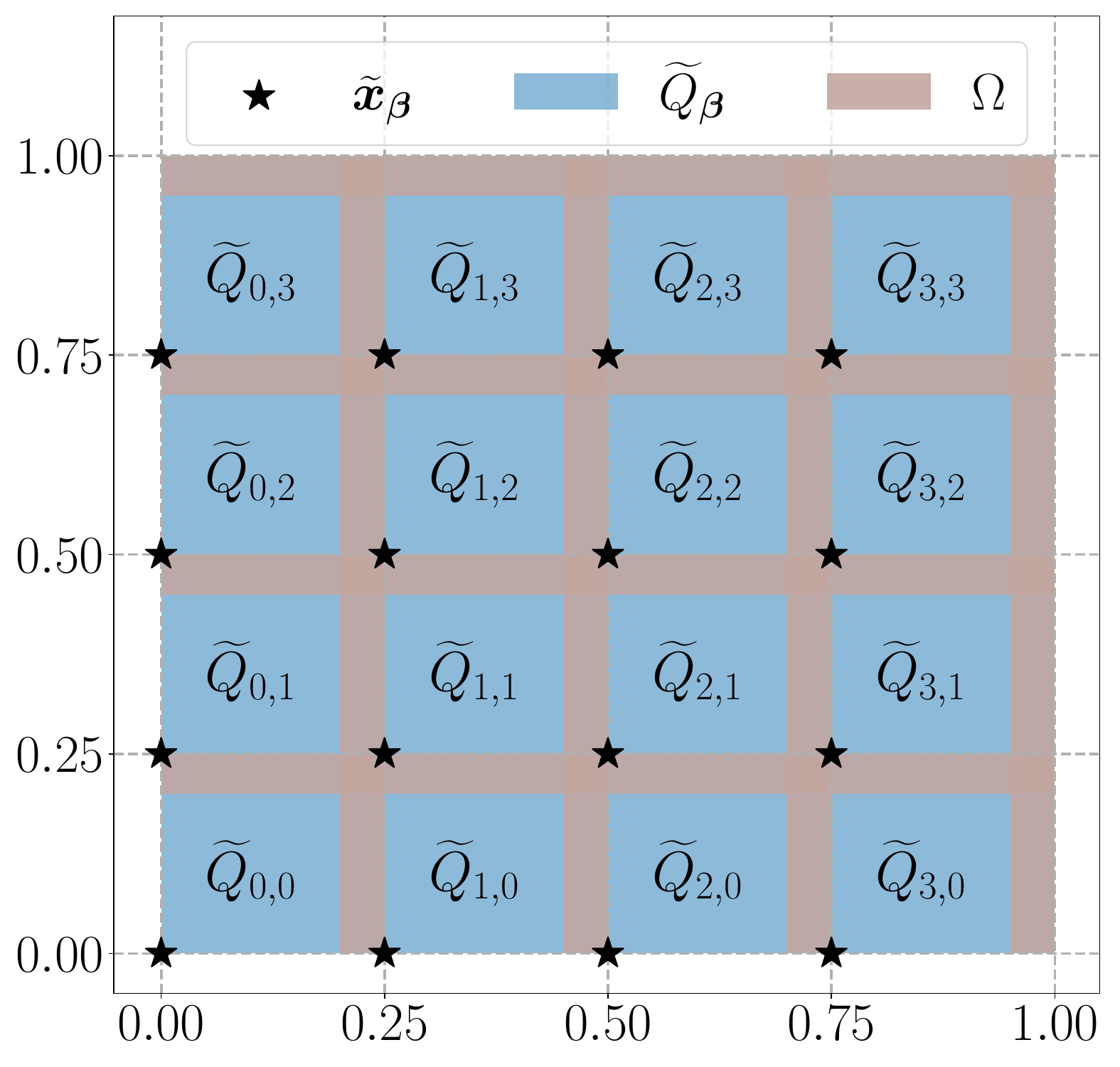}
		\end{subfigure}
			\begin{minipage}{0.1\textwidth}
				\
			\end{minipage}
		\begin{subfigure}[b]{0.4\textwidth}
			\centering
			\includegraphics[width=0.7999\textwidth]{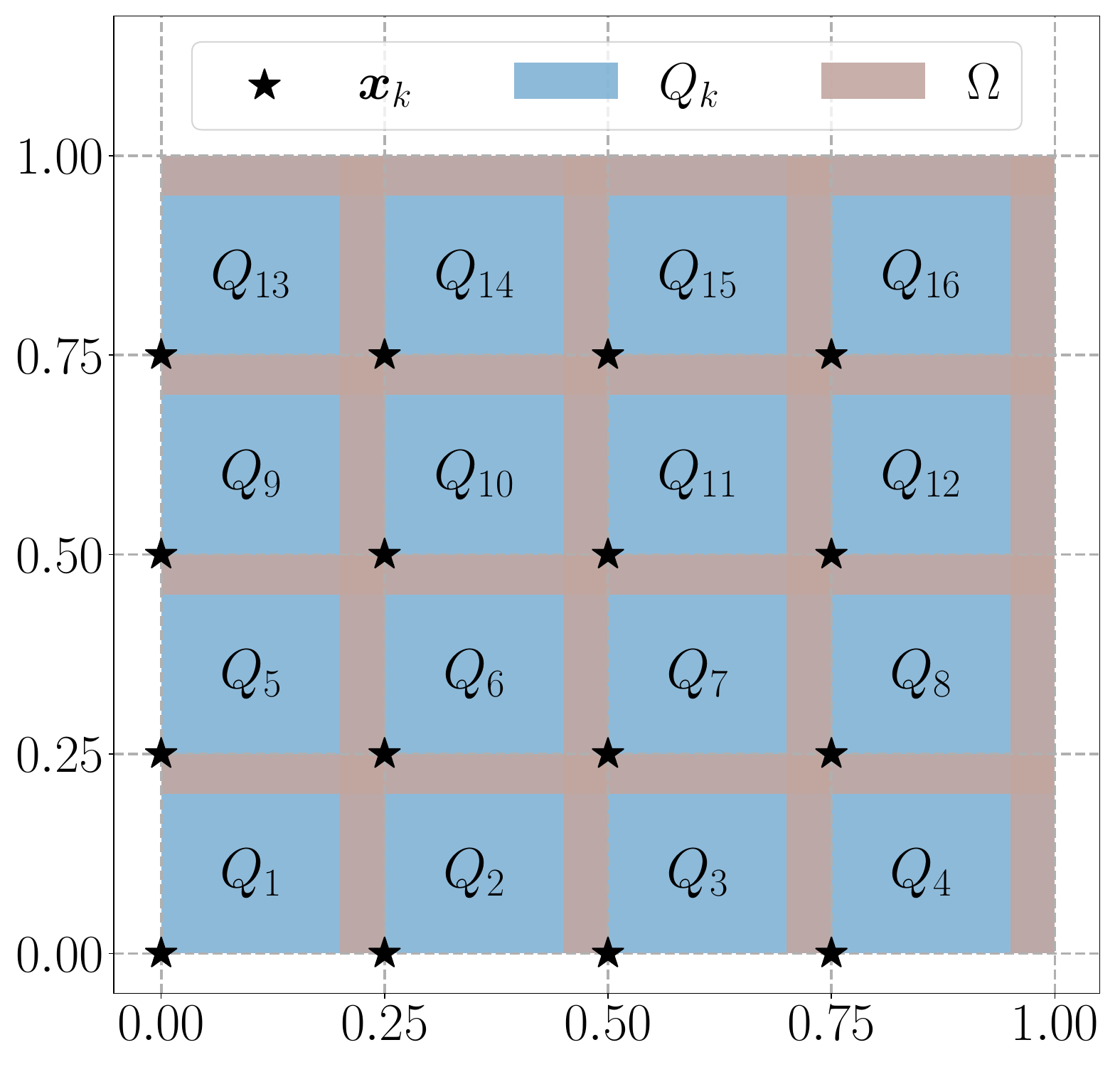}
		\end{subfigure}
	\end{minipage}
	\caption{Illustrations of  $\Omega$,  $\tildeQ_\bmbeta$,  $\tildebmx_\bmbeta$, $Q_k$, and $\bmx_k$ for $\bmbeta\in \{0,1,\cdots,M-1\}^d$ and $k\in \{1,2,\cdots,M^d\}$
 when $M=4$ and $d=2$. }
	\label{fig:Omega:Q:x}
\end{figure}

Next, we construct a network-realized function that maps \(\bm{x}  \in Q_k\) to \(k\) for any \(k \in \{1, 2, \cdots, M^d\}\). Fixing \(\bm{x}= (x_1, \cdots, x_d) \in Q_k\) for some \(k \in \{1, 2, \cdots, M^d\}\), there exists a unique \(\bm{\beta} = (\beta_1, \cdots, \beta_d)\) such that \(g(\bm{\beta}) = k\). 
Then, \(\bm{x} \in Q_k = \tilde{Q}_{\bm{\beta}}\) implies
\begin{equation*}
    x_i \in \Big[\frac{\beta_i}{M},\; \frac{\beta_i + 1 - \delta}{M}\Big]\quad \tn{for } i = 1, \cdots, d,
\end{equation*}
from which we deduce
\begin{equation}\label{eq:Mxi:in:beta:i:i+1}
    Mx_i \in [\beta_i,\; \beta_i + 1 - \delta]  \quad \tn{with } \beta_i \in \{0, 1, \cdots, M-1\} \quad  \tn{ for } i = 1, \cdots, d.
\end{equation}
By Proposition~\ref{prop:floor:approx}, there exists a function \( {\psi}  \in \mn[\big]{\ReLU}{4N-1}{3}{L}{\mathbb{R}}{\mathbb{R}}\) such that
\begin{equation*}
     {\psi} (x) = \lfloor x \rfloor \quad \text{for any } x \in \bigcup_{m=0}^{N^L - 1} \big[m, m + 1 - \delta\big] = \bigcup_{m=0}^{M - 1} \big[m, m + 1 - \delta\big].
\end{equation*}
Then, by Equation~\eqref{eq:Mxi:in:beta:i:i+1}, we have
\begin{equation*}
    {\psi}(Mx_i) = \beta_i \quad \text{for } i = 1,   \cdots, d.
\end{equation*}
By defining
\begin{equation*}
\bmPsi(\bmy)\coloneqq \Big(\psi(My_1),\psi(My_2),\cdots,\psi(My_d)\Big) \quad \tn{for any } \bmy=(y_1,\cdots,y_d) \in \R^d,
\end{equation*}
we have $\bmPsi(\bmx)=\bmbeta$, implying
\begin{equation*}
    g\circ \bmPsi(\bmx)=g(\bmbeta)=k.
\end{equation*}
Since \(\bm{x} \in Q_k\) and \(k \in \{1, 2, \cdots, M^d\}\) are arbitrary, we have
\begin{equation}\label{eq:g:tildePsi:x:to:k}
    g \circ \bmPsi(\bm{x}) = k \quad \tn{for any } \bm{x} \in Q_k \text{ and } k \in \{1, 2, \cdots, M^d\}.
\end{equation}

To construct a function that maps \( k \in \{1, 2, \cdots, M^d\} \) approximately to \( f(\bm{x}_k) \), we apply Proposition~\ref{prop:k:to:yk} with \( K = M^d \) and \( y_k = f(\bm{x}_k) \). This yields the existence of constants \( u, v, w \in \mathbb{R} \) such that
\begin{equation}
\label{eq:sinsin:approx:f(xk):eps}
    \left| u \cdot \sin\big(v \cdot \sin(k w)\big) - f(\bmx_k) \right| < \epsilon \quad \text{for } k = 1, 2, \cdots, M^d,
\end{equation}
where \( \epsilon > 0 \) is a small number to be determined later.

We define 
\begin{equation*}
    \phi_1(\bmy)\coloneqq  g\circ \bmPsi(\bmy)\quad \tn{for any $\bmy\in \R^d$}
\end{equation*}
and
\begin{equation*}
    \phi_2(z) = u \cdot \sin\big(v \cdot \sin(wz)\big)
    \quad \tn{for any $z\in \R$.}
\end{equation*}
It follows from
\(\psi \in \mn[\big]{\ReLU}{4N-1}{3}{L}{\mathbb{R}}{\mathbb{R}}\)
that 
\[{\bm{\Psi}} \in \mn[\big]{\ReLU}{d(4N-1)}{3d}{L}{\mathbb{R}^d}{\mathbb{R}^d}.\]
Recall that \( g:\R^d\to\R\) is an affine linear map. 
By combining \(g\) with the final affine linear map (corresponding to \(\bm{\Psi}\)) into a new mapping, we obtain
\[\phi_1 =  g \circ {\bm{\Psi}} \in \mn[\big]{\ReLU}{d(4N-1)}{3d}{L}{\mathbb{R}^d}{\mathbb{R}}.\]
Since $\phi_2(z) = u \cdot \sin\big(v \cdot \sin(wz)\big)$, we have
\[ \phi_2\circ\phi_1  \in \mn[\big]{(\sin,\ReLU)}{d(4N-1)}{3d}{L+2}{\mathbb{R}^d}{\mathbb{R}}.\]

To complete the proof of Theorem~\ref{thm:main2}, where the corresponding $\phi$ is defined as $\phi \coloneqq \phi_2 \circ \phi_1$, it remains to bound the approximation error.
For any \(\bm{x} \in Q_k\) and \(k \in \{1, 2, \cdots, M^d\}\), by Equations~\eqref{eq:g:tildePsi:x:to:k} and \eqref{eq:sinsin:approx:f(xk):eps}, we have
\begin{equation*}
    \begin{split}
        |\phi_2\circ \phi_1(\bmx)-f(\bmx_k)|  
        =\Big|\phi_2\Big(
        \underbrace{g\circ\bmPsi(\bmx)}_{\tn{$=k$ by \eqref{eq:g:tildePsi:x:to:k}}}
        \Big)-f(\bmx_k)\Big|        
        &= |\phi_2 ( k )-f(\bmx_k) |
        \\ &
         =\left| u \cdot \sin\big(v \cdot \sin(k w)\big) - f(\bmx_k) \right| < \epsilon
    \end{split}
\end{equation*}
and $\|\bmx_k-\bmx\|_2\le 
\frac{\sqrt{d}}{M}$,
from which we deduce
\begin{equation*}
    \begin{split}
        |\phi_2\circ \phi_1(\bmx)-f(\bmx)|
         \le 
        |\phi_2\circ \phi_1(\bmx)-f(\bmx_k)|
        +|f(\bmx_k)-f(\bmx)|
        \le \eps + \omega_f(\tfrac{\sqrt{d}}{M}).
    \end{split}
\end{equation*}
Recall that 
\begin{equation*}
    \bigcup_{k\in \{1,2,\cdots,M^d\}} Q_{k} = [0,1]^d \setminus \Omega\quad \tn{ and}  \quad
  \|\phi_2 \circ \phi_1\|_{L^\infty([0,1]^d)} \le \|\phi_2\|_{L^\infty(\mathbb{R})}  \le |u|,
\end{equation*}
where \(u\) is a constant determined by \(f\) and is independent of \(\delta\).
Therefore
	\begin{equation*}
	\begin{split}
	\| \phi_2\circ \phi_1-f\|_{L^p([0,1]^d)}^p
	&=\int_{\Omega}| \phi_2\circ \phi_1(\bmx)-f(\bmx)|^p\tn{d}\bmx
    +
    \int_{[0,1]^d\setminus \Omega}|\phi_2\circ \phi_1(\bmx)-f(\bmx)|^p\tn{d}\bmx
 \\
 	&\le 
 \mu(\Omega)\cdot \| \phi_2\circ \phi_1-f\|_{L^\infty(\Omega)}^p
 + \sum_{k=1}^{M^d}\int_{Q_k}| \phi_2\circ \phi_1(\bmx)-f(\bmx)|^p\tn{d}\bmx
	\\&\le 
 \mu(\Omega)\cdot \Big(|u|+\|f\|_{L^\infty([0,1]^d)}\Big)^p
 + \sum_{k=1}^{M^d}
 \mu(Q_k)\cdot\Big(\eps + \omega_f(\tfrac{\sqrt{d}}{M})\Big)^p
 \\&\le 
 \mu(\Omega)\cdot \Big(|u|+\|f\|_{L^\infty([0,1]^d)}\Big)^p
 + \Big(\eps + \omega_f(\tfrac{\sqrt{d}}{M})\Big)^p
 \le \Big(\tfrac{11}{10}\omega_f(\tfrac{\sqrt{d}}{M})\Big)^p
	\end{split}
	\end{equation*}
where the last inequality is achieved by setting
\begin{equation*}
    \eps = \tfrac{1}{11}\omega_f(\tfrac{\sqrt{d}}{M})>0
\end{equation*}
and choosing a sufficiently small \(\delta \in (0,1)\) to make \(\mu(\Omega)\) small enough, ensuring that
\begin{equation*}
    \mu(\Omega)\cdot \Big(|u|+\|f\|_{L^\infty([0,1]^d)}\Big)^p
    \le  \Big(\tfrac{11}{10}\omega_f(\tfrac{\sqrt{d}}{M})\Big)^p - \Big(\tfrac{12}{11}\omega_f(\tfrac{\sqrt{d}}{M})\Big)^p
    >0.
\end{equation*}
We note that the condition \(\omega_f\big(\frac{\sqrt{d}}{M}\big) > 0\) can be satisfied; otherwise, \(f\) would be a constant function, which is a trivial case.
That is, we obtain
\begin{equation}
\label{eq:phi2:phi1:-f}
\begin{split}
\| \phi_2\circ \phi_1-f\|_{L^p([0,1]^d)}
\le \tfrac{11}{10}\omega_f(\tfrac{\sqrt{d}}{M}).
\end{split}
\end{equation}
Recall that  $\omega_f(n\cdot  t)\le n\cdot \omega_f(t)$ for any $n\in\N^+$ and $t\in [0,\infty)$. Thus, we have
	\begin{equation*}
	\begin{split}
	\| \phi-f\|_{L^p([0,1]^d)}
 \le  \tfrac{11}{10}\omega_f(\tfrac{\sqrt{d}}{M}) \le \tfrac{11}{10}\omega_f(\tfrac{\lceil\sqrt{d}\rceil}{M}) 
 \le   \tfrac{11}{10}\big\lceil\sqrt{d}\big\rceil\cdot\omega_f(\tfrac{1}{M}) 
 \le 2\sqrt{d}\cdot\omega_f(N^{-L}),
	\end{split}
	\end{equation*}
where the last inequality follows from \(M = N^L\) and the fact that \(\frac{11}{10} \lceil \sqrt{n} \, \rceil \le 2 \sqrt{n}\) for any \(n \in \mathbb{N}^+\). This completes the proof of Theorem~\ref{thm:main2}.

Recall that \(\varrho \in \mathcal{S}\) is a \SinTU{} activation function. 
To complete the proof of Theorem~\ref{thm:main1}, it is necessary to implement/approximate \(\phi_1\) and \(\phi_2\) using $\varrho$-activated MMNNs, rather than \ReLU{} or \(\sine\) as was done in the proof of Theorem~\ref{thm:main2}.

First, we will construct 
$\phi_{1,\eta}\in \mn{\varrho}{2d(4N-1)}{3d}{L}{\R^d}{\R}$ for any $\eta\in(0,1)$ such that
\begin{equation*}
		\phi_{1,\eta}(\bmx)\rightrightarrows \phi_{1 }(\bmx)\quad \tn{as}\  \eta\to 0^+\quad \tn{for any $\bmx\in [0,1]^d$.}
	\end{equation*}
To this end, we first construct a $\varrho$-activation MMNN to approximate the \ReLU\ function, since $\phi_{1 }\in \mn{\ReLU}{d(4N-1)}{3d}{L}{\R^d}{\R}$. 
Without loss of generality, we assume that $\phi_1$ can be expressed as a composition of functions
	\begin{equation*}
		\phi_1(\bmx) =\calA_L\circ\ReLU\circ
        \calhatA_{L-1}\circ\caltildeA_{L-1}\circ  \ \cdots \  \circ \ReLU\circ \calhatA_{ 1}\circ\caltildeA_{ 1}\circ \ReLU\circ\calA_0(\bmx)\quad \tn{for any $\bmx\in\R^d$},
	\end{equation*}
	where $\calA_0:\R^d\to \R^{d(4N-1)}$, $\calbmtildeA_\ell:\R^{d(4N-1)}\to\R^{3d}$, $\calbmhatA_\ell:\R^{3d}\to \R^{d(4N-1)}$, and $\calA_L:\R^{d(4N-1)}\to\R $ are affine linear maps for any $\ell\in \{1,2,\cdots,L-1\}$.
    By setting $\calA_\ell=\calbmhatA_\ell\circ\calbmtildeA_\ell$ for any $\ell\in \{1,2,\cdots,L-1\}$, we have
	\begin{equation*}
		\phi_1(\bmx) =\calA_L\circ\ReLU\circ\calA_{L-1}\circ  \ \cdots \  \circ \ReLU\circ\calA_1\circ\ReLU\circ\calA_0(\bmx)\quad \tn{for any $\bmx\in\R^d$},
	\end{equation*}

Since $\varrho\in\calS$, by Proposition~\ref{prop:approx:ReLU}, there exists 
$\sigma_\eta\in\nn{\varrho}{2}{1}{\R}{\R}$
for each $\eta\in(0,1)$
such that
\begin{equation*}
		\sigma_{ \eta}( x)\rightrightarrows \ReLU(x)\quad \tn{as}\  \eta\to 0^+\quad \tn{for any $ x\in [-B,B]$,}
	\end{equation*}
    where $B>0$ is a large number determined later.

	For each $\eta\in (0,1)$, we define 
		\begin{equation*}
		\phi_{1,\eta}(\bmx) =\calA_L\circ\sigma_{\eta}\circ\calA_{L-1}\circ  \ \cdots \  \circ \sigma_{\eta}\circ\calA_1\circ\sigma_{\eta}\circ\calA_0(\bmx)\quad \tn{for any $\bmx\in\R^d$}.
	\end{equation*}
    In other words,
    	\begin{equation*}
		\phi_{1,\eta}(\bmx) =\calA_L\circ\sigma_{\eta}\circ
        \calhatA_{L-1}\circ\caltildeA_{L-1}\circ  \ \cdots \  \circ \sigma_{\eta}\circ \calhatA_{ 1}\circ\caltildeA_{ 1}\circ \sigma_{\eta}\circ\calA_0(\bmx)\quad \tn{for any $\bmx\in\R^d$}.
	\end{equation*}
    Recall that 
$\sigma_\eta\in\nn{\varrho}{2}{1}{\R}{\R}$. To replace the \ReLU{} activation function with $\sigma_\eta$ in a network, we substitute each \ReLU{}   with two $\varrho$-activated neurons. Consequently,
$\phi_{1}\in \mn{\ReLU}{d(4N-1)}{3d}{L}{\R^d}{\R}$ implies
	\begin{equation*}
		\phi_{1,\eta}\in \mn{\varrho}{2d(4N-1)}{3d}{L}{\R^d}{\R}.
	\end{equation*}
	
    Next, we will prove 
	\begin{equation*}
		\phi_{1,\eta}( \bmx) 
		\rightrightarrows
		\phi_{1}( \bmx)
		\quad \tn{as}\   \eta\to 0^+
		\quad \tn{for any $\bmx\in[0,1]^d$}.
	\end{equation*}	
     For each $\eta \in (0,1)$ and $\ell = 0, 1, \cdots, L$, let $\bmh_\ell$ and $\bmh_{\ell,\eta}$ denote the functions represented by the first $\ell$ hidden layers of the MMNNs corresponding to $\phi_1$ and $\phi_{1,\eta}$, respectively, i.e.,
	\begin{equation*}
		\bmh_\ell(\bmx)
		\coloneqq \calA_{\ell }\circ\ReLU\circ\calA_{\ell-1}\circ  \ \cdots \  \circ \ReLU\circ\calA_1\circ\ReLU\circ\calA_0(\bmx)\quad \tn{for any $\bmx\in\R^d$}
	\end{equation*}
	and 
	\begin{equation*}
		\bmh_{\ell,\eta}(\bmx)
		\coloneqq \calA_{\ell}\circ\sigma_\eta\circ\calA_{\ell-1}\circ  \ \cdots \  \circ \sigma_\eta\circ\calA_1\circ\sigma_\eta\circ\calA_0(\bmx)\quad \tn{for any $\bmx\in\R^d$}.
	\end{equation*}
	For  $\ell = 0, 1, \cdots, L$, we will prove by induction that	\begin{equation}\label{eq:induction:h:ell}
		\bmh_{\ell,\eta}(\bmx)\rightrightarrows \bmh_{\ell}(\bmx) \quad \tn{as}\    \eta\to 0^+ \quad \tn{for any $\bmx\in [0,1]^d$.}
	\end{equation}

	First, we consider the base case $\ell=0$. Clearly,
	\begin{equation*}
		\bmh_{0,\eta}(\bmx)=\calA_0(\bmx)= \bmh_{0}(\bmx)\rightrightarrows \bmh_{0}(\bmx) \quad \tn{as}\  \eta\to 0^+\quad \tn{for any $\bmx\in[0,1]^d$.}
	\end{equation*}
	This means Equation~\eqref{eq:induction:h:ell} holds for $\ell=0$.
	
	Next, supposing Equation~\eqref{eq:induction:h:ell} holds for $\ell=i\in \{0,1,\cdots,L-1\}$, our goal is to prove that it also holds for $\ell=i+1$. Determine $B>0$ via
	\begin{equation*}
		B=  \max \Big\{\|\bmh_j\|_{\sup([0,1]^d)}+1:
		 j=0,1,\cdots,L \Big\},
	\end{equation*}
	where the continuity of $\ReLU$ guarantees the above supremum is finite, i.e., $B\in [1,\infty)$.
	By the induction hypothesis, we have
	\begin{equation*}
		\bmh_{i,\eta}(\bmx)\rightrightarrows \bmh_{i}(\bmx) \quad \tn{as}\    \eta\to 0^+\quad \tn{for any $\bmx\in [0,1]^d$.}
	\end{equation*}
	Clearly, for any $\bmx\in [0,1]^d$, we have $\|\bmh_{i}(\bmx)\|_{\ell^\infty}\le B$ and  \begin{equation*}
	    \|\bmh_{i,\eta}(\bmx)\|_{\ell^\infty} \le \|\bmh_{i}(\bmx)\|_{\ell^\infty}+1\le  B\quad \tn{ for small $\eta>0$.}
	\end{equation*}
	
	Recall that $\sigma_\eta(t)\rightrightarrows \ReLU(t)$ as $\eta\to 0^+ $ for any $t\in [-B,B]$. Then, we have 
	\begin{equation*}
		\sigma_\eta\circ \bmh_{i,\eta}(\bmx)
		-\ReLU\circ \bmh_{i,\eta}(\bmx)
		\rightrightarrows \bmzero\quad \tn{as}\   \eta\to 0^+ \quad \tn{for any $\bmx\in [0,1]^d$.}
	\end{equation*}
	The continuity of $\ReLU$ implies
	the uniform continuity of $\ReLU$ on $[-B,B]$, from which we deduce
	\begin{equation*}
		\ReLU\circ \bmh_{i,\eta}(\bmx)
		-
		\ReLU\circ\bmh_{i}(\bmx) 
		\rightrightarrows \bmzero
		\quad \tn{as}\    \eta\to 0 ^+\quad \tn{for any $\bmx\in [0,1]^d$.}
	\end{equation*}

	Therefore, for any $\bmx\in [0,1]^d$, as $\eta\to 0^+ $, we have
	\begin{equation*}
		\sigma_\eta\circ \bmh_{i,\eta}(\bmx)
		-
		\ReLU\circ\bmh_{i}(\bmx) 
		=
		\underbrace{
			\sigma_\eta\circ \bmh_{i,\eta}(\bmx)
			-\ReLU\circ \bmh_{i,\eta}(\bmx)
		}_{\rightrightarrows \bmzero}
		+
		\underbrace{\ReLU\circ \bmh_{i,\eta}(\bmx)
			-
			\ReLU\circ\bmh_{i}(\bmx) 
		}_{\rightrightarrows \bmzero}
		\rightrightarrows \bmzero,
	\end{equation*}
	from which we deduce
	\begin{equation*}
		\bmh_{i+1,\eta}(\bmx)=\calA_{i+1}\circ\sigma_\eta\circ \bmh_{i,\eta}(\bmx)
		\rightrightarrows
		\calA_{i+1}\circ\ReLU\circ \bmh_{i}(\bmx)=\bmh_{i+1}(\bmx).
	\end{equation*}
	This means Equation~\eqref{eq:induction:h:ell} holds for $\ell=i+1$. So we complete the inductive step.

	By the principle of induction,  we have
	\begin{equation*}
		\phi_{1,{\eta}}(\bmx) =\bmh_{L ,\eta}(\bmx)
		\rightrightarrows
		\bmh_{L }(\bmx)=
		\phi_{1}(\bmx)
		\quad \tn{as}\   \eta\to 0^+ 
		\quad \tn{for any $\bmx\in[0,1]^d$}.
	\end{equation*}

Next, we consider  replacing \(\sine\) in $\phi_2$ with \(\varrho\). 
Since $\varrho\in\calS$, there exists \( s\in\R \) such that \(\varrho(x) = \sin(x)\) for all \( x \ge s \).
Since $\phi_1$ is bounded on $[0,1]^d$, there exists a sufficiently large integer \( m \in \mathbb{Z} \) such that
\[
    2m\pi  + w\cdot \phi_1(\bmx) \ge s 
    \quad \text{and} \quad
    2m\pi  + v \cdot \sin\big(w\cdot \phi_1(\bmx)\big)  \ge s\quad \tn{for any $\bmx\in[0,1]^d$},
\]
from which we deduce
\begin{equation*}
\begin{split}
    &\phantom{=\;\;}\sin\Big(v \cdot \sin\big(w\cdot \phi_1(\bmx)\big)\Big)
    = \sin\Big(\underbrace{2m \pi + v \cdot \sin\big(w\cdot \phi_1(\bmx)\big)}_{\ge\; s}\Big)
    = \varrho\Big(2m \pi + v \cdot \sin\big(w\cdot \phi_1(\bmx)\big) \Big)
    \\ & = \varrho\Big(2m \pi + v \cdot \sin\big(\underbrace{2m \pi +  w\cdot \phi_1(\bmx)}_{\ge \; s}\big) \Big)
    = \varrho\Big(2m \pi + v \cdot \varrho\big( 2m \pi +   w\cdot \phi_1(\bmx) \big) \Big).
\end{split}
\end{equation*}
Then by defining 
\begin{equation*}
   \tildephi_2(y)
   \coloneqq 
   u\cdot \varrho\Big(2m \pi + v \cdot \varrho\big( 2m \pi +   w y \big) \Big)\quad \tn{for any $y\in\R $,}
\end{equation*}
for any $\bmx\in[0,1]^d$, we have
\begin{equation}
\label{eq:tildephi2:to:phi2}
\begin{split}
\tildephi_2\circ \phi_1(\bmx)
&= u\cdot
\varrho\Big(2m \pi + v \cdot \varrho\big( 2m \pi +   w\cdot \phi_1(\bmx) \big) \Big)
\\&=u\cdot \sin\Big(v \cdot \sin\big(w\cdot \phi_1(\bmx)\big)\Big)
    = \phi_2\circ \phi_1(\bmx).
\end{split}
\end{equation}

Recall that
\[|\phi_{1,{\eta}}(\bmx) |\le |\phi_1(\bmx)|+1\le B\quad \tn{ for small $\eta>0$ and any $\bmx\in [0,1]^d$}.\]
The continuity of $\tildephi_2$ implies
	the uniform continuity of $\tildephi_2$ on $[-B,B]$, from which we deduce
	\begin{equation*}
		\tildephi_2\circ \phi_{1,{\eta}}(\bmx) 
		\rightrightarrows		
		\tildephi_2\circ\phi_{1}(\bmx)
		\quad \tn{as}\   \eta\to 0 
		\quad \tn{for any $\bmx\in[0,1]^d$}.
	\end{equation*}
    Then we can choose sufficiently small $\eta_0>0$ such that
    \begin{equation}
    \label{eq:tildephi2:phi1:error}
        \|\tildephi_2\circ \phi_{1,{\eta_0}}-		
		\tildephi_2\circ\phi_{1}\|_{L^p([0,1]^d)}\le \tfrac{1}{10}\omega_f(\tfrac{\sqrt{d}}{M}).
    \end{equation}
Now we can define the desired $\phi$ for Theorem~\ref{thm:main1} via $\phi\coloneqq \tildephi_2\circ \phi_{1,{\eta_0}}$.

 By Equations~\eqref{eq:phi2:phi1:-f}, \eqref{eq:tildephi2:to:phi2}, and \eqref{eq:tildephi2:phi1:error},
 we have
\begin{equation*}
\begin{split}
\| \tildephi_2\circ \phi_{1,{\eta_0}} -f\|_{L^p([0,1]^d)}
&\le \| \tildephi_2\circ \phi_{1,{\eta_0}} -\tildephi_2\circ \phi_{1 }\|_{L^p([0,1]^d)}+\| \tildephi_2\circ \phi_{1 } -f\|_{L^p([0,1]^d)}
\\&
\le \tfrac{1}{10}\omega_f(\tfrac{\sqrt{d}}{M})+\| \phi_2\circ \phi_1-f\|_{L^p([0,1]^d)}
\\& \le \tfrac{1}{10}\omega_f(\tfrac{\sqrt{d}}{M}) + \tfrac{11}{10}\omega_f(\tfrac{\sqrt{d}}{M})=\tfrac{6}{5}\omega_f(\tfrac{\sqrt{d}}{M}).
\end{split}
\end{equation*}
Recall that  $\omega_f(n\cdot  t)\le n\cdot \omega_f(t)$ for any $n\in\N^+$ and $t\in [0,\infty)$. Thus, we have
	\begin{equation*}
	\begin{split}
	\|  \tildephi_2\circ \phi_{1,{\eta_0}} -f\|_{L^p([0,1]^d)}
 \le  \tfrac{6}{5}\omega_f(\tfrac{\sqrt{d}}{M}) \le \tfrac{6}{5}\omega_f(\tfrac{\lceil\sqrt{d}\rceil}{M}) 
 \le   \tfrac{6}{5}\big\lceil\sqrt{d}\big\rceil\cdot\omega_f(\tfrac{1}{M}) 
 \le 2\sqrt{d}\cdot\omega_f(N^{-L}),
	\end{split}
	\end{equation*}
where the last inequality follows from \(M = N^L\) and the fact that \(\frac{6}{5} \lceil \sqrt{n} \, \rceil \le 2 \sqrt{n}\) for any \(n \in \mathbb{N}^+\).
Recall that $\tildephi_2(y)
   \coloneqq 
   u\cdot \varrho\Big(2m \pi + v \cdot \varrho\big( 2m \pi +   w y \big) \Big)$
   for any $y\in\R$
   and 
	\begin{equation*}
		\phi_{1,\eta_0}\in \mn{\varrho}{2d(4N-1)}{3d}{L}{\R^d}{\R}.
	\end{equation*}
    We conclude that
 	\begin{equation*}
		\tildephi_2\circ \phi_{1,\eta_0}\in \mn{\varrho}{2d(4N-1)}{3d}{L+2}{\R^d}{\R},
	\end{equation*}   
which completes the proof of Theorem~\ref{thm:main1}.

\section{Proofs of Propositions in Section~\ref{sec:proof:ideas:thm:main}}
\label{sec:proof:props}


In this section, we present the detailed proofs of all propositions stated in Section~\ref{sec:proof:ideas:thm:main}. Specifically, the proofs of Propositions~\ref{prop:floor:approx}, \ref{prop:k:to:yk}, and \ref{prop:approx:ReLU} are provided in Sections~\ref{sec:proof:prop:floor:approx}, \ref{sec:proof:prop:k:to:yk}, and \ref{sec:proof:prop:approx:ReLU}, respectively.

\subsection{Proof of Proposition~\ref{prop:floor:approx}}
\label{sec:proof:prop:floor:approx}

We will prove Proposition~\ref{prop:floor:approx}, which demonstrates the efficiency of \ReLU\ FCNNs in approximating the floor function. The core idea is to use compositions of continuous piecewise functions to approximate the floor function effectively. To simplify the proof, we introduce a lemma below, which shows that continuous piecewise functions can be exactly represented by one-hidden-layer \ReLU\ FCNNs.

\begin{lemma}\label{lem:cpwl(n):in:nn}
	For any $n\in \N^+$, it holds that
	\begin{equation}\label{eq:cpl:subset:nn}
		\cpl\big(n\big)\subseteq  \nn[\big]{\ReLU}{n+1}{1}{\R}{\R}.
	\end{equation}
\end{lemma}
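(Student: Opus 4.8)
The plan is to represent an arbitrary $f \in \cpl(n)$ explicitly as a linear combination of $n+1$ shifted ReLU functions (plus possibly an affine term, which we absorb into the construction), and then observe that such a combination is exactly a one-hidden-layer $\ReLU$ FCNN of width $n+1$. First I would list the breakpoints of $f$ in increasing order, say $t_1 < t_2 < \cdots < t_m$ with $m \le n$, and record the slopes $a_0, a_1, \dots, a_m$ of the $m+1$ linear pieces (from left to right). The standard fact is that
\[
f(x) = a_0 x + b_0 + \sum_{i=1}^{m} (a_i - a_{i-1})\,\ReLU(x - t_i)
\]
for a suitable constant $b_0$, because each term $\ReLU(x-t_i)$ contributes a slope increment of $a_i - a_{i-1}$ at $t_i$ and nothing elsewhere, so the right-hand side is continuous, piecewise linear, has exactly the prescribed breakpoints and slopes, and can be matched to $f$ by choosing $b_0$ to fix the value at one point.

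The remaining issue is that the leading affine term $a_0 x + b_0$ is not literally of the form $c\cdot\ReLU(wx+b)$, so naively this uses $m$ hidden neurons plus an affine bypass, which is not allowed in the FCNN format $\calA_1 \circ \ReLU \circ \calA_0$. To handle this I would rewrite the affine part using one extra ReLU neuron: on the bounded picture there is no global trick, but we can use the identity $x = \ReLU(x) - \ReLU(-x)$, or more economically, note that for $x \ge t_1$ (to the right of all breakpoints is not enough — we need all of $\R$) we instead write $a_0 x + b_0 = a_0\,\ReLU(x - T) + (\text{affine})$ for $T$ below all breakpoints... the clean fix is: pick $T < t_1$, and use that on $(-\infty, T]$ the function $f$ is linear, so $f(x) = a_0 x + b_0$ there; then
\[
f(x) = f(T) + a_0\,\ReLU(T - x)\cdot(-1)\cdots
\]
is awkward. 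Cleaner: use $\ReLU(x - t_0)$ for one auxiliary point $t_0 < t_1$ together with $\ReLU(t_0 - x)$; but that is two neurons. The genuinely tight approach is to absorb the affine term by writing $a_0 x + b_0 = a_0 \ReLU(x-t_0) + a_0 t_0 + b_0 - a_0\ReLU(t_0 - x)$ only if $a_0 \ne 0$... I would instead simply observe that a one-hidden-layer ReLU network of width $n+1$ can produce $\sum_{i=1}^{n+1} c_i \ReLU(x - s_i) + \text{const}$, and that any $\cpl(n)$ function with $m \le n$ breakpoints can be written with $m$ ``slope-change'' neurons plus one more neuron of the form $a_0 \ReLU(x - s)$ with $s$ chosen far to the left of all breakpoints, which reproduces the leftmost slope provided we also add the correct constant; the leftmost piece then reads $a_0(x-s) + \text{const}$, matching $a_0 x + b_0$. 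This uses $m + 1 \le n + 1$ neurons, and the output bias of $\calA_1$ supplies the constant. Hence $f \in \nn{\ReLU}{n+1}{1}{\R}{\R}$.

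The main obstacle — really the only subtlety — is this bookkeeping of the affine (degree-one) part with a single extra ReLU neuron, since ReLU networks in the strict layered format have no linear skip connection; once that is dispatched, the inclusion is immediate from the explicit formula above. I expect the write-up to spend most of its effort making the count exactly $n+1$ (not $n+2$), which requires being careful that ``at most $n$ breakpoints'' gives at most $n+1$ linear pieces and hence the representation needs at most $n$ internal-slope-change neurons plus one neuron to realize the base slope.
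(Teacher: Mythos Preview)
Your direct approach is natural, but the final representation you settle on has a real gap. If you take
\[
\phi(x) = C + a_0\,\ReLU(x-s) + \sum_{i=1}^{m}(a_i-a_{i-1})\,\ReLU(x-t_i)
\]
with $s<t_1$, then for $x<s$ every $\ReLU$ term vanishes and $\phi(x)=C$ is constant, whereas $f(x)=a_0x+b_0$ on $(-\infty,t_1)$. Thus $\phi\neq f$ on $(-\infty,s)$ whenever $a_0\neq 0$; ``choosing $s$ far to the left'' does not help, it simply inserts a new breakpoint at $s$ and replaces the leftmost slope $a_0$ by $0$. More generally, any combination built only from neurons of the form $\ReLU(x-s_i)$ (positive inner weight) is constant to the left of $\min_i s_i$ and therefore cannot represent, on all of $\R$, a function with nonzero left-tail slope.

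The fix is precisely the device you glanced at and then set aside: allow one neuron with negative inner weight. One checks on each interval that
\[
f(x) = f(t_1) - a_0\,\ReLU(t_1-x) + a_1\,\ReLU(x-t_1) + \sum_{i=2}^{m}(a_i-a_{i-1})\,\ReLU(x-t_i)
\]
agrees with $f$ everywhere and uses exactly $m+1\le n+1$ neurons (if $m=0$, pick any $t_1$ and use the two neurons $-a_0\,\ReLU(t_1-x)+a_0\,\ReLU(x-t_1)$). The paper's proof proceeds by induction on $n$ rather than writing this closed formula, but its base case $f(x)=a_1\,\ReLU(x-x_0)+a_2\,\ReLU(x_0-x)+f(x_0)$ is exactly the $m=1$ instance above, and the inductive step peels off one breakpoint at a time; the essential content---one reflected neuron handles the left tail---is the same.
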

\begin{proof}
We proceed by mathematical induction to prove Equation~\eqref{eq:cpl:subset:nn}. We begin with the base case \( n = 1 \).
For any \( f \in \cpl\big(1\big) \), there exist \( a_1, a_2, x_0 \in \mathbb{R} \) such that
\begin{equation*}
    f(x) = 
    \begin{cases}
                a_1(x - x_0) + f(x_0)  & \text{if } x \ge x_0,\\
        a_2(x_0 - x) + f(x_0)  & \text{if } x < x_0.
    \end{cases}
\end{equation*}
Thus, we can express \( f(x) \) as \( f(x) = a_1 \sigma(x - x_0) + a_2 \sigma(x_0 - x) + f(x_0) \) for any \( x \in \mathbb{R} \), which implies
\(f \in \nn[\big]{\ReLU}{2}{1}{\mathbb{R}}{\mathbb{R}}.\)
Therefore, Equation~\eqref{eq:cpl:subset:nn} holds for \( n = 1 \).

Now, suppose Equation~\eqref{eq:cpl:subset:nn} holds for \( n = k \in \mathbb{N}^+ \). We aim to show that it also holds for \( n = k + 1 \).
For any \( f \in \cpl\big(k+1\big) \), we assume without loss of generality that \( f \) has a largest breakpoint at \( x_0 \) (the case where \( f \) has no breakpoints is trivial). 
Let \( a_1 \) and \( a_2 \) represent the slopes of the linear segments directly to the left and right of \( x_0 \), respectively. Define
\[
\tilde{f}(x) \coloneqq f(x) - (a_2 - a_1) \sigma(x - x_0) \quad \text{for any } x \in \mathbb{R}.
\]
With this construction, \( \tilde{f} \) has slope \( a_1 \) on both sides of \( x_0 \), effectively smoothing out the breakpoint at \( x_0 \) in \( f \). Thus, \( \tilde{f} \) is obtained by eliminating this breakpoint, leaving it with at most \( k \) breakpoints.
By the induction hypothesis, we know that
\[
\tilde{f} \in \cpl\big(k\big) \subseteq \nn[\big]{\ReLU}{k+1}{1}{\mathbb{R}}{\mathbb{R}}.
\]
Thus, there exist constants \( u_j,v_j,w_j,c \) for \( j = 1, 2, \dots, k+1 \) such that
\[
\tilde{f}(x) = \sum_{j=1}^{k+1} u_{j} \sigma(v_{j} x + w_{j}) + c \quad \text{for any } x \in \mathbb{R}.
\]
Therefore, for any \( x \in \mathbb{R} \), we can write
\begin{equation*}
    f(x) = (a_2 - a_1) \sigma(x - x_0) + \tilde{f}(x) = (a_2 - a_1) \sigma(x - x_0) + \sum_{j=1}^{k+1} u_{j} \sigma(v_{j} x + w_{j}) + c,
\end{equation*}
implying that \( f \in \nn[\big]{\ReLU}{k+2}{1}{\mathbb{R}}{\mathbb{R}} \).
Thus, Equation~\eqref{eq:cpl:subset:nn} holds for \( k + 1 \), completing the induction process and, hence, the proof of Lemma~\ref{lem:cpwl(n):in:nn}.
\end{proof}

With Lemma~\ref{lem:cpwl(n):in:nn} established, we are now ready to prove Proposition~\ref{prop:floor:approx}.

\begin{proof}[Proof of Proposition~\ref{prop:floor:approx}]
Since the case $N=1$ is trivial, we consider $N\ge 2$ below.
Given any $x\in \bigcup_{k=0}^{N^L-1}\big[k,\, k+1-\delta\big]$, our goal is to construct $\phi$, realized by a network with desired size, mapping $x$ to $\lfloor x\rfloor$.
Clearly  $\lfloor x\rfloor\in \{0,1,\cdots,N^L-1\}$ and hence there exists unique $(n_1,n_2,\cdots,n_L)\in \{0,1,\cdots,N-1\}^L$ such that
    \begin{equation}\label{eq:floor:x:decomposition}
        \lfloor x\rfloor= \sum_{i=1}^L n_i\cdot N^{L-i}.
    \end{equation}
In other words, the above equation forms a one-to-one map between $\{0,1,\cdots,N^L-1\}$ and $\{0,1,\cdots,N-1\}^L$.

        For $\ell=0,1,\cdots,L$, we define
    \begin{equation*}
    m_\ell = \sum_{i=\ell+1}^{L}
    n_i\cdot N^{L-i}\quad \tn{and}\quad
        z_\ell = x- \sum_{i=1}^{\ell}
    n_i\cdot N^{L-i}.
\end{equation*}
Clearly, $z_0=x$,
\begin{equation*}
\begin{split}
    z_\ell
    =
    x - \lfloor x\rfloor + \lfloor  x\rfloor 
    - 
    \sum_{i=1}^{\ell}
    n_i\cdot N^{L-i} 
   & =
   x - \lfloor x\rfloor + \sum_{i=1}^{L}
    n_i\cdot N^{L-i} 
    -     \sum_{i=1}^{\ell}
    n_i\cdot N^{L-i} 
   \\& =x - \lfloor x\rfloor +     \sum_{i=\ell+1}^{L}
    n_i\cdot N^{L-i} 
   =x - \lfloor x\rfloor + m_\ell,
\end{split}
\end{equation*}
and
\begin{equation*}
\begin{split}
        \lfloor z_\ell\rfloor
    =
    \Big\lfloor 
    \underbrace{x - \lfloor x\rfloor}_{\in\, [0,1)} + \underbrace{m_\ell}_{\in\, \bbZ}
    \Big\rfloor
   & =
    m_\ell
\end{split}
\end{equation*}
for $\ell=0,1,\cdots,L$.
We claim 
\begin{equation}\label{eq:z:ell:lower:upper:bounds}
    n_{\ell+1} \le \frac{z_\ell}{N^{L-\ell-1}}\le n_{\ell+1}+1 -\frac{\delta}{N^{L-\ell-1}}\quad \tn{for $\ell=0,1,\cdots,L-1$}.
\end{equation}
To demonstrate this, we first establish the lower bound. Clearly,
\begin{equation*}
    \frac{z_\ell}{N^{L-\ell-1}}
    =\frac{x - \lfloor x\rfloor + m_\ell}{N^{L-\ell-1}}
    \ge \frac{ m_\ell}{N^{L-\ell-1}}
    = \frac{ \sum_{i=\ell+1}^{L}
    n_i\cdot N^{L-i}}{N^{L-\ell-1}}
    \ge \frac{ 
    n_{\ell+1}\cdot N^{L-(\ell+1)}}{N^{L-\ell-1}}
    =  n_{\ell+1}.
\end{equation*}
Next, we proceed to verify the upper bound. Clearly, 
\begin{equation*}
\begin{split}
        \frac{z_\ell}{N^{L-\ell-1}}
    = \frac{x - \lfloor x\rfloor + m_\ell}{N^{L-\ell-1}}
    &= \frac{x - \lfloor x\rfloor +\delta + m_\ell}{N^{L-\ell-1}}
   - \frac{\delta}{N^{L-\ell-1}}
   \\& \le \frac{1 + m_\ell}{N^{L-\ell-1}}
   - \frac{\delta}{N^{L-\ell-1}}
    \le  n_{\ell+1}+1-\frac{\delta}{N^{L-\ell-1}},
\end{split}
\end{equation*}
where the last inequality comes from
\begin{equation*}
    \begin{split}
        \frac{1 + m_\ell}{N^{L-\ell-1}}
    =\frac{1 + \sum_{i=\ell+1}^{L}
    n_i\cdot N^{L-i}}{N^{L-\ell-1}}
    & =\frac{1 + \sum_{i=\ell+2}^{L}
    n_i\cdot N^{L-i}+ n_{\ell+1}\cdot N^{L-(\ell+1)}}{N^{L-\ell-1}}
   \\&  \le 
   \frac{1 + \sum_{i=\ell+2}^{L}
    (N-1)\cdot N^{L-i}+ n_{\ell+1}\cdot N^{L-(\ell+1)}}{N^{L-\ell-1}}
    \\&  =
   \frac{N^{L-\ell-1} + n_{\ell+1}\cdot N^{L-(\ell+1)}}{N^{L-\ell-1}}
    =  n_{\ell+1}+1.
    \end{split}
\end{equation*}
Thus, we complete the proof of Equation~\eqref{eq:z:ell:lower:upper:bounds}.

Let $h$ be a continuous piecewise linear function with $h\in \cpwl(2N-2)$ and
\begin{equation*}
    h(k)=h\Big(k+1-\frac{\delta}{N^{L-1}}\Big)=k\quad \tn{for $k=0,1,\cdots,N-1$}.
\end{equation*}

\begin{figure}[ht]
    \centering   \includegraphics[width=0.55\textwidth]{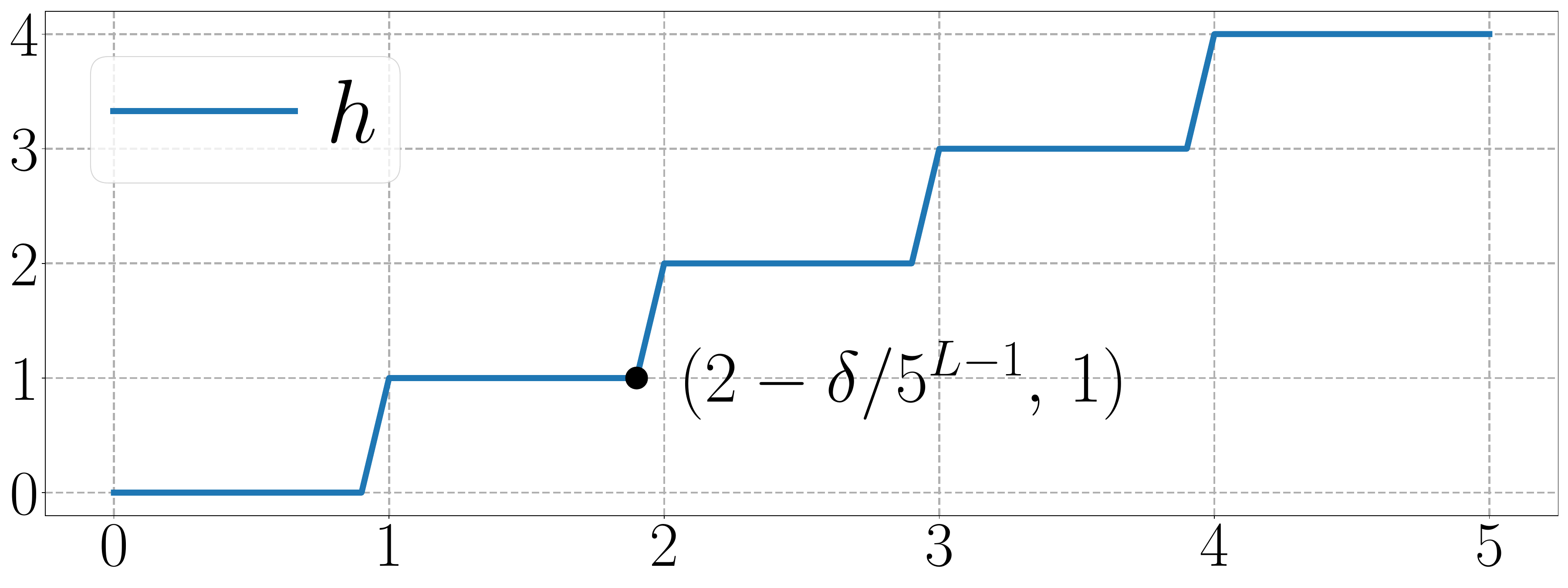}
\caption{An illustration of $h$ when $N=5$.}
    \label{fig:h}
\end{figure}

See Figure~\ref{fig:h} for an illustration of $h$ when $N=5$.
Obviously, 
\begin{equation*}
    h(t)=k\quad \tn{if $t\in \Big[k,\; k+1-\frac{\delta}{N^{L-1}}\Big]$}\quad \tn{for $k=0,1,\cdots,N-1$.}
\end{equation*}
For $\ell=0,1,\cdots,L-1$, we have $n_{\ell+1}\in \{0,1,\cdots,N-1\}$ and Equation~\eqref{eq:z:ell:lower:upper:bounds} implies 
\begin{equation*}
     \frac{z_\ell}{N^{L-\ell-1}}\in
     \Big[n_{\ell+1},\; n_{\ell+1}+1 -\frac{\delta}{N^{L-\ell-1}}\Big]
     \subseteq 
     \Big[n_{\ell+1},\; n_{\ell+1}+1 -\frac{\delta}{N^{L-1}}\Big].
\end{equation*}
It follows that 
\begin{equation*}
    h\Big(\frac{z_\ell}{N^{L-\ell-1}}\Big)= n_{\ell+1}\quad \tn{for $\ell=0,1,\cdots,L-1$,}
\end{equation*}
from which we deduce
\begin{equation*}
    \begin{split}
        z_{\ell+1} = x- \sum_{i=1}^{\ell+1}
    n_i\cdot N^{L-i}
    &= x- \sum_{i=1}^{\ell}
    n_i\cdot N^{L-i}-n_{\ell+1} \cdot N^{L-(\ell+1)}
    \\& = z_\ell - n_{\ell+1} \cdot N^{L-(\ell+1)}
    = z_\ell - h\Big(\frac{z_\ell}{N^{L-\ell-1}}\Big) \cdot N^{L-\ell-1}.   
    \end{split}
\end{equation*}
Therefore, by defining 
\begin{equation*}
    h_\ell(z) =  h\Big(\frac{z}{N^{L-\ell-1}}\Big)\quad \tn{and}\quad 
    \tildeh_\ell(z) =  z- h\Big(\frac{z}{N^{L-\ell-1}}\Big)\cdot N^{L-\ell-1}\quad \tn{for any $z\in\R$,}
\end{equation*}
we have 
\begin{equation}
\label{eq:z:ell:to:n:z:ell+1}
    h_\ell(z_\ell) = n_{\ell+1}\quad \tn{and}\quad 
    \tildeh_\ell(z_\ell) =  z_{\ell+1}\quad \tn{for $\ell=0,1,\cdots,L-1$.}
\end{equation}
Moreover, $h\in \cpwl(2N-2)$ implies
$h_\ell,\tildeh_\ell \in \cpwl(2N-2)$
for $\ell=0,1,\cdots,L-1$. Then by Lemma~\ref{lem:cpwl(n):in:nn}, 
\begin{equation}\label{eq:h:ell:tildeh:ell:in:NN}
    h_\ell,\tildeh_\ell \in \cpwl(2N-2)\subseteq \nn[\big]{\ReLU}{2N-1}{1}{\R}{\R}.
\end{equation}
This means that \( h_\ell \) and \( \tilde{h}_\ell \) can be implemented by one-hidden-layer \ReLU\ FCNNs of width \( 2N-1 \).

Consequently, the desired function \( \phi \) can be realized by a \ReLU\ MMNN of width \( 1 + (2N - 1) + (2N - 1) = 4N - 1 \), rank $3$, and depth \( L \), as illustrated in Figure~\ref{fig:floorApprox}. That is,
\[\phi \in \mn[\big]{\ReLU}{4N-1}{3}{L}{\R}{\R},\]
which completes the proof of Proposition~\ref{prop:floor:approx}.
\end{proof}
\begin{figure}[ht]
    \centering   \includegraphics[width=0.975\textwidth]{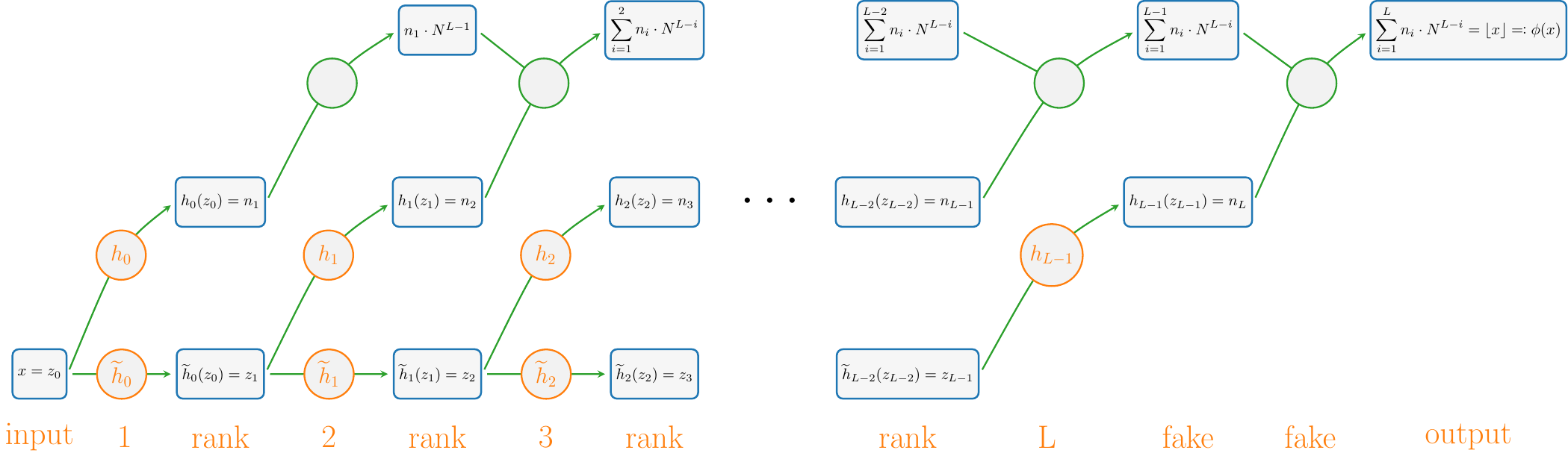}
\caption{The target network implementing $\phi$ based on Equations~\eqref{eq:floor:x:decomposition}, \eqref{eq:z:ell:to:n:z:ell+1}, and \eqref{eq:h:ell:tildeh:ell:in:NN}. In this architecture, redundant (fake) layers can be removed by merging the preceding and succeeding affine linear transformations into a single one.}
    \label{fig:floorApprox}
\end{figure}

%

\subsection{Proof of Proposition~\ref{prop:k:to:yk}}
\label{sec:proof:prop:k:to:yk}

We will establish the proof of Proposition~\ref{prop:k:to:yk}. To facilitate this, we introduce two key lemmas that serve as intermediate steps in proving Proposition~\ref{prop:k:to:yk}. 
The first lemma demonstrates how to use the \sine{}  function with a single parameter to generate rationally independent numbers. The second lemma shows the density of point sets generated by the \sine{}  function combined with rational numbers within a high-dimensional hypercube.

\begin{lemma}
    \label{lem:rationally:independent}
	Given $K\in\bbN^+$, there exists $w_0\in \big(-\tfrac{1}{K}, \tfrac{1}{K}\big)$ such that $\sin (k w_0  )$, for $k=1,2,\cdots,K$, are rationally independent.
\end{lemma}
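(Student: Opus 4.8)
The plan is to show that the set of ``bad'' values of $w$ — those for which $\sin(w),\sin(2w),\dots,\sin(Kw)$ fail to be linearly independent over $\mathbb{Q}$ — is countable, and then to choose $w_0$ inside the \emph{uncountable} interval $\big(-\tfrac1K,\tfrac1K\big)$ while avoiding that set. First I would recall that a finite collection of real numbers is linearly independent over $\mathbb{Q}$ if and only if it admits no nontrivial relation with integer coefficients (clear denominators). Hence the bad set can be written as
\[
\mathcal{B} \;=\; \bigcup_{\bm{m}\in \mathbb{Z}^K\setminus\{\bm{0}\}} Z_{\bm{m}},
\qquad
Z_{\bm{m}} \;=\; \Big\{ w\in\mathbb{R} : \textstyle\sum_{k=1}^K m_k\sin(kw)=0 \Big\}.
\]

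Next I would argue that each $Z_{\bm{m}}$ is at most countable. The function $g_{\bm{m}}(w)=\sum_{k=1}^K m_k\sin(kw)$ extends to an entire function, hence is real-analytic on $\mathbb{R}$, so it suffices to show $g_{\bm{m}}\not\equiv 0$ whenever $\bm{m}\neq\bm{0}$: a nonzero real-analytic function on $\mathbb{R}$ has only isolated zeros, so $Z_{\bm{m}}$ is countable. The non-vanishing is the one genuine computation. Writing $\sin(kw)=\tfrac{1}{2i}\big(e^{ikw}-e^{-ikw}\big)$ expresses $g_{\bm{m}}$ as a linear combination of the functions $w\mapsto e^{ijw}$ with $j\in\{\pm1,\dots,\pm K\}$, the coefficient of $e^{ijw}$ being $\mp m_{|j|}/(2i)$; since these exponentials are linearly independent (distinct exponents), $g_{\bm{m}}\equiv 0$ forces $m_k=0$ for all $k$. (Alternatively, evaluating the derivatives of $g_{\bm{m}}$ of orders $1,3,\dots,2K-1$ at $0$ yields, up to signs, the linear system $\sum_{k=1}^K m_k\, k^{2j-1}=0$ for $j=1,\dots,K$, whose coefficient matrix is $\mathrm{diag}(1,2,\dots,K)$ times the Vandermonde matrix in the distinct nodes $1^2,2^2,\dots,K^2$, hence nonsingular, forcing $\bm{m}=\bm{0}$.)

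Finally, since $\mathbb{Z}^K\setminus\{\bm{0}\}$ is countable, $\mathcal{B}$ is a countable union of countable sets and therefore countable, whereas the interval $\big(-\tfrac1K,\tfrac1K\big)$ is uncountable. Hence we may pick $w_0\in\big(-\tfrac1K,\tfrac1K\big)\setminus\mathcal{B}$; by construction $\sin(w_0),\sin(2w_0),\dots,\sin(Kw_0)$ satisfy no nontrivial integer linear relation, so they are rationally independent, which proves the lemma. I expect the only non-routine point to be the verification that $g_{\bm{m}}\not\equiv 0$ (equivalently, the $\mathbb{R}$-linear independence of the functions $w\mapsto\sin(kw)$ for $k=1,\dots,K$); the rest is a soft analyticity-plus-cardinality argument and requires no delicate estimates.
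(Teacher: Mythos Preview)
Your proof is correct and follows essentially the same approach as the paper: both arguments combine the real-analyticity of $w\mapsto\sum_k m_k\sin(kw)$ (so a nonzero such function has only countably many zeros) with a cardinality argument over the countable family of integer/rational coefficient vectors. The paper packages this as a proof by contradiction and verifies $g\not\equiv 0$ via a limiting argument on the odd-order derivatives at $0$, whereas you argue directly and invoke either exponential independence or a Vandermonde determinant, but these are cosmetic differences.
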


\begin{lemma}\label{lem:dense:sin}
	Given any rationally independent numbers $a_1,a_2,\cdots,a_K$, for any $K\in\N^+$, the following set 
	\begin{equation*}
		\Big\{\Big(\sin(wa_1),\, \sin(wa_2),\, \cdots,\,\sin(wa_K)\Big) : w\in\R
		\Big\}
	\end{equation*}
is dense in $[-1,1]^K$.
\end{lemma}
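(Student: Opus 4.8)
The plan is to lift the problem to the $K$-dimensional torus $\mathbb{T}^K$, where $\mathbb{T}\coloneqq\R/2\pi\Z$, and reduce it to the continuous-parameter form of Kronecker's theorem. Consider the one-parameter curve $\gamma\colon\R\to\mathbb{T}^K$ given by $\gamma(w)=(wa_1\bmod 2\pi,\dots,wa_K\bmod 2\pi)$, together with the coordinatewise sine map $S\colon\mathbb{T}^K\to[-1,1]^K$, $S(\theta_1,\dots,\theta_K)=(\sin\theta_1,\dots,\sin\theta_K)$. Then $S$ is continuous and surjective, and the set in the statement is exactly $S(\gamma(\R))$. Since a continuous image of a dense set is dense in the image and $S(\mathbb{T}^K)=[-1,1]^K$, it suffices to prove that $\gamma(\R)$ is dense in $\mathbb{T}^K$: indeed, given a target $(y_1,\dots,y_K)\in[-1,1]^K$ and $\eps>0$, I would pick $\theta_i$ with $\sin\theta_i=y_i$, use uniform continuity of $\sin$ to get $\delta>0$ with $|t-\theta_i|<\delta\pmod{2\pi}\Rightarrow|\sin t-y_i|<\eps$, and then use density of $\gamma(\R)$ to find $w$ with $\gamma(w)$ within $\delta$ of $(\theta_1,\dots,\theta_K)$ in each $\mathbb{T}$-coordinate, giving $|\sin(wa_i)-y_i|<\eps$ for all $i$.

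The heart of the argument is thus the density of $\gamma(\R)$ in $\mathbb{T}^K$. I would argue via characters: the closure $H\coloneqq\overline{\gamma(\R)}$ is a closed subgroup of the compact abelian group $\mathbb{T}^K$, so by Pontryagin duality $H=\mathbb{T}^K$ unless some nontrivial character $\theta\mapsto e^{\,\mathrm{i}\langle\mathbf n,\theta\rangle}$ with $\mathbf n\in\Z^K\setminus\{\bm 0\}$ vanishes on $H$, equivalently on the dense set $\gamma(\R)$. But $e^{\,\mathrm{i}\langle\mathbf n,\gamma(w)\rangle}=1$ for all $w$ means $w\sum_i n_i a_i\in 2\pi\Z$ for every $w\in\R$, and a linear function of $w$ can take values in the discrete set $2\pi\Z$ for all $w$ only if it is identically zero, i.e.\ $\sum_i n_i a_i=0$. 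Rational independence (that is, $\Q$-linear independence) of $a_1,\dots,a_K$ then forces $\mathbf n=\bm 0$, a contradiction; hence $H=\mathbb{T}^K$. Alternatively one may simply invoke Kronecker's approximation theorem in its one-parameter version, or Weyl's equidistribution theorem, whose hypothesis is precisely the $\Q$-linear independence of the $a_i$.

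The only nontrivial ingredient is this density of the line in the torus; everything else is the routine ``pushforward of a dense set'' reduction above together with the componentwise surjectivity of $\sin$. So the main obstacle is really a packaging decision: either cite Kronecker/Weyl directly, or include the self-contained character argument, which rests on the structure of closed subgroups of tori (every proper closed subgroup lies in the kernel of a nontrivial character). A fully elementary alternative, avoiding Pontryagin duality, is to establish $\frac1T\int_0^T e^{\,\mathrm{i}\langle\mathbf n,\gamma(w)\rangle}\,dw\to 0$ as $T\to\infty$ for each $\mathbf n\ne\bm 0$ — immediate since the integrand is $e^{\,\mathrm{i}cw}$ with $c=\sum_i n_i a_i\ne 0$ — and then feed this into the Weyl criterion to obtain equidistribution, hence density, of $\gamma$ in $\mathbb{T}^K$.
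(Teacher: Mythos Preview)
Your proposal is correct and follows essentially the same route as the paper: reduce to the density of the one-parameter line on the torus (the paper isolates this as a separately cited lemma about fractional parts, whereas you supply the character/Weyl argument directly), then push forward through the continuous surjection given by coordinatewise~$\sin$. Your packaging via ``continuous image of a dense set is dense in the image'' is cleaner than the paper's explicit $\eps$--$\delta$ manipulation with shifted preimages, but the substance is identical.
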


We are now prepared to prove Proposition~\ref{prop:k:to:yk}, assuming the validity of Lemmas~\ref{lem:rationally:independent} and \ref{lem:dense:sin}, which will be proven in Sections~\ref{sec:proof:lem:rationally:independent} and \ref{sec:proof:lem:dense:sin}, respectively.

\begin{proof}[Proof of Proposition~\ref{prop:k:to:yk}]
Given any \(\epsilon > 0\) and \(y_k \in \mathbb{R}\) for \(k = 1, 2, \cdots, K\), we define
\begin{equation*}
    u \coloneqq \max\big\{|y_k| : k = 1, 2, \cdots, K\big\}.
\end{equation*}
Assuming \(u > 0\) (the case \(u = 0\) is trivial), we set \(z_k = y_k / u \in [-1,1]\). Then, by Lemma~\ref{lem:rationally:independent}, there exists \(w\) such that 
\(a_k = \sin(k w)\) for \(k = 1, 2, \cdots, K\) are rationally independent.
Moreover, by Lemma~\ref{lem:dense:sin}, 
the following set 
\begin{equation*}
    \Big\{\big[\sin(va_1),\, \sin(va_2),\, \cdots,\,\sin(va_K)\big]^\ts : v\in\R
    \Big\}
\end{equation*}
is dense in $[-1,1]^K$.
That is, there exists $v\in\R$ such that
\begin{equation*}
    \big|\sin(v a_k)-z_k\big| < \eps/u \quad \tn{for $k=1,2,\cdots,K$.}
\end{equation*}
Therefore, for $k=1,2,\cdots,K$, we have
\begin{equation*}
\begin{split}
    \Big|u\cdot \sin\big(v \cdot \sin(kw)\big)-y_k\Big|
    &= u\Big| \sin\big(v \cdot \sin(kw)\big)-y_k/u\Big|\\
    &= u\big| \sin\big(v \cdot a_k\big)-z_k\big|
    <u\cdot \eps/u = \eps.
\end{split}
\end{equation*}
So we finish the proof of Proposition~\ref{prop:k:to:yk}.
\end{proof}

\subsubsection{Proof of Lemma~\ref{lem:rationally:independent}}
\label{sec:proof:lem:rationally:independent}

We prove this lemma by contradiction. If it does not hold, then $\sin( k w )$, for $k=1,2,\cdots,K$, 
are rationally dependent for any $w\in \big(-\tfrac{1}{K}, \tfrac{1}{K}\big)=\calI$. That means, for all $w\in\calI$, there exists $\bmlambda=(\lambda_1,\cdots,\lambda_K)\in \bbQ^K\setminus \{\bmzero\}$ such that $\sum_{k=1}^K \lambda_k \sin(k w )=0$.
For each $\bmlambda\in \Q^K \setminus \{\bmzero\}$, we define
\begin{equation*}
    \calI_\bmlambda \coloneqq 
    \left\{w \in \calI: \sum_{k=1}^K \lambda_k \sin(k w )=0\right\}.
\end{equation*}
It follows that
\begin{equation*}
    \calI=\bigcup_{\bmlambda\in\Q^K\setminus \{\bmzero\}} \left\{w \in \calI: \sum_{k=1}^K \lambda_k \sin(k w )=0\right\}=\bigcup_{\bmlambda\in\Q^K\setminus \{\bmzero\}} \calI_\bmlambda
\end{equation*}
Recall that a countable union of countable sets remains countable. However, we observe that while
\(\mathbb{Q}^K \setminus  \{\mathbf{0}\}\)
is countable, the union
\(\calI = \bigcup_{\bmlambda \in \mathbb{Q}^K\setminus \{\bmzero\}} \calI_\bmlambda\)
is uncountable.
Then there exists $\bmlambda=(\lambda_1,\cdots,\lambda_K)\in \bbQ^K\setminus \{\bmzero\}$ such that
$\calI_\bmlambda$ is uncountable, i.e.,
\begin{equation*}
    \sum_{k=1}^K \lambda_k \sin(k w )=0\quad \tn{ for all $w\in \calI_\bmlambda$.}
\end{equation*} 
Define the function 
\[
g(w) \coloneqq \sum_{k=1}^K \lambda_k \sin\left(k w\right) \quad \text{for all } w \in \mathcal{I}.
\]
The real analyticity of \(\sine\) ensures that \(g\) is also real analytic. 
By the property that each zero of a real analytic function is isolated, a non-zero real analytic function has only countably many zeros. It follows that \(g(w) = 0\) for all \(w \in \mathcal{I}\).
Thus, we have
\begin{equation*}
    0= g^{(m)}(w)=\sum_{k=1}^K \lambda_k k^m \sin^{(m)}\left(kw\right)\quad \tn{for all $w\in \calI$ and $m\in \N$,}
\end{equation*}
from which we deduce

\[
\sum_{k=1}^K \lambda_k k^m \sin^{(m)}\left(0\right)= 0 \quad \text{for all } m\in\N.
\]
It is easy to verify that 
$|\sin^{(m)}(0)|=1$ for odd $m\in\N$. It follows that
\begin{equation}\label{eq:lambdak:km:sum0:infinite:m}
    \sum_{k=1}^K \lambda_k k^m = 0\quad \tn{for odd $m\in\N$.}
\end{equation}

We assert that Equation~\eqref{eq:lambdak:km:sum0:infinite:m} leads to \(\bmlambda = (\lambda_1, \cdots, \lambda_K) = \bmzero\), which contradicts the assumption that \(\bmlambda \in \mathbb{Q}^K \setminus \{\bmzero\}\). To complete the proof, it suffices to establish this assertion. We assume \(K \ge 2\), as the case \(K = 1\) is straightforward.
By Equation~\eqref{eq:lambdak:km:sum0:infinite:m},
for any odd $m\in \N$, we have
    \begin{equation*}
    0=\frac{1}{(K-1)^m}\sum_{k=1}^K \lambda_k k^m
    =\sum_{k=1}^K \lambda_k \Big(\frac{k}{K-1}\Big)^m
    =\sum_{k=1}^{K-1} \lambda_k \Big(\frac{k}{K-1}\Big)^m +  \lambda_K\Big(\frac{K}{K-1}\Big)^m,
\end{equation*}
implying
    \begin{equation*}
    |\lambda_K|\cdot\Big(\frac{K}{K-1}\Big)^m=
    \left| -\sum_{k=1}^{K-1} \lambda_k \Big(\frac{k}{K-1}\Big)^m\right|\le \sum_{k=1}^{K-1} |\lambda_k|.
\end{equation*}
If \(\lambda_K \neq 0\), then in the above equation, the left-hand side becomes unbounded as \(m\) grows large, while the right-hand side remains bounded.  Thus, we must have \(\lambda_K = 0\). Using a similar argument, we can show that \(\lambda_k = 0\) for \(k = K-1, K-2, \ldots, 1\).
So we finish the proof of Lemma~\ref{lem:rationally:independent}.

\subsubsection{Proof of Lemma~\ref{lem:dense:sin}}
\label{sec:proof:lem:dense:sin}

The proof of Lemma~\ref{lem:dense:sin} primarily relies on the fact that an irrational winding is dense on the torus, which is a fascinating phenomenon in transcendental number theory and Diophantine approximations. For completeness, we establish the following lemma.

\begin{lemma}
\label{lem:dense:decimal}
Given any \( K \in \mathbb{N}^+ \) and rationally independent numbers \( a_1, a_2, \dots, a_K \), the set
\begin{equation*}
    \Big\{ \Big(\tau(wa_1), \, \tau(wa_2), \, \cdots, \, \tau(wa_K)\Big) : w \in \mathbb{R} \Big\} \subseteq [0,1)^K
\end{equation*}
is dense in \( [0,1]^K \), where \( \tau(x) \coloneqq x - \lfloor x \rfloor \) for any \( x \in \mathbb{R} \).
\end{lemma}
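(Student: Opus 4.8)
\textbf{Proof proposal for Lemma~\ref{lem:dense:decimal}.}

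The plan is to deduce this statement from the classical Kronecker--Weyl theorem on equidistribution of irrational flows on the torus. First I would recall the precise formulation I need: if $\alpha_1, \dots, \alpha_K$ are real numbers that are linearly independent over $\mathbb{Q}$ together with $1$ (i.e.\ $1, \alpha_1, \dots, \alpha_K$ are $\mathbb{Q}$-linearly independent), then the orbit $\{(w\alpha_1, \dots, w\alpha_K) \bmod 1 : w \in \mathbb{R}\}$ is dense (indeed equidistributed) in the torus $[0,1)^K$. The subtlety is that the hypothesis of the lemma is that $a_1, \dots, a_K$ are rationally independent, which is exactly $\mathbb{Q}$-linear independence of $a_1,\dots,a_K$ but does \emph{not} a priori include $1$ in the list. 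So the first real step is to bridge this gap: I would argue that for the continuous-parameter flow (as opposed to the discrete orbit $\{n\alpha \bmod 1 : n \in \mathbb{Z}\}$), density of $\{(w a_1, \dots, w a_K) \bmod 1 : w \in \mathbb{R}\}$ in $[0,1)^K$ holds as soon as $a_1, \dots, a_K$ are $\mathbb{Q}$-linearly independent — the extra ``$1$'' is not needed because $w$ ranges over all of $\mathbb{R}$, not over $\mathbb{Z}$. Concretely, density of the line $\{w(a_1,\dots,a_K) : w \in \mathbb{R}\}$ in $\mathbb{R}^K / \mathbb{Z}^K$ is equivalent, by a standard Fourier/Weyl-criterion argument (or by the Pontryagin-duality characterization of dense one-parameter subgroups of a compact group), to the statement that no nonzero integer vector $\bm{m} = (m_1,\dots,m_K) \in \mathbb{Z}^K$ satisfies $m_1 a_1 + \cdots + m_K a_K = 0$; and that is precisely rational independence of the $a_k$.

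The second step is bookkeeping: rephrase the conclusion in terms of $\tau$. Since $\tau(x) = x - \lfloor x \rfloor$ is exactly the canonical projection $\mathbb{R} \to [0,1) \cong \mathbb{R}/\mathbb{Z}$, the set $\{(\tau(wa_1),\dots,\tau(wa_K)) : w \in \mathbb{R}\}$ is literally the image in $[0,1)^K$ of the one-parameter subgroup above, so its density in $[0,1]^K$ (closure taken in $[0,1]^K$) follows immediately from Step~1. Here I would be a little careful that ``dense in $[0,1]^K$'' is meant with respect to the Euclidean topology on the closed cube; since $[0,1)^K$ is itself dense in $[0,1]^K$, it suffices that the orbit is dense in $[0,1)^K$ in the subspace topology, which is what the torus density gives.

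The main obstacle — really the only nontrivial point — is justifying the Weyl-criterion reduction in Step~1 rigorously, i.e.\ proving that a one-parameter line in the torus is dense iff its direction vector has rationally independent coordinates. I would handle this via the Fourier-analytic argument: the closure $H$ of the subgroup $\{w(a_1,\dots,a_K) \bmod 1\}$ is a closed subgroup of the compact group $\mathbb{T}^K$; if $H \neq \mathbb{T}^K$ then by Pontryagin duality there is a nontrivial character $\chi_{\bm m}(x) = e^{2\pi i \bm m \cdot x}$, with $\bm m \in \mathbb{Z}^K \setminus \{\bm 0\}$, that is trivial on $H$, hence $e^{2\pi i w (\bm m \cdot \bm a)} = 1$ for all $w \in \mathbb{R}$, forcing $\bm m \cdot \bm a = 0$ and contradicting rational independence. (Alternatively one can cite Weyl's equidistribution theorem for one-parameter flows directly.) Everything else is routine. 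If the paper prefers to keep the exposition self-contained, I would include the short duality/Fourier argument; otherwise a citation to a standard reference on equidistribution suffices.
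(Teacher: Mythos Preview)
Your argument is correct: the Pontryagin-duality/character reduction cleanly shows that density of the continuous line $\{w\bm a \bmod 1 : w\in\R\}$ in $\mathbb{T}^K$ is equivalent to having no nonzero $\bm m\in\Z^K$ with $\bm m\cdot\bm a=0$, which is exactly rational independence of $a_1,\dots,a_K$; your remark distinguishing the continuous flow (where $1$ need not be added to the list) from the discrete orbit is the right observation, and the $[0,1)^K$ versus $[0,1]^K$ bookkeeping is fine.

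As for comparison, the paper does not actually supply a proof of this lemma at all: it simply observes that the statement is equivalent to results already proved elsewhere (Lemma~22 of \cite{shijun:arbitrary:error:with:fixed:size} and Lemma~2 of \cite{yarotsky:2021:02}) and cites those references. So your self-contained Fourier/duality argument goes beyond what the paper provides; it is in fact the standard proof one finds in those cited sources, and including it would make the exposition self-contained at very little cost.
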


Lemma~\ref{lem:dense:decimal} is equivalent to Lemma~22 in \cite{shijun:arbitrary:error:with:fixed:size} and Lemma~2 in \cite{yarotsky:2021:02}, where proofs can be found. Now, assuming Lemma~\ref{lem:dense:decimal} holds, let us proceed with the proof of Lemma~\ref{lem:dense:sin}.
 
\begin{proof}[Proof of Lemma~\ref{lem:dense:sin}]
	Define $g(x)\coloneqq \sin(2\pi x)$ for any $x\in \R$. Clearly, $g$ is  periodic with period $1$ and  uniformly continuous on $[-\tfrac{1}{4},\tfrac{1}{4}]$. 
	For any $\varepsilon>0$, there exists $\delta\in (0,\tfrac{1}{2})$ such that
	\begin{equation}\label{eq:g:uniformly:cont}
		|g(u)-g(v)|<\varepsilon\quad\tn{for any $u,v\in [-\tfrac{1}{4},\tfrac{1}{4}]$ with $|u-v|<\delta$.}
	\end{equation}
	
Given any $\bmxi=[\xi_1,\xi_2,\cdots,\xi_K]\in [-1,1]^K=[g(-\tfrac{1}{4}),\, g(\tfrac{1}{4})]^K$,  there exists
\[y_1,y_2,\cdots,y_K\in [-\tfrac{1}{4},\tfrac{1}{4}]\] 
such that 
\begin{equation}\label{eq:g:zk:xik}
    g(y_k)=\xi_k\quad \tn{ for any $k=1,2,\cdots,K$.}
\end{equation}
For $k=1,2,\cdots,K$, 
by setting \begin{equation*}
	\tildey_k=y_k +\tfrac{\delta}{2}\cdot\one_{\{y_k\le -\tfrac{1}{4}+\tfrac{\delta}{2}\}}-\tfrac{\delta}{2}\cdot\one_{\{y_k\ge \tfrac{1}{4}-\tfrac{\delta}{2}\}},
\end{equation*}
 we have 
\begin{equation*}
	\tildey_k=y_k +\tfrac{\delta}{2}\cdot\one_{\{y_k\le -\tfrac{1}{4}+\tfrac{\delta}{2}\}}-\tfrac{\delta}{2}\cdot\one_{\{y_k\ge \tfrac{1}{4}-\tfrac{\delta}{2}\}}\in \big[-\tfrac{1}{4}+\tfrac{\delta}{2},\,\tfrac{1}{4}-\tfrac{\delta}{2}\big]
\end{equation*}
and 
\begin{equation*}
	|\tildey_k-y_k|\le \Big|\tfrac{\delta}{2}\cdot\one_{\{y_k\le -\tfrac{1}{4}+\tfrac{\delta}{2}\}}-\tfrac{\delta}{2}\cdot\one_{\{y_k\ge \tfrac{1}{4}-\tfrac{\delta}{2}\}}\Big|\le \delta/2.
\end{equation*}

Define $\tau(x)\coloneqq  x-\lfloor x\rfloor$ for any $x\in\R$. Clearly, $[\tau(\tildey_1),\tau(\tildey_2),\cdots,\tau(\tildey_K)]^\ts \in [0,1]^K$. Then, by Lemma~\ref{lem:dense:decimal}, there exists $w_0\in\R$ such that
\begin{equation*}
	|\tau(w_0a_k)-\tau(\tildey_k)|<\delta/2\quad \tn{for $k=1,2,\cdots,K$,}
\end{equation*}
from which we deduce
\begin{equation*}
	\Big|\tau(w_0a_k)+\lfloor \tildey_k\rfloor- \tildey_k\Big|=\Big|\tau(w_0a_k)- \big(\tildey_k-\lfloor \tildey_k\rfloor\big)\Big|=\big|\tau(w_0a_k)-\tau(\tildey_k)\big|<\delta/2.
\end{equation*}
It follows from 
 $\tildey_k\in [-\tfrac{1}{4}+\tfrac{\delta}{2},\tfrac{1}{4}-\tfrac{\delta}{2}]$ that
\begin{equation*}
    \tau(w_0a_k)+\lfloor \tildey_k\rfloor\in [-\tfrac{1}{4},\tfrac{1}{4}]\quad \tn{ for $k=1,2,\cdots,K$.}
\end{equation*}
Moreover,
\begin{equation*}
	\Big|\tau(w_0a_k)+\lfloor \tildey_k\rfloor-y_k\Big|\le \Big|\tau(w_0a_k)+\lfloor \tildey_k\rfloor-\tildey_k\Big|+ \big|\tildey_k-y_k\big|<\delta/2+\delta/2=\delta
\end{equation*}
for $k=1,2,\cdots,K$. Then, by Equation~\eqref{eq:g:uniformly:cont}, we have
\begin{equation*}
	\Big| g\big(\tau(w_0a_k)+\lfloor \tildey_k\rfloor\big)-g(y_k) \Big|<\varepsilon\quad \tn{for $k=1,2,\cdots,K$.}
\end{equation*}

Recall that \( g \) is periodic with a period of \( 1 \). Thus, we have
\begin{equation*}
	g\big(\tau(w_0a_k)+\lfloor \tildey_k\rfloor\big)=g\big(w_0a_k-\lfloor w_0 a_k\rfloor+\lfloor \tildey_k\rfloor\big)=g(w_0a_k)=\sin(2\pi  w_0a_k)
\end{equation*}
for $k=1,2,\cdots,K$, implying
\begin{equation*}
	\begin{split}
		\big|\sin(2\pi    w_0a_k)- \xi_k\big|
  =\big|\sin(2\pi    w_0a_k)- g(y_k) \big|
  =\Big| g\big(\tau(w_0a_k)+\lfloor \tildey_k\rfloor\big)-g(y_k) \Big|<\varepsilon.
	\end{split}
\end{equation*}
Therefore, by setting $w_1=2\pi w_0\in\R$, we get 
\begin{equation*}
	\Big\| \Big(\sin(w_1a_1),\, \sin(w_1a_2),\, \cdots,\, \sin(w_1a_K)\Big)  -\bmxi    \Big\|_{\ell^\infty}<\varepsilon.
\end{equation*}
Since $\varepsilon>0$ and $\bmxi\in [-1,1]^K$ are arbitrary, the set
\begin{equation*}
	\Big\{\big(\sin(wa_1),\, \sin(wa_2),\, \cdots,\,\sin(wa_K)\Big) : w\in\R
	\Big\}
\end{equation*}
is dense in $[-1,1]^K$. So we finish the proof of Lemma~\ref{lem:dense:sin}.
\end{proof}


\subsection{Proof of Proposition~\ref{prop:approx:ReLU}}
\label{sec:proof:prop:approx:ReLU}

Given any \(\varepsilon\in(0,1)\), we will construct 
$\phi_\alpha\in \nn{\varrho}{2}{1}{\R}{\R}$
and show that
\[
\|\phi_\alpha-\ReLU\|_{\sup([-B,B])}< \varepsilon
\]
for all sufficiently small \(\alpha>0\).
Since $\varrho\in \calS$, there exists  $x_0\in\R$ such that
\begin{equation*}
    \varrho(x) = 
\begin{cases} 
\sin(x) & \text{if } x \ge x_0, \\ 
\sin(x_0) & \text{if } x < x_0.
\end{cases}
\end{equation*}
Clearly, we have
	\begin{equation*}
		\lim_{t\to 0^-}\frac{\varrho (x_0+t)-\varrho (x_0)}{t}=\lim_{t\to 0^-}\frac{\sin(x_0)-\sin(x_0)}{t}=0
	\end{equation*}
    and 
    \begin{equation*}
L\coloneqq \lim_{t\to 0^+}\frac{\varrho (x_0+t)-\varrho (x_0)}{t}
=\lim_{t\to 0^+}\frac{\sin (x_0+t)-\sin (x_0)}{t}=\sin^\prime(x_0)=\cos(x_0).
\end{equation*}
We split the remainder of the proof into two cases: \( L \neq 0 \) and \( L = 0 \).

\mycase{1}{$L\neq 0$.}
First, we consider the case $L\neq 0$. Since
\begin{equation*}
		\lim_{t\to 0^-}\frac{\varrho (x_0+t)-\varrho (x_0)}{t}=0
	\neq
L= \lim_{t\to 0^+}\frac{\varrho (x_0+t)-\varrho (x_0)}{t},
\end{equation*}
there exists a small $\delta_\eps\in (0,1)$ such that
\begin{equation*}
	\Big|\frac{\varrho (x_0+t  )-\varrho (x_0)}{t}-0\Big|<\frac{|L|\eps}{B}\quad \tn{for any $t\in (-\delta_\eps,0)$}
\end{equation*}
and
\begin{equation*}
	\Big|\frac{\varrho (x_0+t  )-\varrho (x_0)}{t }-L\Big|<\frac{|L|\eps}{B}\quad \tn{for any $t\in (0,\delta_\eps)$.}
\end{equation*}
That is,
\begin{equation*}
	\Big|\frac{\varrho (x_0+t  )-\varrho (x_0)}{t}- L \cdot\one_{\{t>0\}} \Big|
	<\frac{|L|\eps}{B}
    \quad \tn{for any $t\in (-\delta_\eps,0)\cup(0,\delta_\eps)$.}
\end{equation*}
Recall that \( \ReLU(x) = x \cdot \one_{\{x > 0\}} \).  
Therefore, the expression \( \frac{\varrho (x_0 + t) - \varrho (x_0)}{t} \cdot \frac{x}{L} \) should provide a good approximation of \( \ReLU{} \). Based on this, we define  
\begin{equation*}
	\phi_{\alpha}(x)\coloneqq\frac{\varrho (x_0+\alpha x)-\varrho (x_0)}{L\alpha}\quad \tn{for any $x\in \R$.}
\end{equation*}
Clearly, $\phi_\alpha(0)=0$ and 
\[\phi_\alpha\in \nn{\varrho}{1}{1}{\R}{\R}\subseteq \nn{\varrho}{2}{1}{\R}{\R}.\]
Moreover, for any $x\in [-B,B]\setminus  \{0\}$ and each $\alpha\in \big(0,\tfrac{\delta_\eps}{B}\big)$, we have $\alpha x\in (-\delta_\eps,0)\cup(0,\delta_\eps)$,	
from which we deduce
\begin{equation*}
	\begin{split}
    \big|\phi_\alpha(x)-\ReLU(x)\big|
   & =\frac{|x|}{|L|}\cdot\Big|\frac{L}{x}\cdot \phi_\alpha(x)-\frac{L}{x}\cdot\ReLU(x)\Big|
\\& =\frac{|x|}{|L|}\cdot\Big|\frac{L}{x}\cdot \frac{\varrho (x_0+\alpha x)-\varrho (x_0)}{L\alpha}-\frac{L}{x}\cdot  x\cdot \one_{\{x>0\}}\Big|
\\& =\frac{|x|}{|L|}\cdot\Big|  \frac{\varrho (x_0+\alpha x)-\varrho (x_0)}{ \alpha  x}-{L}\cdot \one_{\{x>0\}}\Big|
< 
\frac{|x|}{|L|}\cdot \frac{|L|\eps}{B}\le \eps.
	\end{split}
\end{equation*}
Therefore, we can conclude that
	\begin{equation*}
	\phi_\alpha(x)\rightrightarrows \ReLU(x)\quad \tn{as}\   \alpha\to 0^+\quad \tn{for any $x\in [-B,B]$.}
\end{equation*}
That means we finish the proof for the case of $L\neq 0$.

\mycase{2}{$L = 0$.}
Next, we consider the case where \( 0 = L = \cos(x_0) \). This implies that  
\( x_0 = \frac{(2k + 1)\pi}{2} \) for some \( k \in \mathbb{Z} \).  
It is straightforward to verify that \( \varrho \in C^1(\mathbb{R}) \setminus  C^2(\mathbb{R}) \).  
Specifically, we have  
\begin{equation*}
    \lim_{t \to 0^-} \frac{\varrho^\prime(x_0 + t) - \varrho^\prime(x_0)}{t}
    =\lim_{t \to 0^-} \frac{0-0}{t}= 0.
\end{equation*}
and
\begin{equation*}
\begin{split}
        \tildeL &\coloneqq\lim_{t \to 0^+} \frac{\varrho^\prime(x_0 + t) - \varrho^\prime(x_0)}{t}
    =\lim_{t \to 0^+} \frac{\cos(x_0 + t) - 0}{t}
    =\lim_{t \to 0^+} \frac{\cos(x_0 + t) - \cos\big(\tfrac{(2k + 1)\pi}{2}\big)}{t}
   \\& =\lim_{t \to 0^+} \frac{\cos(x_0 + t) - \cos(x_0)}{t}=-\sin(x_0)=-\sin\big(\tfrac{(2k + 1)\pi}{2}\big)\in \{-1,1\}.
\end{split}
\end{equation*}
Then there exists a small $\delta_\eps\in (0,1)$ such that
\begin{equation*}
	\Big|\frac{\varrho^\prime  (x_0+t  )-\varrho^\prime  (x_0)}{t}- \tildeL \cdot\one_{\{t>0\}} \Big|
	<\frac{|\tildeL|\eps}{ 2B }
    \quad \tn{for any $t\in (-\delta_\eps,0)\cup(0,\delta_\eps)$.}
\end{equation*}
We define  
\begin{equation*}
	\tildephi_{\alpha}(x)\coloneqq\frac{\varrho^\prime  (x_0+\alpha x)-\varrho^\prime  (x_0)}{\tildeL\alpha}\quad \tn{for any $x\in \R$.}
\end{equation*}
Clearly, $\tildephi_\alpha(0)=0$. Moreover, for any $x\in [-B,B]\setminus  \{0\}$ and each $\alpha\in \big(0,\tfrac{\delta_\eps}{B}\big)$, we have $\alpha x\in (-\delta_\eps,0)\cup(0,\delta_\eps)$,	
from which we deduce
\begin{equation*}
	\begin{split}
    \big|\tildephi_\alpha(x)-\ReLU(x)\big|
   & =\frac{|x|}{|\tildeL|}\cdot\Big|\frac{\tildeL}{x}\cdot \tildephi_\alpha(x)-\frac{\tildeL}{x}\cdot\ReLU(x)\Big|
\\& =\frac{|x|}{|\tildeL|}\cdot\Big|\frac{\tildeL}{x}\cdot \frac{\varrho^\prime  (x_0+\alpha x)-\varrho^\prime  (x_0)}{\tildeL\alpha}-\frac{\tildeL}{x}\cdot  x\cdot \one_{\{x>0\}}\Big|
\\& =\frac{|x|}{|\tildeL|}\cdot\Big|  \frac{\varrho^\prime  (x_0+\alpha x)-\varrho^\prime  (x_0)}{ \alpha  x}-{\tildeL}\cdot \one_{\{x>0\}}\Big|
<
\frac{|x|}{|\tildeL|}\cdot \frac{|\tildeL|\eps}{2B}\le \frac{\eps}{2}.
	\end{split}
\end{equation*}
That is, for each $\alpha\in (0,\frac{\delta_\eps}{B})$, we have 
\begin{equation}
\label{eq:tildephieps:minus:ReLU}
	\begin{split}
    \big|\tildephi_\alpha(x)-\ReLU(x)\big|
  < \frac{\eps}{2}\quad \tn{for any $x\in [-B, B]$.}
	\end{split}
\end{equation}

For each $\eta\in (0,1)$, we define
\begin{equation*}
	h_\eta(z)\coloneqq\frac{\varrho (z+\eta)-\varrho (z) }{\eta}\quad \tn{for any $z\in\R$.}
\end{equation*}
Recall that  $\varrho\in C^1(\R)\setminus  C^{2}(\R)$.
By Lagrange's mean value theorem, for any $z\in\R$, there exists
$\xi\in (z,z+\eta)$  such that 
\begin{equation*}
	h_\eta(z)= \frac{\varrho (z+\eta)-\varrho (z) }{\eta}=\varrho^\prime(\xi),
\end{equation*}
from which we deduce
\begin{equation*}
	h_\eta(z)= \frac{\varrho (z+\eta)-\varrho (z) }{\eta}=\varrho^\prime(\xi)
	\rightrightarrows \varrho^{\prime}(z)\quad\tn{as}\  \eta\to0\quad \tn{for any $z\in [x_0-1,x_0+1]$.}
\end{equation*}
Then there exists $\eta_\alpha>0$ such that
\begin{equation*}
	\big|	h_{\eta_\alpha}(z) - \varrho^{\prime}(z)\big|<\frac{|\tildeL|\alpha^2}{2}\quad \tn{for any $z\in [x_0-1,x_0+1]$.}
\end{equation*}				
Next, we can define the desired $\phi_\alpha$ via
\begin{equation*}
	\begin{split}
		\phi_{\alpha}(x)\coloneqq
		 \frac{h_{\eta_\alpha}(x_0+\alpha x)-\varrho^{\prime}(x_0)}{\tildeL\alpha}
         &=\frac{ \frac{\varrho(x_0+\alpha x+ \eta_\alpha) -\varrho(x_0+\alpha x)}{\eta_\alpha} -\varrho^{\prime}(x_0)}{\tildeL\alpha}
		\\& =\frac{  \varrho(x_0+\alpha x+ \eta_\alpha) -\varrho(x_0+\alpha x) -{\eta_\alpha}\varrho^{\prime}(x_0)}{{\eta_\alpha}\tildeL\alpha}
	\end{split}
\end{equation*}	
for any $x\in \R$. Clearly, $\phi_\alpha\in \nn{\varrho}{2}{1}{\R}{\R}$. Moreover, for each $\alpha\in \big(0,\min\{\tfrac{\delta_\eps}{B},\eps\}\big)\subseteq \big(0,\tfrac{1}{B}\big)$ and any $x\in [-B,B]$, we have $x_0+\alpha  x \in [x_0-1,x_0+1]$, implying
\begin{equation*}
	\begin{split}
		\big|\phi_{\alpha}(x)-\tildephi_{\alpha}(x)\big|
        &=  \bigg|\frac{h_{\eta_\alpha}(x_0+\alpha x)-\varrho^{\prime}(x_0)}{\tildeL\alpha}
         -\frac{\varrho^{\prime}(x_0+\alpha x)-\varrho^{\prime}(x_0)}{\tildeL\alpha} \bigg|\\	
		&\le  \frac{1}{\alpha}\cdot\Big|	h_{\eta_\alpha}(x_0+ \alpha x)-\varrho^{\prime}(x_0+\alpha x)\Big|
		< \frac{1}{|\tildeL|\alpha}\cdot\frac{|\tildeL|\alpha^2}{2}=\frac{\alpha}{2}\le \frac{\eps}{2}.
	\end{split}
\end{equation*}
Combining this with Equation~\eqref{eq:tildephieps:minus:ReLU}, we can conclude that
\begin{equation*}
	\big|\phi_\alpha(x)-\ReLU(x)\big|\le \big|\phi_\alpha(x)-\tildephi_\alpha(x)\big|+\big|\tildephi_\alpha(x)-\ReLU(x)\big| < \frac{\eps}{2}+ \frac{\eps}{2}=\eps,
\end{equation*}
for each $\alpha\in \big(0,\min\{\tfrac{\delta_\eps}{B},\eps\}\big)$ and any $x\in [-B,B]$. That means
	\begin{equation*}
	\phi_\alpha(x)\rightrightarrows \ReLU(x)\quad \tn{as}\   \alpha\to 0^+\quad \tn{for any $x\in [-B,B]$.}
\end{equation*}
Thus, we have completed the proof for the case \( L = 0 \), thereby concluding the proof of Proposition~\ref{prop:approx:ReLU}.

\end{document}